\icmltitlerunning{Delayed Reinforcement Learning by Imitation}
\begin{document}

\twocolumn[
\icmltitle{Delayed Reinforcement Learning by Imitation}

% It is OKAY to include author information, even for blind
% submissions: the style file will automatically remove it for you
% unless you've provided the [accepted] option to the icml2021
% package.

% List of affiliations: The first argument should be a (short)
% identifier you will use later to specify author affiliations
% Academic affiliations should list Department, University, City, Region, Country
% Industry affiliations should list Company, City, Region, Country

% You can specify symbols, otherwise they are numbered in order.
% Ideally, you should not use this facility. Affiliations will be numbered
% in order of appearance and this is the preferred way.
\icmlsetsymbol{equal}{*}

\begin{icmlauthorlist}
\icmlauthor{Pierre Liotet}{equal,poli}
\icmlauthor{Davide Maran}{equal,poli}
\icmlauthor{Lorenzo Bisi}{poli}
\icmlauthor{Marcello Restelli}{poli}
\end{icmlauthorlist}

\icmlaffiliation{poli}{Politecnico di Milano, Milan, Italy}

\icmlcorrespondingauthor{Pierre Liotet}{pierre.liotet@polimi.it}

% You may provide any keywords that you
% find helpful for describing your paper; these are used to populate
% the "keywords" metadata in the PDF but will not be shown in the document
% \icmlkeywords{Machine Learning, ICML}

\vskip 0.3in
]

% this must go after the closing bracket ] following \twocolumn[ ...

% This command actually creates the footnote in the first column
% listing the affiliations and the copyright notice.
% The command takes one argument, which is text to display at the start of the footnote.
% The \icmlEqualContribution command is standard text for equal contribution.
% Remove it (just {}) if you do not need this facility.

%\printAffiliationsAndNotice{}  % leave blank if no need to mention equal contribution
\printAffiliationsAndNotice{\icmlEqualContribution} % otherwise use the standard text.

\begin{abstract}
When the agent's observations or interactions are delayed, classic reinforcement learning tools usually fail. 
In this paper, we propose a simple yet new and efficient solution to this problem.
We assume that, in the undelayed environment, an efficient policy is known or can be easily learned, but the task may suffer from delays in practice and we thus want to take them into account.
We present a novel algorithm, Delayed Imitation with Dataset Aggregation (DIDA), which builds upon imitation learning methods to learn how to act in a delayed environment from undelayed demonstrations.
We provide a theoretical analysis of the approach that will guide the practical design of DIDA. 
These results are also of general interest in the delayed reinforcement learning literature by providing bounds on the performance between delayed and undelayed tasks, under smoothness conditions.
We show empirically that DIDA obtains high performances with a remarkable sample efficiency on a variety of tasks, including robotic locomotion, classic control, and trading.
\end{abstract}

\section{Introduction}
In reinforcement learning (RL), it is generally assumed that the effect of an action over the environment is known instantaneously to the agent. 
However, in the presence of delays, this classic setting is challenged. The effect of a delayed action execution or state observation, if not accounted for, can have perilous effects in practice \citep{dulac2019challenges}. It can induce a performance loss in trading \citep{wilcox1993effect}, create instability in dynamic systems \citep{dugard1998stability, gu2003survey}, be detrimental to the training of real-world robots
\citep{mahmood2018setting}.
To further grasp the importance of delay, one may notice that most traffic laws around the world base safety distances on drivers' ``reaction time'', which is partly due to the perception of the event and partly to the implementation of the action \cite{drozdziel2020drivers}.
% This last remark is interesting since the delay in the perception of the event and in the implementation of the action correspond in RL to state observation and action execution delays respectively. 
These two types of delay have an exact correspondence in RL, where they are dubbed as state observation and action execution delays. 
There are many ways in which delays can further vary.
They may be anonymous (i.e., not known to the agent), constant or stochastic, integer or non-integer. 
In this work, as in most of the literature, we focus on constant non-anonymous delays in the action execution or, equivalently \cite{katsikopoulos2003markov}, in the state observation.

Previous research can be divided into three main directions. In \emph{memoryless} approaches the agent's policy depends on the last observed state \citep{schuitema2010control}. \emph{Augmented} approaches try to cast the problem into a Markov decision process (MDP) by building policies based on an augmented state, composed of the last observed state and on the actions that the agent knows it has taken since then \citep{bouteiller2020reinforcement}. 
A last line of research, which we refer to as \emph{model-based} approach, considers using as a policy input any statistics about the current unknown state that can be computed from the augmented state~\citep{walsh2009learning,firoiu2018human,chen2021delay,agarwal2021blind,derman2021acting,liotet2021learning}. 
The goal of this approach is to avoid the curse of dimensionality posed by the augmented state~\citep{walsh2009learning,bouteiller2020reinforcement}. 
We formalize the problem of delays in \Cref{sec:delays} and give a more in-depth description of the literature in \Cref{sec:related}.
%Blindly applying standard RL algorithm to a delayed environment can yield sub-optimal performances \cite{}.

% Current state of the art in the literature on delayed RL aim at gathering information on the state over which the action will be applied in order to properly choose it.

% The presence of the delay makes non trivial problems arise, and it forces to adopt \textit{ad hoc} solutions, since standard RL algorithm can achieve arbitrary sub-optimal performances in this situation. In particular, currently state of the art algorithms are based on predicting the state in which the next action is going to be executed before choosing it. 
We adopt the simple yet practically effective idea of learning a policy in a delayed environment by applying \emph{imitation learning} to a policy learned in the undelayed environment, as described in Section~\ref{sec:imitation}. 
Not any imitation learning algorithm would work to this aim and DA\textsc{gger}~\citep{ross2011reduction} is particularly well suited for being one of the few algorithms to compute the loss under the learner's own distribution~\cite{osa2018algorithmic}, which is important due to the shift in distribution induced by the delay.
% At first sight, our approach does not seem to fit in any of the aforementioned lines of research, however, we will see that it falls between the last line of search.
% While not apparent at first sight, our algorithm presents a connection with model-based approaches.
We provide a theoretical analysis (Section~\ref{sec:theoretical}) which, under \emph{smoothness} conditions over the MDP, bounds the performance lost by introducing delays.
Finally, we provide an extensive experimental analysis (Section~\ref{sec:experiments}) where our algorithm is compared with state-of-the-art approaches on a variety of delayed problems and demonstrates great performances and sample efficiency.

% After providing some preliminary material (Section~\cref{sec:prelim}), we formalize the problem of delays (Section~\cref{sec:delays}). 
% A description of related works (Section~\cref{sec:related}) will precede  the explanation our approach (Section~\cref{sec:imitation}) and its theoretical analysis (\cref{sec:theoretical}) with bounds on its performances.
% Finally, we perform an extensive experimental analysis \cref{experiments}, where our algorithm will be compared with both standard approaches and the state of the art of delayed RL, over environments coming from OpenAI \texttt{gym} and \texttt{mujoco} environments.

\section{Preliminaries}
\label{sec:prelim}

\textbf{Reinforcement Learning}
~~A discrete-time discounted Markov Decision Process (MDP)~\citep{puterman2014markov} is a 6-tuple $\mathcal{M} = (\Ss,\As, p, R, \gamma, \mu)$ where $\Ss $ and $\As$ are measurable sets of states and actions respectively, $p(s'\vert s,a)$ is the probability to transition to a state $s'$ departing from state $s$ and taking action $a$, $R(s,a)$ is a random variable defining the reward collected during such a transition. We denote by $r(s,a)$ its expected value.
Finally, $\mu$ is the initial state distribution.  
The agent's goal is to find a policy $\pi$, which assigns probabilities to the actions given a state, to maximize the \emph{expected discounted return} with discount factor $\gamma\in [0,1)$, defined as\footnote{In the sequel, we will tacit that $s_{t+1}\sim p(\cdot\vert s_t,a_t)$.}
\begin{align}
\label{eq:rl_obj}
    J(\pi) = \E_{\substack{s_{t+1}\sim p(\cdot\vert s_t,a_t)\\a_t\sim\pi(\cdot\vert s_t)\\s_0 \sim \mu(\cdot)}}\left[\sum_{t=0}^H \gamma^t R(s_t,a_t)\right].
\end{align}
We consider an infinite horizon setting, where $H=\infty$. Note that $\frac{1}{1-\gamma}$ can be seen as the effective horizon in this case.
We restrict the set of policies to the \textit{stationary Markovian} policies, $\Pi$, as it contains the optimal one \citep{puterman2014markov}.
RL analysis frequently introduces the concept of state-action value function, which quantifies the expected return obtained under some policy, starting from a given state and fixing the first action. 
Formally, this function is defined as 
\begin{align}
    % Q^\pi(s,a) = \E_{\substack{s_{t+1}\sim p(\cdot\vert s_t,a_t)\\a_t\sim\pi(\cdot\vert s_t)}}
    Q^\pi(s,a) = \E_{\substack{a_t\sim\pi(\cdot\vert s_t)}}
    \left[\sum_{t=0}^H \gamma^t R(s_t,a_t)\bigg | \substack{s_0=s, \\a_0=a}\right].\label{eq:action_state_val}
\end{align}
Similarly we define the state value function as $V^\pi(s)=\E_{a\sim\pi(\cdot\vert s)}[Q^\pi(s,a)]$.
Lastly, we consider the discounted visited state distribution under some policy $\pi$, starting from any initial distribution $\rho$, for some state $s\in\Ss$ as 
\begin{align*}
    d_{\rho}^\pi(s) = (1-\gamma)\sum_{t=0}^\infty \gamma^t\Proba(s_t=s\vert \pi, \rho).
\end{align*}

\textbf{Lipschitz MDPs}~~We now introduce notions that will allow us to characterize the smoothness of an MDP.
Let $L>0$ and let $(X,\dist_X)$ and $(Y,\dist_Y)$ be two metric spaces. A function $f:X \rightarrow Y$ is said to be $L$-Lipschitz continuous ($L$-LC) if, $\forall x,x'\in X$, $\dist_Y(f(x),f(x'))\leq L \dist_X(x,x')$.
We denote the Lipschitz semi-norm of a function $f$ as $\left\Vert f\right\Vert_L = \sup_{x,x'\in X, x\neq x'} \frac{\dist_Y(f(x),f(x'))}{\dist_X(x,x')}$. In real space $X\subset \mathbb{R}^n$, we use as distance the Euclidean one, i.e., $\dist_{X}(x,x')=\lVert x-x'\rVert_2$.
As for the probabilities, we use the $L_1$-Wasserstein distance, which for some probabilities $\mu,\nu$ with sample space $\Omega$ is \citep{villani2009optimal}:
\begin{align*}
    \wass(\mu , \nu)=\sup_{\left\Vert f\right\Vert_L\leq 1} \left\vert \int_{\Omega} f(\omega)(\mu-\nu) (d\omega) \right\vert.
\end{align*}
We now use those concepts to quantify the smoothness of an MDP.
\begin{defi}[Lipschitz MDP]
An MDP is said to be $(L_P,L_r)$-LC if, for all $(s,a),(s',a')\in\Ss\times\As$
\begin{align*}
    & \wass(p(\cdot\vert s,a), p(\cdot\vert s',a'))\leq L_P \left( \dist_{\Ss}(s,s')  + \dist_{\As}(a,a')\right),
    \\
    & \left\vert r(s,a)-r(s',a') \right\vert \leq L_r \left( \dist_{\Ss}(s,s') + \dist_{\As}(a,a')\right).
\end{align*}
\end{defi}
\begin{defi}[Lipschitz policy]
A stationary Markovian policy $\pi$ is said to be $L_\pi$-LC if, $\forall s,s'\in\Ss$ 
\begin{align*}
    \wass(\pi(\cdot\vert s), \pi(\cdot\vert s'))\leq L_\pi \dist_{\Ss}(s,s').
\end{align*}
\end{defi}

These concepts provide useful tools for theoretical analysis and have been extensively used in the field of RL \citep{rachelson2010locality}. Under the assumption of $(L_P,L_r)$-LC MDP and $L_\pi$-LC policy $\pi$, provided that $\gamma L_P(1+L_\pi)\le 1$, then $Q^\pi$ is $L_Q$-LC with $L_Q=\frac{L_r}{1-\gamma L_P(1+L_\pi)}$ \citep[Theorem 1]{rachelson2010locality}. 
This property can be useful to prove the Lipschitzness of the $Q$ function.
% We will make extensive use of this upper bound in this work.

Additionally, in the case of delays, the smoothness of trajectories (sequence of consecutive states and actions) is a key factor.
Intuitively, smoother trajectories make the current unknown state more predictable.
Therefore, we consider the concept of \emph{time-Lipschitzness}, introduced by \citet{metelli2020control}. 

\begin{defi}[Time-Lipschitz MDP]
An MDP is said to be $L_T$-Time Lipschitz Continuous ($L_T$-TLC) if, $\forall s,a\in \Ss\times\As$
\begin{align*}
    \wass(p(\cdot|s,a), \delta_s)\le L_T,
\end{align*}
where $\delta_s$ is the Dirac distribution with mass on $s$.
\end{defi}

\section{Problem definition}
\label{sec:delays}
% For clarity, we begin with a general definition of delays in RL and then prune it to the sub-problem that we will consider here. 
A delayed MDP (DMDP) stems from an MDP endowed with a sequence of variables $(\delay_t)_{t\in\mathbb{N}}$ corresponding to the delay at each step of the sequential process. The delay can affect the state observation, which implies that the agent has no access to the current state but only to a state visited $\delay_t$ steps before.
Affecting the action execution, the delay implies that the agent must select an action that will be executed $\delay_t$ steps from now. 
Lastly, reward collection delays may raise credit assignment issues and are outside the scope of this paper.  In any case, the DMDP violates the Markov assumption since the next observed state-reward couple does not depend only on the currently observable state and the chosen action.
In the literature, the delay is usually assumed to be a Markovian process, that is $\delay_t\sim P(\cdot\vert \delay_{t-1},s_{t-1},a_{t-1})$. Note that this definition includes \textit{state dependent delays} when $\delay_t\sim P(\cdot\vert s_{t-1})$, \textit{Markov chain delays} when $\delay_t\sim P(\cdot\vert \delay_{t-1})$ and \textit{stochastic delays} when $(\delay_t)_{t\in\mathbb{N}}$ are i.i.d. 
% Clearly, the delay is real and non-negative while, as we will see, it is not always true that it is an integer.
% State observation delays  Action execution delays imply that the agent must select an action that will be executed $d_t$ steps from now. Finally, reward collection delay may be more complex, they imply that the agent may not correctly be able to assign a reward to the transition which originated it.
%Note that this definition includes \textit{non-integer delays} as a special case.

In this work we consider \textit{constant delays}, denoting the delay with the symbol $\delay$. When the delay is constant, the action execution delay and the state observation one are equivalent~\citep{katsikopoulos2003markov}, thus, it is sufficient to consider only the state observation delay. Furthermore, following \citet{katsikopoulos2003markov}, we consider a reward collection delay equal to the state observation delay so as not to collect a reward on a yet unobserved state, which could result in some form of partial state information.
Finally, we assume that the delay is known to the agent, placing ourselves in the \textit{non-anonymous} delay framework.

Within this reduced framework, it is possible to introduce an important concept of DMDPs, the \textit{augmented state}. Given the last observed state $s$ and the sequence of actions $(a_1,\dots,a_d)$ which have been taken since then, but whose outcome has not yet been observed, the agent can construct an augmented state, i.e., a new state in $\Xs=\Ss\times\As^{\delay}$ which casts the DMDP into an MDP \citep{bertsekas1987dynamic,altman1992closed}.
Said alternatively, the augmented state contains all the information the agent needs to learn the optimal policy in the DMDP. 
%It implies that it also contains all the information one can possibly gather about the current unknown state. 
From the augmented state, we can gather information on the current state.
This information can be summarized by the \emph{belief}, the probability distribution of the current unknown state $s$ given the augmented state $x$ as $b(s\vert x)$. 
More explicitly, given $x=(s_1,a_1,\dots a_\delay)$, one has 
\begin{align*}
    b(s\vert x)=\int_{S^{\delay-1}}p(s|s_\delay,a_\delay)\prod_{i=2}^\delay p(s_i|s_{i-1},a_{i-1})ds_{i}.
\end{align*}
The delayed reward collected for playing action $a$ on $x$ is given by $\widetilde r(x,a)=\E_{\substack{s\sim b(\cdot\vert x)}}\left[r(s,a)\right]$.
To complete the DMDP framework, we define $\mudel$, the initial augmented state distribution. It samples the state contained in the augmented state under $\mu$ and samples the first action sequence under a distribution whose choice depends on the environment. We consider a uniform distribution on $\As$.
% As in general reinforcement learning, the goal is to find a policy that maximizes the expected return. The only difference is that due to the presence of the delay, this problem is not solved by just considering a policy as a mapping from the current state to the actions, but instead we have to start form the augmented state.
% Formally, we want to maximize
% \begin{align}
% \label{eq:drl_obj}
%     \widetilde J(\pidel) = \E_{\substack{s_{t+1}\sim p(\cdot\vert s_t,a_t)\\a_t\sim{\pidel}(\cdot\vert x_t)\\s_0 \sim \mudel}}\left[\sum_{t=d}^H \gamma^t R(s_t,a_t)\right].
% \end{align}
% where, at any step $x_t=(s_{t-d},a_{t-d},\dots a_{t-1})$ is our augmented state.
\section{Related works}
\label{sec:related}

%In a deterministic MDP, a delayed agent could match the undelayed optimal return \citep{walsh2009learning}. 
The first proposed solution to the problem of delays is to use regular RL algorithms on the augmented-state MDP. 
Although the optimal delayed policy could potentially be obtained, this approach is affected by the exponential growth of the augmented state space, which becomes $|\Ss||\As|^\delay$ \citep{walsh2009learning} and is a source of the curse of dimensionality that is harmful in practice. 
Nonetheless, recent work by \citet{bouteiller2020reinforcement} revisits this approach and propose a clever way to resample trajectories without interacting with the environment by populating the augmented state with actions from a different policy, greatly improving the sample efficiency. They propose an algorithm, Delay-Correcting Actor-Critic (DCAC), which builds on SAC \citep{haarnoja2018soft} using the aforementioned resampling idea. DCAC has great experimental results and is sample efficient by design.

A second line of research focuses on memoryless policies, inspired by the partially observable MDP literature. It ignores the action queue to act according to the last observed state only. 
However, the delay can still be taken into account as in dSARSA \citep{schuitema2010control}, a modified version of SARSA \citep{sutton2018reinforcement} which accounts for the delay during its update. 
Indeed, SARSA would credit the reward collected for applying action $a$ on the augmented state $x$, containing the last observed state $s$, to the pair $(s,a)$. Instead, dSARSA proposes to credit $(s,a_1)$, where $a_1$ is the oldest action stored in $x$, the action actually applied on $s$. Despite being memoryless, dSARSA achieves great performances in practice.

Finally, the most common line of research, the model-based approach, relies on computing statistics on the current state which are then used to select an action. The name model-based comes from the fact that those solutions usually learn a model of the environment to predict the current state, by simulating the effect of the actions stored in the augmented state on the last observed state. 
\citet{walsh2009learning} learn the transition as a deterministic mapping so as to predict the most probable state, before selecting actions based on it.
\citet{derman2021acting} and \citet{firoiu2018human} propose a similar approach by learning the transitions with feed-forward and recurrent neural networks, respectively.
% and subsequently learn an undelayed policy for this state,
\citet{agarwal2021blind} estimate the transition probabilities and the undelayed $Q$ function to select the action that gives the maximum $Q$ under the estimated distribution of the current state. 
\citet{chen2021delay} use a particle-based approach to produce potential outcomes for the current state and, interestingly, extend the predictions to collect better value estimates.
\citet{liotet2021learning} propose D-TRPO which learns a vectorial encoding of the belief of the current state itself which is then used as an input to the policy, the latter being trained with TRPO \citep{schulman2015trust}. The authors also propose another algorithm, L2-TRPO which, instead of the belief, learns the expected current state by minimizing the predicted and the real state under the $l^2$-norm.
% Even it this strategies seems to gather all the relevant information of the delayed MDP, \citep{liotet2021learning} proved that any approach based only on estimating the belief, or its mean, could lead to suboptimal results.

While most of these works assume that the delay is fixed, some consider the problem of stochastic delays \citep{bouteiller2020reinforcement,derman2021acting,agarwal2021blind}. Only one of them considers the case of non-integer delays \citep{schuitema2010control}.

% In this paper, we propose a different framework, which is based on imitation. Even if we do not predict the current state explicitly, the fact that we copy the action of an undelayed policy makes our algorithm fall in the category of belief based.
% As stated, we focus on constant delays, while we do not assume the delay to be integer, as we will see in the Forex experiment.

%\textbf{Davide:} \citep{walsh2009learning} also stated that there is a link between the regularity of the MDP, and the possibility of solving with the delay. In fact, they prove that if the dynamics $p(\cdot|s,a)$ is given by a deterministic function $T(s,a)$ plus a noise that is bounded by $\Delta$ in $\infty-$norm, we can approximate the optimal delayed state value function $V^*([s_1,a_1,\dots a_d])$  with the one of the corresponding deterministic MDP, (i.e. only using $T(s,a)$). The error of this approximation is bounded by $\frac{2L_V\gamma \Delta}{1-\gamma}$, where $L_V$ is the Lipschitz constant of $V^*$. However, this bound is only reffered to the planning problem, and does not provide any policy for this case.
\section{Imitation Learning for Delays}
\label{sec:imitation}
Our proposed approach is motivated by the limitations of two lines of research from the literature. 
Augmented approaches are affected by the curse of dimensionality that hinders the learning process, while model-based approaches require carefully designed models of the state transitions and usually involve a computational burden.
Instead, we propose to learn a mapping from augmented state directly to undelayed expert actions, facilitating the learning process as opposed to augmented approaches and by removing explicit approximation of transitions as opposed to model-based approaches. 
Our approach, however, implies that learning is split into two sub-problems: learning an expert undelayed policy and then imitating this policy in a DMDP.

\subsection{Imitation Learning}
It is usually easier to learn a behavior from demonstrations than learning from scratch using standard RL techniques. Imitation learning aims at learning a policy by mimicking the actions of an expert,
%solving \Cref{eq:rl_obj} in this way,
bridging the gap between RL and \textit{supervised learning}. Obviously, it requires that one can collect examples of an expert's behavior to learn from. 
For an expert policy $\pi_E$, most imitation learning approaches aim at finding a policy $\pi$ that minimizes $\E_{s\sim d_\mu^{\pi_E}}[l(s,\pi)]$~\citep{ross2011reduction} where $l(s,\pi)$ is a loss designed to make $\pi$ closer to $\pi_E$. Note that this objective is defined under the state distribution induced by $\pi_E$.
This can easily be problematic as, whenever the learner makes an error, it could end up in a state where its knowledge of the expert's behavior is poor and therefore errors could accumulate.  
% We refer to this discrepancy in state distribution between the learner and the expert as the \textit{covariate shift}.
Indeed, it has been shown that the error made by the learner potentially propagates as the squared effective horizon as shown in \citep[Theorem 1]{xu2020error}.
This is consistent with other bounds found in the literature depending on $H^2$ in the finite horizon setting \citep[Theorem 2.1]{ross2010efficient}.

One successful solution to this problem is \textit{dataset aggregation} as proposed by \citet{ross2011reduction} in their DA\textsc{gger} algorithm. The idea is to sample new data under the learned policy and query the expert on those new samples in order to match the learner's state distribution.  DA\textsc{gger} recursively builds a dataset $\mathcal{D}$ by sampling trajectories under policy $\pi_i = \beta_i \pi_E + (1-\beta_i) \hat{\pi}_i$ obtained from a $\beta_i$-weighted mixture of the expert policy and the previously imitated policy $\hat{\pi}_i$. 
One then queries the expert's policy on the states encountered in these trajectories and adds those tuples $(s,\pi_E(s))$ to $\mathcal{D}$. Finally, a new imitated policy $\hat{\pi}_{i+1}$ is trained on $\mathcal{D}$. 
The sequence $(\beta_i)_{i\in[\![1,N]\!]}$ is such that $\beta_1=1$, so as to sample initially only from $\pi_E$ and $\beta_N=0$ to sample only from the imitated policy in the end. 
% In order to make DAGGER succeed, the only requirement is that $\frac{1}{N}\sum_{i=1}^{N} \beta_i \xrightarrow[N\rightarrow \infty]{}0$.

\subsection{Duality of Trajectories}
Once sampled, the trajectories, either from a DMDP or its underlying MDP, can be interpreted in both processes when the delay is an integer number of steps. 
In a DMDP, the current state will eventually be observed by a delayed agent.
In an MDP, the trajectories can be re-organized to simulate the effect of a delay. 
In particular, this means that one can collect trajectories with an undelayed environment and sample either from an undelayed policy or a delayed policy (by creating a synthetic augmented state). 
This is exactly what is required to adapt DA\textsc{gger} to imitate an undelayed expert with a delayed learner.

\begin{algorithm}[t] 
\caption{Delayed Imitation with DA\textsc{gger} (DIDA)}\label{algo:dida}
\textbf{Inputs}
undelayed environment $\mathcal E$, undelayed expert $\pi_E$, $\beta$ routine, number of steps $N$, empty dataset $\mathcal{D}$.\\
\textbf{Outputs}: delayed policy $\pi_I$
\begin{algorithmic}[1]
    \FOR{$\beta_i$ in $\beta$-routine}%\{$1,\dots,N$\}}
        %\STATE Initialize action queue $ \underline a\leftarrow \emptyset$
        \FOR{$j$ in $\{1,\dots,N\}$}
            \IF{New episode}
                \STATE Initialize state buffer $( s_{1},s_{2}, \dots, s_{\delay})$ \\and action buffer $( a_{1},a_{2} \dots, a_{\delay-1})$
            \ENDIF
            \STATE Sample $a_E\sim\pi_E(\cdot\vert {{s}}_{\delay})$, set $a=a_E$ \label{ope:undelayed_sample}
            \IF{Random $u \sim U([0,1])\geq \beta_i$}
            %     \STATE Apply $a_E$ in $\mathcal{E}$ 
            %     \STATE $a \leftarrow a_E$
            % \ELSE
                \STATE Overwrite ${ a \sim\pi_I(\cdot\vert [{s}_{1},a_{1}, \dots, a_{\delay-1}])}$ 
                % \STATE $a \leftarrow a_I$
            \ENDIF
            \STATE Aggregate dataset:
            \\$\qquad \mathcal{D}\leftarrow \mathcal{D} \cup ([{s}_{1}, a_{1}, \dots, a_{\delay-1}],a_E)$
            \STATE Apply $a$ in $\mathcal E$ and get new state $s$ 
            % \STATE Update $\underline s$ with $s$
            \STATE Update buffers:
            \\
            $\qquad ({s}_1,\dots,{s}_\delay) \leftarrow ({s}_2,\dots,{s}_\delay,s)$
            \\
            $\qquad  ({a}_1,\dots,{a}_{\delay-1}) \leftarrow ({a}_2,\dots,{a}_{\delay-1},a)$
            \ENDFOR
        \STATE Train $\pi_I$ on $\mathcal{D}$ 
    \ENDFOR
\end{algorithmic}
\end{algorithm} 

\subsection{Imitating an Undelayed Policy}

We follow the learning scheme of DA\textsc{gger} with the slight difference that, if the expert is queried, then the current state is fed to $\pi_E$ while if the imitator policy is queried, an augmented state is built from the past samples, considering the state $\delay$-steps before the current state and the sequence of actions taken since then. This implies that a buffer of the latest states and actions has to be built. 
% In practice, it can be handled inside the dataset $\mathcal{D}$.
We present our approach, which we call Delayed Imitation with DA\textsc{gger} (DIDA), in \Cref{algo:dida}.
In practice there is no need to store each augmented state in the dataset $\mathcal{D}$ since most of the actions contained inside one are also contained in others. Therefore, only trajectories of state and action can be stored, from which augmented states are recreated during the training of $\pi_I$.

\textit{}{What will the policy learned by DIDA be in practice?}~Given an augmented state $x$, DIDA learns to replicate the action taken by the expert on the current state $s$, unknown to the agent. However, the same augmented state can lead to different current states, which is summarized in the belief $b(s\vert x)$. Therefore, DIDA learns the following policy
\begin{align}
    \pidida(a\vert x) = \int_{\Ss} b(s\vert x) \pi_E (a\vert s) ds.\label{eq:belief_pol}
\end{align}
The learned policy is therefore similar to the policies from model-based approaches, and, for this reason, may yield sub-optimal policies in some MDPs \citep[Proposition VI.1.]{liotet2021learning}.
In practice, the class of functions of the imitated policy $\pi_I$ and the loss chosen for training in step 15 of \Cref{algo:dida} may slightly modify the policy learned by DIDA. For instance, a deterministic $\pi_I$ would naturally forbid to learn the distribution given in \Cref{eq:belief_pol}.
This is discussed in \Cref{app:dida_policy}.

\subsection{Extension to non integer delays}
\label{subsec:non_int_delay}
We now suppose that the delay is non-integer, yet still constant. For simplicity, we assume $\delay\in (0,1)$ but the general case follows from similar considerations. We consider a $\delay$-delay in the action execution (the case of state observation is similar).

DMDP with non-integer delays can be viewed as the result of two interleaved MDPs, $\mathcal{M}$ with time indexes $t,t+1,\dots$ and $\mathcal{M}_\delay$ with indexes $t+\delay,t+\delay+1,\dots$. Those two discrete MDPs stem from a single continuous process, of which we observe only some fixed time steps, similarly to \citep{sutton1999between}. They share the same transition and reward functions.
A delayed agent would see states from $\mathcal{M}$ while executing actions on states of $\mathcal{M}_\delta$. 
In practice, the agent taking action $a_t$ seeing state $s_t$ would collect a reward $r(s_{t+\delay},a_t)$.
The transition probabilities are also affected. 
We define $b_\delay(s_{t+\delay}\vert s_t,a)$ the probability of reaching $s_{t+\delay}$ from $s_t$ when action $a$ is applied during time $\delay$ and $b_{1-\delay}(s_{t}\vert s_{t-1+\delay},a)$ the probability of reaching $s_{t+\delay}$ from $s_t$ when action $a$ is applied during time $1-\delay$.
To make the definition consistent with the regular MDP, those probabilities must satisfy that, for all $(s,a,s')\in\Ss\times\As\times\Ss$
\begin{align}
    p(s'\vert s,a)=\int_{\Ss}   b_{1-\delay}(s'\vert z,a)b_\delay(z|s,a)~dz.\label{eq:def_p_non_int}
\end{align}
Clearly, even for $\delay\in (0,1)$, an augmented state $x_t = (s_t,a_{t-1})\in \Ss \times \As \eqqcolon \mathcal X$ is needed in order not to lose information about the state $s_{t+\delay}\sim b_\delay(\cdot\vert s_t,a_{t-1})$. 
The DMDP with augmented state can again be cast into an MDP as in \citet{bertsekas1987dynamic,altman1992closed}, where the new transition is defined for $x_t = (s_t,a_{t-1}),  x_{t+1} = (s_{t+1},a_{t}) \in\Xs$,
{\thinmuskip=0mu
\medmuskip=0mu
\thickmuskip=0mu
\begin{align*}
    \tilde p(x_{t+1}|x_t, a)\coloneqq \delta_{a}(a_{t})\int_{\Ss} b_{1-\delay}(s_{t+1}|z,a)b_\delay(z|s_{t},a_{t-1})   ~dz,
\end{align*}%
}%
where the term $\delta_{a}(a_{t})$ ensures that the new extended state contains the action that has been applied on $x_t$. 
For delays greater than 1, one needs to consider the augmented state in the space $\Ss \times \As^{\lceil\delay\rceil}$ and the previous considerations hold by first considering the integer part of the delay and then its remaining non-integer part.
In this setting, we propose to use DIDA by learning an undelayed policy in $\mathcal{M}_\delay$ and imitating it by building an augmented state from the states in $\mathcal{M}$.

\section{Theoretical analysis of the approach}
\label{sec:theoretical}

We will now provide a theoretical analysis of the approach proposed above. 
The role of this analysis is twofold. First, it gives insights into which expert undelayed policy is best suited to be imitated in a DMDP. Secondly, it provides general results on the value functions bounds between DMDPs and MDPs, when the latter has guarantees of smoothness, setting aside pathological counterexamples such as in \citep[Proposition VI.1.]{liotet2021learning} while remaining realistic.
%We will compare the state value function of the delayed and the undelayed policies. However, the additional difficulty here is that those functions live on two different spaces ($\Ss$ and $\Xs=\Ss\times \As^d$).
To compare the performance of delayed and undelayed policies, we have to compare the corresponding state value functions, which is non trivial, since they live on two different spaces ($\Ss$ and $\Xs=\Ss\times \As^\delay$).

%Before developing our results, we present previous theoretical approaches. 
Different approaches were proposed to address this issue. In \citep[Theorem 3]{walsh2009learning}, assuming a finite MDP with \textit{mildly stochastic} transitions, that is, there exists $\epsilon$ such that, $\forall (s,a)\in\Ss\times\As, \exists s', p(s'\vert s,a)\geq 1-\epsilon$, then, for some undelayed policy $\pi$ one can bound the value function in the deterministic approximation of the MDP, $\widetilde{V}^{\pi}$ with respect to the value function in the real MDP, $V^{\pi}$ as $\lVert \widetilde{V}^{\pi}-V^{\pi}\rVert_{\infty}\leq \frac{\gamma\epsilon R_{\max}}{(1-\gamma)^2}$, where $R_{\max}$ is a bound on the reward. 
% The reason behind this bound is that a deterministic MDP is, in principle, not affected by delay. 
The assumptions by \citet{walsh2009learning} are quite strong and the bound grows quadratically with the effective time horizon.
% First, by applying the most probable transitions repeatedly to the augmented state, one finds a candidate for the current state, then, the latter is used to query an undelayed policy $\pi$.
Another approach is proposed by \citet[Theorem 1]{agarwal2021blind}, who compare the delayed value function $V^{\pidel}$ to $\E_{s\sim b(\cdot\vert x)}[V^{\pi}(s)]$, which corresponds to the expected value function of the undelayed policy averaged on the current unknown state given some augmented state. However, the authors make no assumptions about smoothness. 

Instead, we base our analysis on smoothness assumptions to provide our main result on the difference in performance between delayed and undelayed policies in \Cref{th:perf_diff_bound}. To obtain this result, we must first derive a delayed version of the performance difference lemma~\citep{kakade2002approximately}. Its proof, as for all other results in this section, is given in \Cref{app:proofs} and applies to any couple of delayed and undelayed policies.
Note that these results hold for either integer or non-integer constant delays. For simplicity, we state the results with belief $b$ but $b_\delay$ is intended if the delay is non-integer.

% As for imitation learning results, the simplest way to quantify the effectiveness of an algorithm, is to compare the state value function of the learned policy with the optimal one. However, this is a problem in the case of delayed RL, since the state value function of the optimal policy and the one of the delayed policy live on two different spaces ($S$ and $\mathcal X=S\times A^d$).

%  have proved that if the optimal state value function of the original MDP is Lipschitz and the process is "almost deterministic", this state value function can be approximated with the one of a deterministic MDP up to an error of $\frac{2L_V\gamma \Delta}{1-\gamma}$.
% However, it does not bound the state value function of an MDP with delay to its undelayed version.

% As stated in , the most natural way to bound the performance of a delayed RL algorithm, is to compare it with  since this corresponds to the expected return of the optimal undelayed policy averaged on the current unknown state. However, since our method is based on imitation, we will compare our imitated policy with $\pi_E$, the expert undelayed policy in the base MDP $\mathcal{M}$. We note $\pi_I$ the imitated policy in the DMDP $\widetilde{\mathcal{M}}$ and $\pidel$ the policy of \Cref{eq:belief_pol}.
% The bound between the two depends on the imitation learning. We will first of all focus on a bound between $\pi_E$ and $\pidel$.

\begin{restatable}{lem}{perfdifflem}[Delayed Performance Difference Lemma]
\label{lem:perfdimlem}
    Consider an undelayed policy $\pi_E$ and a $\delay$-delayed policy $\pidel$, with $\delay\in\mathbb R_{\ge0}$. 
    Then, for any $x\in\mathcal{X}$,
    \begin{align*}
        \E_{s\sim b(\cdot\vert x)}&[V^{\pi_E}(s)] - V^{\pidel}(x) = \frac{1}{1-\gamma}
        \\
        &\E_{x'\sim d^{\pidel}_x} \left[ \E_{s\sim b(\cdot\vert x')}[V^{\pi_E}(s)] -  \E_{\substack{s\sim b(\cdot\vert x')\\a\sim \pidel(\cdot\vert x')}}[Q^{\pi_E}(s,a)] \right].
    \end{align*}
\end{restatable}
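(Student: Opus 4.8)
The plan is to mirror the classical performance difference lemma of \citet{kakade2002approximately} by telescoping a discounted sum, but carried out entirely on the augmented space $\Xs$. I would first introduce the shorthand $f(x) \coloneqq \E_{s\sim b(\cdot\vert x)}[V^{\pi_E}(s)]$ for the reference quantity on the left-hand side, write the bracketed term on the right as $g(x') \coloneqq f(x') - \E_{s\sim b(\cdot\vert x'),\, a\sim\pidel(\cdot\vert x')}[Q^{\pi_E}(s,a)]$, and set $h(x) \coloneqq f(x) - V^{\pidel}(x)$. Since by definition of the discounted visitation $\frac{1}{1-\gamma}\E_{x'\sim d^{\pidel}_x}[g(x')] = \sum_{t=0}^\infty \gamma^t \E[g(x_t)\mid \pidel, x_0=x]$, the whole statement reduces to showing that this discounted sum collapses to $h(x)$. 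The natural route is to prove the one-step identity $g(x) = h(x) - \gamma\,\E_{a\sim\pidel(\cdot\vert x),\, x'\sim\widetilde p(\cdot\vert x,a)}[h(x')]$ and then telescope.

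The crux is a belief-consistency identity linking one step of the augmented dynamics to one step of the true dynamics pushed through the belief, namely
\[
\E_{a\sim\pidel(\cdot\vert x)}\,\E_{x'\sim\widetilde p(\cdot\vert x,a)}[f(x')] = \E_{s\sim b(\cdot\vert x),\, a\sim\pidel(\cdot\vert x),\, s''\sim p(\cdot\vert s,a)}[V^{\pi_E}(s'')].
\]
This holds because, by the construction of the augmented-state MDP, the unknown current state of the successor augmented state $x'$ is precisely the state obtained by advancing the unknown current state $s\sim b(\cdot\vert x)$ through the freshly applied action $a$; integrating $V^{\pi_E}$ against $b(\cdot\vert x')$ therefore coincides with integrating it against $p(\cdot\vert s,a)$ averaged over $s\sim b(\cdot\vert x)$. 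For non-integer $\delay$ the same identity holds with $b_\delay$ in place of $b$, using the factorization of $p$ into $b_\delay$ and $b_{1-\delay}$ from \Cref{eq:def_p_non_int}.

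Given this identity, I would expand $Q^{\pi_E}(s,a) = r(s,a) + \gamma\,\E_{s''\sim p(\cdot\vert s,a)}[V^{\pi_E}(s'')]$ inside $g$, recognize the immediate term as the delayed reward $\widetilde r(x,a)$ averaged over $\pidel$, and rewrite the continuation term as $\gamma\,\E_{a,x'}[f(x')]$ via the belief-consistency identity. Subtracting the Bellman equation for the augmented MDP, $V^{\pidel}(x) = \E_{a\sim\pidel(\cdot\vert x)}[\widetilde r(x,a) + \gamma\,\E_{x'\sim\widetilde p(\cdot\vert x,a)} V^{\pidel}(x')]$, then yields exactly $g(x) = h(x) - \gamma\,\E_{a,x'}[h(x')]$. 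Summing $\gamma^t \E[g(x_t)]$ along the $\pidel$-trajectory makes the discounted sum telescope, the continuation term at step $t$ cancelling the value term at step $t+1$ in expectation, leaving $h(x_0)=h(x)$; the tail $\gamma^t \E[h(x_t)]\to0$ because $h$ is bounded (bounded rewards, $\gamma<1$). Rewriting $\frac{1}{1-\gamma}$ times the visitation average back from the series then closes the argument.

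I expect the belief-consistency identity to be the only delicate step: it is where the specific structure of the DMDP enters, and it must be stated carefully so that the distribution of the next observed state and of the newly queued action line up with both $\widetilde p$ and the belief $b(\cdot\vert x')$. Everything after that is the standard telescoping underlying the undelayed performance difference lemma, so I would keep it brief.
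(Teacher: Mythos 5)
Your proposal is correct and follows essentially the same route as the paper's proof: your one-step identity $g(x)=h(x)-\gamma\,\E_{a,x'}[h(x')]$ is exactly the paper's add-and-subtract of the term $\E_{s\sim b(\cdot\vert x),\,a\sim\pidel(\cdot\vert x)}\bigl[r(s,a)+\gamma\,\E_{s'\sim p(\cdot\vert s,a)}[V^{\pi_E}(s')]\bigr]$, and your belief-consistency identity is precisely the paper's key observation $\int_{\Ss} b(s\vert x)p(s'\vert s,a)\,ds=\int_{\Xs}\tilde p(x'\vert x,a)b(s'\vert x')\,dx'$, after which both arguments conclude by the same telescoping into the discounted visitation distribution $d^{\pidel}_x$. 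The non-integer case is also handled identically, substituting $b_\delay$ and invoking the factorization of $p$ into $b_\delay$ and $b_{1-\delay}$.
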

We can then leverage the previous result to obtain a valuable result for DMDPs, which holds for delayed policies of the form of \Cref{eq:belief_pol}.
\begin{restatable}{thm}{perfdiffbound}
\label{th:perf_diff_bound}
    Consider an $(L_P,L_r)$-LC MDP and a $L_\pi$-LC undelayed policy $\pi_E$,  such that $Q^{\pi_E}$ is $L_Q$ -L.C.\footnote{In fact, only Lipschizness in the second argument is necessary (see proof).}.
    Let $\pidida$ be the $\delay$-delayed policy defined as in \Cref{eq:belief_pol}, with $\delay\in\mathbb R_{\ge0}$. 
    Then, for any $x\in\mathcal{X}$,
    \begin{align*}
        \E_{s\sim b(\cdot\vert x)}&[V^{\pi_E}(s)] - V^{\pidida}(x) \leq \frac{L_Q L_\pi}{1-\gamma} \sigma_b^x,
    \end{align*}
    where $\sigma_b^x = \E_{\substack{x'\sim d^{\pidida}_x\\ s,s'\sim b(\cdot\vert x')}}\left[ \dist_{\Ss}(s,s') \right]$.
\end{restatable}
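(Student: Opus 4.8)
The plan is to apply the Delayed Performance Difference Lemma (\Cref{lem:perfdimlem}) with $\pidel=\pidida$ and then bound the per-visit integrand using the smoothness hypotheses. In view of the lemma, it suffices to show that for every $x'\in\Xs$,
\begin{align*}
    \E_{s\sim b(\cdot\vert x')}[V^{\pi_E}(s)] - \E_{\substack{s\sim b(\cdot\vert x')\\a\sim\pidida(\cdot\vert x')}}[Q^{\pi_E}(s,a)] \leq L_Q L_\pi \E_{s,s'\sim b(\cdot\vert x')}[\dist_{\Ss}(s,s')],
\end{align*}
since integrating this against the (nonnegative) measure $d^{\pidida}_x$ and multiplying by $1/(1-\gamma)>0$ reproduces exactly the claimed bound with $\sigma_b^x$.

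The key manipulation is to expose the mixture structure of $\pidida$. First I would expand the state value function as $V^{\pi_E}(s)=\E_{a\sim\pi_E(\cdot\vert s)}[Q^{\pi_E}(s,a)]$. Then, using the definition of $\pidida$ in \Cref{eq:belief_pol}, sampling $a\sim\pidida(\cdot\vert x')$ is equivalent to drawing a fresh, \emph{independent} $s'\sim b(\cdot\vert x')$ and then $a\sim\pi_E(\cdot\vert s')$. Introducing this independent copy $s'$ (and a harmless dummy copy in the first term, which does not change its value), the integrand rewrites as
\begin{align*}
    \E_{s,s'\sim b(\cdot\vert x')}\Big[\E_{a\sim\pi_E(\cdot\vert s)}[Q^{\pi_E}(s,a)] - \E_{a\sim\pi_E(\cdot\vert s')}[Q^{\pi_E}(s,a)]\Big],
\end{align*}
so that the only discrepancy between the two inner terms is whether the action is drawn from the expert at $s$ or at the independent state $s'$, while $Q^{\pi_E}$ is always evaluated at the same $s$.

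For fixed $s,s'$ I would bound the inner difference by Kantorovich--Rubinstein duality: since $Q^{\pi_E}(s,\cdot)$ is $L_Q$-LC in its second argument, the difference of its expectations under $\pi_E(\cdot\vert s)$ and $\pi_E(\cdot\vert s')$ is at most $L_Q\,\wass(\pi_E(\cdot\vert s),\pi_E(\cdot\vert s'))$. The $L_\pi$-Lipschitzness of $\pi_E$ then yields $\wass(\pi_E(\cdot\vert s),\pi_E(\cdot\vert s'))\le L_\pi\dist_{\Ss}(s,s')$, hence the inner difference is at most $L_Q L_\pi\dist_{\Ss}(s,s')$. Taking the expectation over $s,s'\sim b(\cdot\vert x')$ closes the per-visit estimate, and substituting back into \Cref{lem:perfdimlem} finishes the argument.

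The main obstacle is the manipulation in the second paragraph: correctly recognizing that the belief-mixture form of $\pidida$ permits replacing the $\pidida$-expectation by an independent belief draw, thereby reducing the whole estimate to comparing expert actions at two belief-sampled states whose distance is precisely what $\sigma_b^x$ measures. Everything downstream is a routine application of Wasserstein duality together with the Lipschitz assumptions; note in particular that only Lipschitzness of $Q^{\pi_E}$ in the action argument is invoked, consistent with the footnote in the statement.
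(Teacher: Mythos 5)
Your proposal is correct and follows essentially the same route as the paper's proof: apply the delayed performance difference lemma, exploit the belief-mixture structure of $\pidida$ from \Cref{eq:belief_pol}, and bound the resulting action-distribution discrepancy via Wasserstein duality with the $L_Q$-Lipschitzness of $Q^{\pi_E}(s,\cdot)$ and the $L_\pi$-Lipschitzness of $\pi_E$. The only (immaterial) difference is ordering: the paper first bounds the integrand by $L_Q\,\E_{s\sim b(\cdot\vert x')}[\wass(\pi_E(\cdot\vert s)\Vert\pidida(\cdot\vert x'))]$ and then unfolds the mixture inside the Wasserstein distance, whereas you unfold the mixture first (introducing the independent copy $s'$) and apply duality pointwise in $(s,s')$.
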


However, this result seems difficult to grasp because of its dependence on the term $\sigma_b^x$.
We suggest two ways to further bound this term.
The first involves the time-Lipschitzness assumption of the MDP and yields \Cref{th:perf_diff_bound_tlc}.

\begin{restatable}{cor}{perfdiffboundtlc}
\label{th:perf_diff_bound_tlc}
    Under the assumptions of \Cref{th:perf_diff_bound}, adding that the MDP is $L_T$-TLC, then, for any $x\in\mathcal{X}$,
    \begin{align*}
        \E_{s\sim b(\cdot\vert x)}&[V^{\pi_E}(s)] - V^{\pidida}(x) \leq \frac{2 \delay L_T L_Q L_\pi}{1-\gamma}.
    \end{align*}
\end{restatable}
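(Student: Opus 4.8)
The plan is to recognize that Theorem~\ref{th:perf_diff_bound} already does all the heavy lifting: it reduces the performance gap to $\frac{L_Q L_\pi}{1-\gamma}\sigma_b^x$, so the only remaining task is to show that the time-Lipschitz assumption forces $\sigma_b^x \le 2\delay L_T$. I would prove this by bounding the inner quantity $\E_{s,s'\sim b(\cdot\vert x')}[\dist_{\Ss}(s,s')]$ uniformly over every augmented state $x'$, and then integrating this uniform bound against the visitation distribution $d^{\pidida}_x$, which discharges the outer expectation for free.

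Fix an augmented state $x'$ and let $s_1$ denote the last observed state it stores. First I would decouple the two independent belief samples with the triangle inequality, $\dist_{\Ss}(s,s') \le \dist_{\Ss}(s,s_1) + \dist_{\Ss}(s_1,s')$, which gives $\E_{s,s'\sim b(\cdot\vert x')}[\dist_{\Ss}(s,s')] \le 2\,\E_{s\sim b(\cdot\vert x')}[\dist_{\Ss}(s,s_1)]$. The key observation is that, since the only coupling between a distribution and a Dirac is the trivial one, the right-hand expectation is exactly $\wass(b(\cdot\vert x'),\delta_{s_1})$. Thus the whole problem collapses to controlling how far the belief has drifted, in Wasserstein distance, from the last observed state.

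The remaining and central step is to establish $\wass(b(\cdot\vert x'),\delta_{s_1}) \le \delay L_T$ from time-Lipschitzness. I would unroll the belief as a $\delay$-step propagation of $\delta_{s_1}$ through the transition kernel, writing $\nu_0 = \delta_{s_1}$, $\nu_{k+1}(\cdot) = \int p(\cdot\vert s,a_{k+1})\,\nu_k(ds)$, and $\nu_\delay = b(\cdot\vert x')$. By the triangle inequality for $\wass$, $\wass(\nu_\delay,\nu_0) \le \sum_{k=0}^{\delay-1}\wass(\nu_{k+1},\nu_k)$, and each single-step term is controlled by exhibiting the explicit (suboptimal) coupling that draws $s\sim\nu_k$ and then performs one transition: this yields $\wass(\nu_{k+1},\nu_k) \le \E_{s\sim\nu_k}[\wass(p(\cdot\vert s,a_{k+1}),\delta_s)] \le L_T$ directly from the $L_T$-TLC definition. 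Summing the $\delay$ terms gives $\delay L_T$, hence $\sigma_b^x \le 2\delay L_T$, and substituting into Theorem~\ref{th:perf_diff_bound} closes the corollary.

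I expect the main obstacle to be the accumulation step: one must argue that the Wasserstein error grows at most additively under repeated propagation rather than being amplified at each stage. The mechanism that makes this work is that $\wass$ is an infimum over couplings, so any concrete coupling---here the one-step kernel coupling---furnishes a valid upper bound, and TLC caps each such step at $L_T$; notably, Lipschitzness of the transition is not even required for this particular estimate. For the non-integer case one applies the identical telescoping to the fractional propagation encoded by $b_\delay$, handling the leftover sub-unit step with the same per-step TLC bound.
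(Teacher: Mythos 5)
Your proposal is correct and follows essentially the same route as the paper: the paper's Lemma~\ref{lem:bound_sigma_tlc} performs exactly your triangle-inequality reduction to $2\,\E_{s\sim b(\cdot\vert x')}[\dist_{\Ss}(s,s_1)]$ and its identification with $\wass(b(\cdot\vert x'),\delta_{s_1})$, and Proposition~\ref{pp:tlc_delay} proves $\wass(b(\cdot\vert x'),\delta_{s_1})\le \delay L_T$ by an induction that peels off one transition at a time --- precisely your telescoping sum written recursively, with the only cosmetic difference that the paper argues via the dual (Lipschitz test function) formulation of $\wass$ while you exhibit explicit couplings. Your closing remark about additive (rather than amplified) error accumulation, requiring no Lipschitzness of the transition kernel, is exactly the mechanism at work in the paper's proof as well.
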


This first result clearly highlights the linear dependence on the delay $\delay$.
However, the bound does not vanish (as expected) when the MDP is deterministic, but this is verified by a second result.  
This second result assumes a state space in $\mathbb{R}^n$ equipped with the Euclidean norm and yields \Cref{th:perf_diff_bound_eucl}.

\begin{restatable}{cor}{perfdiffboundeucl}
\label{th:perf_diff_bound_eucl}
    Under the assumptions of \Cref{th:perf_diff_bound} adding that $\Ss\subset\mathbb{R}^n$ is equipped with the Euclidean norm.
    Then, for any $x\in\mathcal{X}$,
    \begin{align*}
        \E_{s\sim b(\cdot\vert x)}[V^{\pi_E}(s)] - V^{\pidida}(x) &\leq
        \frac{2 L_Q L_\pi}{1-\gamma}
        \\&\E_{x'\sim d_x^{ \pidida}(\cdot)}\left[\sqrt{ \Var_{s\sim b(\cdot|x')}(s|x')}\right].
    \end{align*}
\end{restatable}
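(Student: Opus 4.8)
The plan is to treat this as a direct refinement of \Cref{th:perf_diff_bound}: that theorem already bounds the left-hand side by $\frac{L_Q L_\pi}{1-\gamma}\sigma_b^x$, so it suffices to show that the Euclidean structure lets us control $\sigma_b^x$ by $2\,\E_{x'\sim d_x^{\pidida}}[\sqrt{\Var_{s\sim b(\cdot|x')}(s|x')}]$. Since $\sigma_b^x$ is itself an average over $x'\sim d_x^{\pidida}$ of the expected distance between two independent belief draws, I would fix $x'$, bound the inner quantity, and then integrate the resulting bound back against $d_x^{\pidida}$.

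For the inner bound, I would first introduce the belief mean $\bar s \coloneqq \E_{s\sim b(\cdot|x')}[s]$, which is well defined since $\Ss\subset\mathbb{R}^n$. With $s,s'$ drawn independently from $b(\cdot|x')$, the triangle inequality for the Euclidean norm gives $\dist_{\Ss}(s,s')=\|s-s'\|_2 \le \|s-\bar s\|_2 + \|s'-\bar s\|_2$. Taking expectations and using that $s$ and $s'$ are identically distributed yields
\[
\E_{s,s'\sim b(\cdot|x')}[\|s-s'\|_2] \le 2\,\E_{s\sim b(\cdot|x')}[\|s-\bar s\|_2].
\]
Then, since $t\mapsto\sqrt t$ is concave, Jensen's inequality bounds the right-hand side by $2\sqrt{\E_{s\sim b(\cdot|x')}[\|s-\bar s\|_2^2]}$, and the quantity inside the square root is exactly $\Var_{s\sim b(\cdot|x')}(s|x')=\E[\|s-\E[s]\|_2^2]$, the trace of the belief covariance.

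Finally I would integrate this inequality over $x'\sim d_x^{\pidida}$ to obtain $\sigma_b^x \le 2\,\E_{x'\sim d_x^{\pidida}}[\sqrt{\Var_{s\sim b(\cdot|x')}(s|x')}]$ and substitute into \Cref{th:perf_diff_bound}, which produces the stated factor of $2$. There is no genuine obstacle here: the entire argument is triangle inequality plus Jensen. The only points requiring care are (i) recognising that the variance of the vector-valued current state must be read as the expected squared Euclidean deviation from the mean, equivalently the trace of the covariance matrix, so that the Jensen step matches the statement, and (ii) using the independence and identical distribution of the two belief samples $s,s'$ so that the triangle-inequality split factors cleanly into twice a single centred moment. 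Both hold under the stated assumptions, provided the belief admits a finite second moment.
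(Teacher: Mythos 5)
Your proof is correct and follows the paper's overall structure: the paper likewise reduces the corollary to \Cref{th:perf_diff_bound} plus a bound on $\sigma_b^x$ (its \Cref{lem:bound_sigma_eucl}), fixing $x'$ and exploiting the i.i.d.\ belief samples. The only divergence is the inner step. The paper applies Jensen directly to the pair, $\E_{s,s'\sim b(\cdot\vert x')}\left[\lVert s-s'\rVert_2\right]\le\sqrt{\E\left[\lVert s-s'\rVert_2^2\right]}=\sqrt{2\,\Var_{s\sim b(\cdot|x')}(s|x')}$, using the identity $\E\lVert s-s'\rVert_2^2=2\,\Var(s)$ for i.i.d.\ draws, which yields the tighter constant $\sqrt{2}$; you instead center at the belief mean via the triangle inequality and then apply Jensen to each centered term, which yields $2$. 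Interestingly, your constant matches the corollary exactly as stated, whereas the paper's chain actually proves the stronger $\sqrt{2}$ bound --- and its intermediate \Cref{lem:bound_sigma_eucl} is stated with no constant at all, an inconsistency that its own proof reveals (the correct constant there is $\sqrt{2}$, which then dominated by the $2$ in the corollary). Both routes require exactly the two points of care you flagged: reading $\Var_{s\sim b(\cdot|x')}(s|x')$ as the expected squared Euclidean deviation from the mean (the trace of the belief covariance), and finiteness of the belief's second moment. So your argument is a valid, marginally looser variant of the paper's; nothing is missing.
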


Interestingly, we show that this second corollary matches a theoretical lower bound when the expert policy is optimal. 
We provide this lower bound in \Cref{th:lb_tight}, which shows that a too irregular expert policy (with high Lipschitz constant) provides weaker guarantees. 
% The proof of \cref{th:lb_tight} is given in  \cref{app:tightness}.

\begin{restatable}{thm}{lbtight}
\label{th:lb_tight}
For every $L_\pi>0$, $L_{Q}>0$, there exists an MDP such that the optimal policy is $L_\pi$-LC, its state action value function is $L_{Q}$-LC in the second argument, but for any $\delay$-delayed policy $\pidel$, with $\delay\in\mathbb R_{\ge0}$, and any $x\in\Xs$
    \begin{align*}
        \E_{s\sim b(\cdot\vert x)}[V^*(s)] - V^{\pidel}(x) &\geq
        \frac{\sqrt 2}{\sqrt \pi}\frac{ L_{Q}L_\pi}{1-\gamma}
        \\
        &\E_{x'\sim d_x^{\tilde \pi}(\cdot)}\left[\sqrt{ \Var_{s\sim b(\cdot|x')}(s|x')}\right],
    \end{align*}
    where $V^*$ is the value function of the optimal undelayed policy.
\end{restatable}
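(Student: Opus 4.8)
The plan is to exhibit an explicit Gaussian MDP in which every quantity can be computed in closed form and the constant $\sqrt{2}/\sqrt{\pi}$ emerges as the mean absolute deviation of a standard normal. Concretely, I would take $\Ss=\As=\mathbb{R}$ with the Euclidean metric, let the state evolve as a pure Gaussian random walk $s_{t+1}=s_t+\epsilon_t$ with $\epsilon_t\sim N(0,\sigma^2)$ i.i.d.\ (the dynamics ignoring the action, so that the $\delay$-step belief is exactly $b(\cdot\vert x)=N(\mu_x,\delay\sigma^2)$ with $\mu_x$ the last observed state), and set the reward to $r(s,a)=-L_Q\,\lvert a-L_\pi s\rvert$. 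For a non-integer delay I would replace the random walk by the corresponding Brownian motion so that $b_\delay(\cdot\vert x)$ is again Gaussian; the only fact used below is that the belief is a symmetric Gaussian with variance equal to $\Var_{s\sim b(\cdot\vert x)}(s)$.

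Next I would check the three structural requirements. Since the reward is nonpositive and vanishes exactly at $a=L_\pi s$, the greedy action is $\pi^*(s)=L_\pi s$, which is $L_\pi$-LC, and it yields reward $0$ at every step, so $V^*\equiv 0$. Because the transition does not depend on the action, the Bellman optimality equation gives $Q^*(s,a)=r(s,a)+\gamma\,\E_{s'}[V^*(s')]=-L_Q\lvert a-L_\pi s\rvert$, which is $L_Q$-LC in its second argument and is uniquely maximised at $a=L_\pi s$, confirming optimality and uniqueness of $\pi^*$. (Rewards are unbounded below, but this is harmless: if $V^{\pidel}(x)=-\infty$ the claim is trivial, so I may restrict to delayed policies of finite value, for which \Cref{lem:perfdimlem} applies with $\pi_E=\pi^*$.)

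I would then invoke \Cref{lem:perfdimlem} with $\pi_E=\pi^*$. Using $V^*\equiv 0$ and $Q^*(s,a)=-L_Q\lvert a-L_\pi s\rvert$, the inner bracket at an augmented state $x'$ collapses to $L_Q\,\E_{s\sim b(\cdot\vert x'),\,a\sim\pidel(\cdot\vert x')}[\lvert a-L_\pi s\rvert]$. The crucial observation is that, conditionally on $x'$, the chosen action $a$ is independent of the true current state $s\sim b(\cdot\vert x')$ (the delayed agent never sees $s$), so this equals $\E_a\big[\E_s[\lvert a-L_\pi s\rvert]\big]$. For each fixed $a$, writing $\E_s[\lvert a-L_\pi s\rvert]=L_\pi\,\E_s[\lvert a/L_\pi - s\rvert]$ and using that the median minimises the expected $L_1$ deviation — and that the median of the symmetric Gaussian $b(\cdot\vert x')$ equals its mean $\mu_{x'}$ — gives $\E_s[\lvert a-L_\pi s\rvert]\ge L_\pi\,\E_s[\lvert s-\mu_{x'}\rvert]$. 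Finally $\E_{s\sim N(\mu_{x'},\,v)}[\lvert s-\mu_{x'}\rvert]=\sqrt{2/\pi}\,\sqrt{v}$, so the inner bracket is at least $\tfrac{\sqrt{2}}{\sqrt{\pi}}\,L_QL_\pi\sqrt{\Var_{s\sim b(\cdot\vert x')}(s)}$; taking the $d^{\pidel}_x$-expectation and the factor $1/(1-\gamma)$ from the lemma yields the stated bound for every $\pidel$.

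The main obstacle is not the probabilistic computation (the Gaussian mean-absolute-deviation identity and the median/$L_1$ minimisation are standard) but arranging an MDP whose optimal value function has the prescribed piecewise-linear shape \emph{exactly}. Making the action influence only the immediate reward (and not the transition) is what decouples the recursion and forces $Q^*(s,a)=-L_Q\lvert a-L_\pi s\rvert$ with $V^*\equiv 0$; I would emphasise that this decoupling, together with the conditional independence of $a$ and $s$ given $x'$, is precisely what turns the loose Jensen step $\E\lvert s-s'\rvert\le 2\sqrt{\Var}$ used in \Cref{th:perf_diff_bound_eucl} into the sharp Gaussian constant $\sqrt{2}/\sqrt{\pi}$ here, and that the best delayed policy (playing $a=L_\pi\mu_{x'}$) attains the bound with equality, so it is genuinely tight.
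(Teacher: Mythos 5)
Your proposal is correct, and it is essentially the paper's construction in a lightly different dress. The paper takes $\Ss=\As=\mathbb R$, $s_{t+1}=s_t+\tfrac{a}{L_\pi}+\varepsilon_t$ with $\varepsilon_t\sim\mathcal N(0,\sigma^2)$ and reward $r(s,a)=-L_QL_\pi\bigl\vert s+\tfrac{a}{L_\pi}\bigr\vert=-L_Q\lvert L_\pi s+a\rvert$, so the policy $a=-L_\pi s$ nulls the reward, $V^*\equiv 0$, and $Q^*(s,a)=-L_Q\lvert L_\pi s+a\rvert$; your action-independent random walk with $r(s,a)=-L_Q\lvert a-L_\pi s\rvert$ is the same family up to a sign and the removal of the (immaterial) action-dependent drift, and both give belief variance $\delay\sigma^2$. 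Where you genuinely differ is the bounding route: the paper never invokes \Cref{lem:perfdimlem}; it decomposes $s_t=\phi(x_t)+\epsilon$ with $\epsilon\sim\mathcal N(0,\delay\sigma^2)$, bounds the expected instantaneous reward of any delayed action by $-L_QL_\pi\sqrt{2/\pi}\,\sqrt{\delay}\,\sigma$ (the half-normal mean, using that $y\mapsto\E\lvert\mathcal N(y,\sigma)\rvert$ is minimized at $0$ by symmetry), and sums the discounted series directly. Since $V^*\equiv 0$ and $Q^*=r$, your application of the performance-difference lemma collapses to exactly that discounted sum of per-step expected rewards under $d_x^{\pidel}$, so the two arguments are the same computation; your version, however, is tidier at three points the paper glosses: the minimization over actions is made rigorous via the median characterization of the $L_1$ minimizer together with the conditional independence of $a$ and $s$ given $x'$ (equivalent to the paper's symmetry claim, but stated as a proof), you explicitly dispose of the unbounded-below reward by treating $V^{\pidel}(x)=-\infty$ separately before invoking the lemma, and you make the non-integer-delay case concrete via Brownian interpolation where the paper simply reuses the variance $\delay\sigma^2$. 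Your closing observation that the delayed policy $a=L_\pi\mu_{x'}$ attains the bound with equality also substantiates the tightness interpretation that the paper leaves implicit.
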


We provide an alternative way to derive bounds in performance in \Cref{app:bound_state_dist}, which provide slightly different results as discussed in \Cref{subsec:disc_bounds}.

We have bounded the performance of our perfectly imitated delayed policy $\pidida$ with respect to the undelayed expert $\pi_E$. However, two additional sources of performance loss have to be taken into account. First, the expert $\pi_E$ may be sub-optimal in the undelayed MDP. Second, the imitated policy $\pi_I$ may not learn exactly $\pidida$. 
% This second source of loss is studied in \cite{ross2010efficient} and may depend on the number of iterations. It is also known that a noisier policy benefits imitation learning by showing how to recover from wrong decisions \citep{laskey2017dart}.

%The more, regularizing the policy by adding noise, which clearly provides a smoothing factor, may also decrease the term $W_1(\pi_E(\cdot|s'),\pi_I(\cdot|s')$, since
% Going from our theoretical results to practice, one can see that to get the best possible performance for the delayed policy, using as expert $\pi_E=\pi^*$ may not be the best idea, because if the latter is too irregular, the previous bounds become looser. 
% Instead, if we use, as we will do, a smooth expert policy, we can achieve near-optimal results still taking this bound under control.

These theoretical results highlight two important trade-offs in practice. If the expert policy is smoother than the optimal undelayed policy, then we might miss out on some opportunities, but the delayed policy is likely to be more similar to the expert one, according to \Cref{th:perf_diff_bound}. The second trade-off concerns noisier policies.  For them, the imitation step is likely to be easier, as it provides examples of how to recover from bad decisions~\citep{laskey2017dart}. 
Therefore, our imitated policy $\pi_I$ is likely to be more similar to $\pidida$. However, this may decrease the performance of the expert compared to the optimal undelayed policy.

\begin{figure*}[t]
\centering
    \begin{subfigure}{0.33\textwidth}
        \centering
        \includegraphics[width=\linewidth]{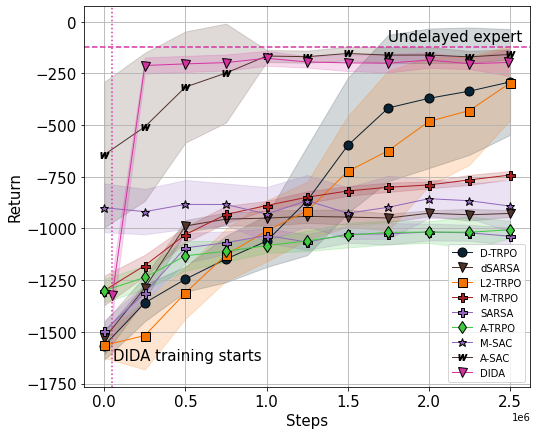}
        \caption{Pendulum.}
        \label{fig:pendulum}
    \end{subfigure}%
\hfill
    \begin{subfigure}{0.33\textwidth}
        \centering
        \includegraphics[width=\linewidth]{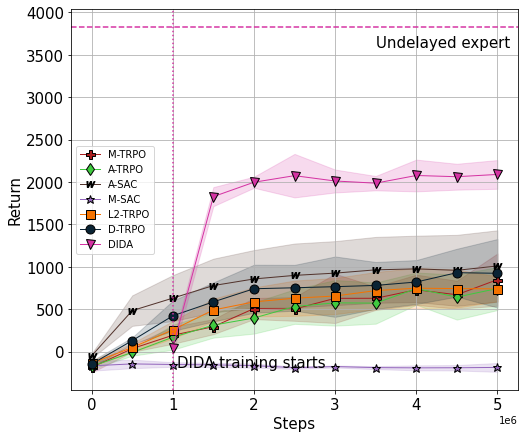}
        \caption{HalfCheetah.}
        \label{fig:halfcheetah}
    \end{subfigure}%
\hfill
    \begin{subfigure}{0.33\textwidth}
        \centering
        \includegraphics[width=\linewidth]{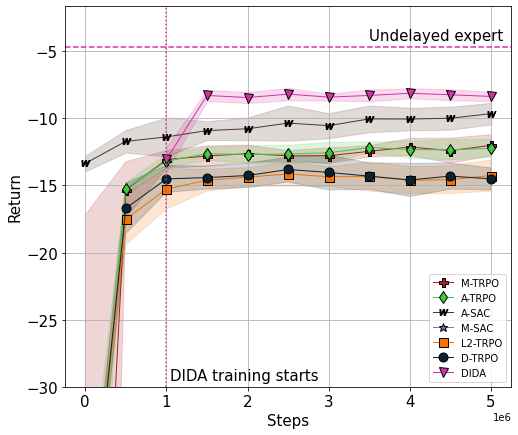}
        \caption{Reacher.}
        \label{fig:reacher}
    \end{subfigure}%
\caption{For a 5-steps delay, mean return and one standard deviation (shaded) as a function of the number of steps sampled from the environment (10 seeds).}
\end{figure*}

\section{Experiments}
\label{sec:experiments}

\subsection{Setting}
As we have seen in the theoretical analysis, a smoother expert is beneficial for the performance bound of the imitated delayed policy. Therefore, in the following experiments, we consider expert policies learned with SAC. As reported in an extensive study about smooth policies \citep{mysore2021regularizing}, the entropy-maximization framework of SAC is able to learn a smooth policy even without additional forms of regularization.
To avoid ever-growing memory by storing all samples in the buffer as done in \Cref{algo:dida}, we use a maximum buffer size of 10 iterations for DIDA and overwrite the oldest iteration samples when this buffer is full. 
% As in the latter algorithm, the sequence of the beta routine, provided that it respects the conditions explained in \cref{sec:imitation}, is essentially arbitrary. One choice could be to let $\beta_i$ go to zero sharply with $i$ increasing, to have a proportion of expert and imitator actions which changes slowly. 
As suggested by \citet{ross2011reduction}, we use $\beta_1=1, \beta_{i\ge 2}=0$ as mixture weights for the sampling policy.
The policy for DIDA is a simple feed-forward neural network. More details and all hyper-parameters are reported in \Cref{app:hyper_param}.

We will test DIDA, along with some baselines from the state of the art, on the following environments.

\textbf{Pendulum}~~The task of the agent is to rotate a pendulum upward. It is a classic experiment in delayed RL as delays are highly impacting performance due to unstable equilibrium in the upward position. We use the version from the library \texttt{gym}~\citep{brockman2016gym}.

\textbf{Mujoco}~~Continuous robotic locomotion control tasks realized with an advanced physics simulator from the library \texttt{mujoco}~\citep{todorov2012mujoco}. Here the main difficulty lies in the complex dynamics and in the large state and action spaces. Among the possible environments, we consider the ones that are most affected by delays, namely Walker2d, HalfCheetah, Reacher, and Swimmer.

\textbf{Trading}~~The agent trades the EUR-USD (€/\$) currency pair on a minute-by-minute basis and can either \emph{buy}, \emph{sell} or stay \emph{flat} against a fixed amount of USD, following the framework of \citet{bisi2020foreign} and \citet{riva2021learning}. 
We assume trading is without fees, but we do take the spread into account. 
To this setting, we add a delay of 10 seconds to the action execution.
% This task is a batch-RL task as it is based on a dataset of historical exchange rates. 
In this environment, we leverage the knowledge of an expert which is a policy trained on years 2016-2017 by Fitted Q-Iteration~\citep[FQI][]{ernst2005tree} with XGBoost~\citep{chen2016xgboost} as a regressor for the $Q$ function. 
Only for this task, we use Extra Trees~\cite{geurts2006extremely} as policy for DIDA.

The baselines for comparison with our algorithm include a memoryless and an augmented version of TRPO (M-TRPO and A-TRPO respectively),  D-TRPO and L2-TRPO~\citep{liotet2021learning}, SARSA~\citep{sutton2018reinforcement} and dSARSA~\citep{schuitema2010control}. The last two algorithms involve state discretization and are thus tested on pendulum only.
We consider also augmented SAC (A-SAC), considered also by \citet{bouteiller2020reinforcement}, and memoryless SAC (M-SAC). 
Although SAC can be trained at every step as we do for Pendulum, we restrict training to every 50 steps on Mujoco to speed up the procedure and reduce memory usage.
% \footnote{This choice is likely to affect performance to some extent, but greatly speeds computations.}
We have also considered adding DCAC~\citep{bouteiller2020reinforcement} but for computational reasons, we have decided not to include it. Early experimental results showed that its running time was more than 50 times the one of DIDA. 
% Anyhow, DCAC has a different scope as it is offline and DIDA is useful when an undelayed policy is already known.
% We therefore refer the reader to \citep{bouteiller2020reinforcement} to appreciate the quality of the baseline.
For a fair comparison to the baselines, which learn a policy from scratch, we include the training steps of the expert in the step count of DIDA, as indicated by the vertical dotted line in the figures.

\subsection{Results}

% \begin{figure*}[t]
% \begin{subfigure}{0.66\textwidth}
%     \begin{subfigure}{0.5\textwidth}
%         \centering
%         \includegraphics[width=\linewidth]{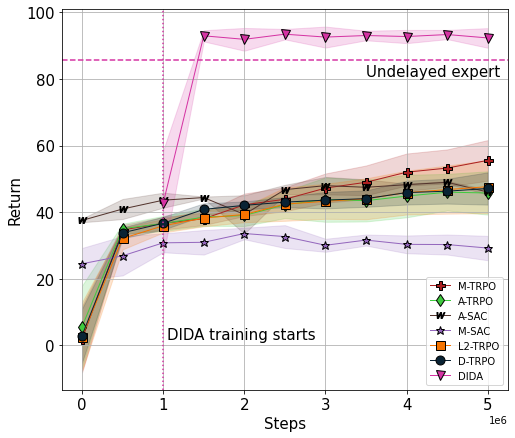}
%         \caption{Swimmer.}
%         \label{fig:swimmer}
%     \end{subfigure}
% \hfill
%     \begin{subfigure}{0.5\textwidth}
%         \centering
%         \includegraphics[width=\linewidth]{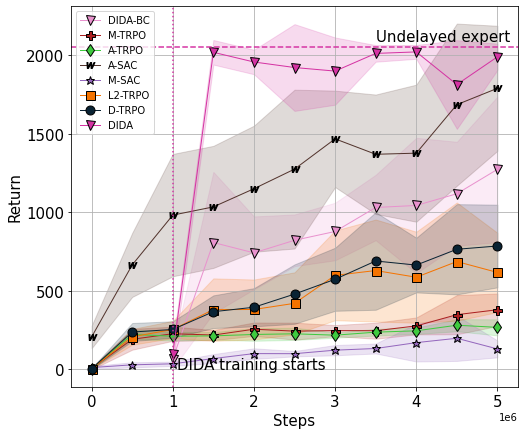}
%         \caption{Walker2d.}
%         \label{fig:walker}
%     \end{subfigure}
%     \caption{For a 5-steps delay, mean return and one standard deviation (shaded) as a function of the number of steps sampled from the environment (10 seeds).}
% \end{subfigure}
% \hfill
% \begin{subfigure}{0.33\textwidth}
%     \centering
%         \includegraphics[width=\linewidth]{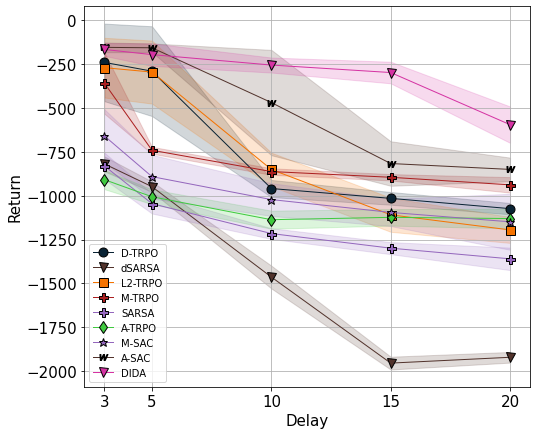}
%     \caption{Mean return and its standard deviation (shaded) as a function delay (10 seeds).}
%     \label{fig:pendulum_delay}
% \end{subfigure}
% \end{figure*}

\begin{figure*}[t]
\centering
\begin{minipage}{.66\textwidth}
  \centering
  \begin{subfigure}{.49\textwidth}
    \centering
    \includegraphics[width=\linewidth]{figures/Swimmer-v2_delay_5_mean_reward.png}
    \caption{Swimmer.}
    \label{fig:swimmer}
\end{subfigure}%
\hfill
\begin{subfigure}{.49\textwidth}
  \centering
    \includegraphics[width=\linewidth]{figures/Walker2d-v2_delay_5_mean_reward.png}
    \caption{Walker2d.}
    \label{fig:walker}
\end{subfigure}
\setcounter{figure}{1}
\captionof{figure}{For a 5-steps delay, mean return and one standard deviation (shaded) as a function of the number of steps sampled from the environment (10 seeds).}
\end{minipage}%
\hfill
\begin{minipage}{.33\textwidth}
  \centering
    \includegraphics[width=\linewidth]{figures/return_delays_vs_baselines.png}
    \captionof{figure}{Mean return and its standard deviation (shaded) as a function delay (10 seeds).}
    \label{fig:pendulum_delay}
\end{minipage}
\end{figure*}

\begin{figure}[t]
\centering
    \includegraphics[width=0.8\linewidth]{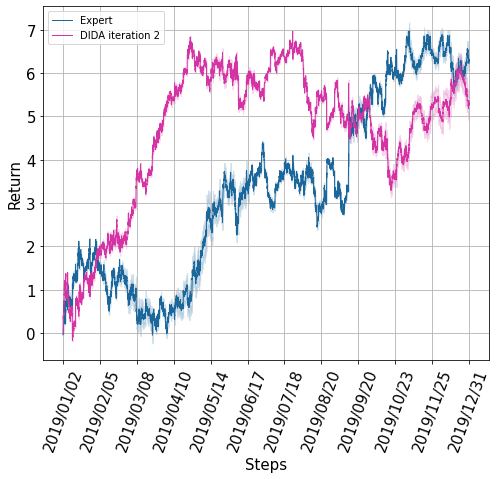}
        \caption{Evolution of the return of DIDA for the trading of the EUR-USD pair in 2019. Performance is computed in percentage w.r.t. the invested amount.}
        \label{fig:trading}
\end{figure}

As we can see from the results on pendulum and mujoco, \Cref{fig:pendulum,fig:halfcheetah,fig:reacher,fig:swimmer,fig:walker}, DIDA is able to converge much faster than the baselines, in any environment with the exception of A-SAC on the pendulum environment. 
In less than half a million steps on mujoco, and 250.000 steps on pendulum, DIDA almost reaches its final performance. 
We note that, in HalfCheetah and Reacher, DIDA, although the best delayed algorithm, performs much worse than the expert. Surprisingly, in Swimmer, DIDA performs slightly better than the undelayed expert. All these phenomenons might actually be due to a single cause. 
In our implementation, to initialize the environment, a sequence of $\delay-1$ random actions are applied in an undelayed environment to sample a first delayed augmented state. Depending on the environment, this sequence could cause the agent to start in un-advantageous or advantageous states. For instance, in HalfCheetah, the random action have put the agent head-down when the the latter is first allowed to control the environment. It must thus first get back on its feet before starting to move. On the contrary, in a simpler environment like Swimmer, the initial random action queue might give some initial speed to the agent, yielding higher rewards at the beginning than its undelayed counterpart.
% This may be because no termination condition is applied in the environment \citep[Supplementary]{duan2016benchmarking} so that the delayed policy can get stuck in ``bad states'' where it receives poor rewards for many steps. 
% Surprisingly, in Swimmer, the delayed policy is able to perform slightly better than the expert. 
% This result could be due to the fact that, the task being easier \citep[Supplementary]{duan2016benchmarking},
% the initialization of the action queue gives the delayed agent an initial speed while no reward is collected yet. Therefore, after initialization, the undelayed agent is in more favorable conditions to collect higher rewards than its undelayed counterpart.

We provide another experiment on Pendulum where we study the robustness of DIDA, as compared to baselines, against an increase in the delay for fixed hyper-parameters. We report the final mean return per episode for different values of the delay in \Cref{fig:pendulum_delay}. Clearly, from all the baselines studied, DIDA is the most robust to the increase in the delay.

For the trading task, which is a batch-RL task since the training dataset is a fixed set of historical exchange rates, DIDA is prone to overfitting the expert policy on these examples. 
Therefore, after training several iterations of DIDA, we select the best iteration on the validation year 2018 and show the test performance in the year 2019 compared to the undelayed expert.
In our results, we consider two experts trained on two different seeds, but with the same hyper-parameters configuration. 
We then imitated each seed with DIDA.
The results, as shown in \Cref{fig:trading}, show the ability of DIDA to adapt to non-integer delays and maintain a positive return, which is not a simple task when taking the spread into account in trading the EUR-USD.
One may notice that the delayed policy is able to outperform the expert on the first period of the test. This could be explained by the fact that the expert undelayed policy may have overfitted the training set while the imitation learning of an undelayed policy acted as a regularization.
We provide an analysis on the policy learned by DIDA with respect to the expert in \Cref{subsec:further_exp}. 

Moreover, we provide in \Cref{subsec:further_exp} additional experiments on a stochastic version of pendulum and a study of the impact of a growing delay on the performance of DIDA.
\section{Conclusion}

% Future work include making the DA\textsc{gger} imitation more sample efficient in order to avoid the burden of the ever-growing dataset.
In this paper, we explored the possibility of splitting delayed reinforcement learning into easier tasks, traditional undelayed reinforcement learning on the one hand, and imitation learning on the other one. 
We provided a theoretical analysis demonstrating bounds on the performance of a delayed policy compared to undelayed experts, both for integer or non-integer constant delays.
These bounds apply in our particular setting but are also of interest in general for delayed policies.
This guided us in the creation of our algorithm, DIDA, which learns a delayed policy by imitating an undelayed expert using DA\textsc{gger}. 
We have empirically shown that this idea, although rather simple, provides excellent results in practice, achieving high performance with remarkable sample efficiency and light computations.
% Theoretical and empirical results have been extended to non-integer delays.
We believe that our work paves the way for many possible generalizations, which include stochastic delays and particular situations in which an undelayed simulator is not available, but where an undelayed dataset can be artificially created from delayed trajectories in order to train an expert offline.

\bibliography{biblio}

\begin{thebibliography}{43}
\providecommand{\natexlab}[1]{#1}
\providecommand{\url}[1]{\texttt{#1}}
\expandafter\ifx\csname urlstyle\endcsname\relax
  \providecommand{\doi}[1]{doi: #1}\else
  \providecommand{\doi}{doi: \begingroup \urlstyle{rm}\Url}\fi

\bibitem[Agarwal \& Aggarwal(2021)Agarwal and Aggarwal]{agarwal2021blind}
Agarwal, M. and Aggarwal, V.
\newblock Blind decision making: Reinforcement learning with delayed
  observations.
\newblock In \emph{Proceedings of the International Conference on Automated
  Planning and Scheduling}, volume~31, pp.\  2--6, 2021.

\bibitem[Altman \& Nain(1992)Altman and Nain]{altman1992closed}
Altman, E. and Nain, P.
\newblock Closed-loop control with delayed information.
\newblock \emph{ACM sigmetrics performance evaluation review}, 20\penalty0
  (1):\penalty0 193--204, 1992.

\bibitem[Bertsekas(1987)]{bertsekas1987dynamic}
Bertsekas, D.~P.
\newblock \emph{Dynamic Programming: Determinist. and Stochast. Models}.
\newblock Prentice-Hall, 1987.

\bibitem[Bisi et~al.(2020)Bisi, Liotet, Sabbioni, Reho, Montali, Restelli, and
  Corno]{bisi2020foreign}
Bisi, L., Liotet, P., Sabbioni, L., Reho, G., Montali, N., Restelli, M., and
  Corno, C.
\newblock Foreign exchange trading: a risk-averse batch reinforcement learning
  approach.
\newblock In \emph{Proceedings of the First ACM International Conference on AI
  in Finance}, pp.\  1--8, 2020.

\bibitem[Bouteiller et~al.(2020)Bouteiller, Ramstedt, Beltrame, Pal, and
  Binas]{bouteiller2020reinforcement}
Bouteiller, Y., Ramstedt, S., Beltrame, G., Pal, C., and Binas, J.
\newblock Reinforcement learning with random delays.
\newblock In \emph{International Conference on Learning Representations}, 2020.

\bibitem[Brockman et~al.(2016)Brockman, Cheung, Pettersson, Schneider,
  Schulman, Tang, and Zaremba]{brockman2016gym}
Brockman, G., Cheung, V., Pettersson, L., Schneider, J., Schulman, J., Tang,
  J., and Zaremba, W.
\newblock Openai gym, 2016.

\bibitem[Chen et~al.(2021)Chen, Xu, Li, and Zhao]{chen2021delay}
Chen, B., Xu, M., Li, L., and Zhao, D.
\newblock Delay-aware model-based reinforcement learning for continuous
  control.
\newblock \emph{Neurocomputing}, 450:\penalty0 119--128, 2021.

\bibitem[Chen \& Guestrin(2016)Chen and Guestrin]{chen2016xgboost}
Chen, T. and Guestrin, C.
\newblock Xgboost: A scalable tree boosting system.
\newblock In \emph{Proceedings of the 22nd acm sigkdd international conference
  on knowledge discovery and data mining}, pp.\  785--794, 2016.

\bibitem[Derman et~al.(2021)Derman, Dalal, and Mannor]{derman2021acting}
Derman, E., Dalal, G., and Mannor, S.
\newblock Acting in delayed environments with non-stationary markov policies.
\newblock \emph{arXiv preprint arXiv:2101.11992}, 2021.

\bibitem[Dro{\'z}dziel et~al.(2020)Dro{\'z}dziel, Tarkowski, Rybicka, and
  Wrona]{drozdziel2020drivers}
Dro{\'z}dziel, P., Tarkowski, S., Rybicka, I., and Wrona, R.
\newblock Drivers’ reaction time research in the conditions in the real
  traffic.
\newblock \emph{Open Engineering}, 10\penalty0 (1):\penalty0 35--47, 2020.

\bibitem[Dugard \& Verriest(1998)Dugard and Verriest]{dugard1998stability}
Dugard, L. and Verriest, E.~I.
\newblock \emph{Stability and control of time-delay systems}, volume 228.
\newblock Springer, 1998.

\bibitem[Dulac-Arnold et~al.(2019)Dulac-Arnold, Mankowitz, and
  Hester]{dulac2019challenges}
Dulac-Arnold, G., Mankowitz, D., and Hester, T.
\newblock Challenges of real-world reinforcement learning.
\newblock \emph{arXiv preprint arXiv:1904.12901}, 2019.

\bibitem[Ernst et~al.(2005)Ernst, Geurts, and Wehenkel]{ernst2005tree}
Ernst, D., Geurts, P., and Wehenkel, L.
\newblock Tree-based batch mode reinforcement learning.
\newblock \emph{Journal of Machine Learning Research}, 6\penalty0
  (Apr):\penalty0 503--556, 2005.

\bibitem[Firoiu et~al.(2018)Firoiu, Ju, and Tenenbaum]{firoiu2018human}
Firoiu, V., Ju, T., and Tenenbaum, J.
\newblock At human speed: Deep reinforcement learning with action delay.
\newblock \emph{arXiv preprint arXiv:1810.07286}, 2018.

\bibitem[Geurts et~al.(2006)Geurts, Ernst, and Wehenkel]{geurts2006extremely}
Geurts, P., Ernst, D., and Wehenkel, L.
\newblock Extremely randomized trees.
\newblock \emph{Machine learning}, 63\penalty0 (1):\penalty0 3--42, 2006.

\bibitem[Gu \& Niculescu(2003)Gu and Niculescu]{gu2003survey}
Gu, K. and Niculescu, S.-I.
\newblock Survey on recent results in the stability and control of time-delay
  systems.
\newblock \emph{J. Dyn. Sys., Meas., Control}, 125\penalty0 (2):\penalty0
  158--165, 2003.

\bibitem[Haarnoja et~al.(2018)Haarnoja, Zhou, Abbeel, and
  Levine]{haarnoja2018soft}
Haarnoja, T., Zhou, A., Abbeel, P., and Levine, S.
\newblock Soft actor-critic: Off-policy maximum entropy deep reinforcement
  learning with a stochastic actor.
\newblock In \emph{International conference on machine learning}, pp.\
  1861--1870. PMLR, 2018.

\bibitem[Kakade \& Langford(2002)Kakade and Langford]{kakade2002approximately}
Kakade, S. and Langford, J.
\newblock Approximately optimal approximate reinforcement learning.
\newblock In \emph{In Proc. 19th International Conference on Machine Learning}.
  Citeseer, 2002.

\bibitem[Katsikopoulos \& Engelbrecht(2003)Katsikopoulos and
  Engelbrecht]{katsikopoulos2003markov}
Katsikopoulos, K.~V. and Engelbrecht, S.~E.
\newblock Markov decision processes with delays and asynchronous cost
  collection.
\newblock \emph{IEEE transactions on automatic control}, 48\penalty0
  (4):\penalty0 568--574, 2003.

\bibitem[Kingma \& Ba(2014)Kingma and Ba]{kingma2014adam}
Kingma, D.~P. and Ba, J.
\newblock Adam: A method for stochastic optimization.
\newblock \emph{arXiv preprint arXiv:1412.6980}, 2014.

\bibitem[Laskey et~al.(2017)Laskey, Lee, Fox, Dragan, and
  Goldberg]{laskey2017dart}
Laskey, M., Lee, J., Fox, R., Dragan, A., and Goldberg, K.
\newblock Dart: Noise injection for robust imitation learning.
\newblock In \emph{Conference on robot learning}, pp.\  143--156. PMLR, 2017.

\bibitem[Liotet et~al.(2021)Liotet, Venneri, and Restelli]{liotet2021learning}
Liotet, P., Venneri, E., and Restelli, M.
\newblock Learning a belief representation for delayed reinforcement learning.
\newblock In \emph{2021 International Joint Conference on Neural Networks
  (IJCNN)}, pp.\  1--8. IEEE, 2021.

\bibitem[Mahmood et~al.(2018)Mahmood, Korenkevych, Komer, and
  Bergstra]{mahmood2018setting}
Mahmood, A.~R., Korenkevych, D., Komer, B.~J., and Bergstra, J.
\newblock Setting up a reinforcement learning task with a real-world robot.
\newblock In \emph{2018 IEEE/RSJ International Conference on Intelligent Robots
  and Systems (IROS)}, pp.\  4635--4640. IEEE, 2018.

\bibitem[Metelli et~al.(2020)Metelli, Mazzolini, Bisi, Sabbioni, and
  Restelli]{metelli2020control}
Metelli, A.~M., Mazzolini, F., Bisi, L., Sabbioni, L., and Restelli, M.
\newblock Control frequency adaptation via action persistence in batch
  reinforcement learning.
\newblock In \emph{International Conference on Machine Learning}, pp.\
  6862--6873. PMLR, 2020.

\bibitem[Mysore et~al.(2021)Mysore, Mabsout, Mancuso, and
  Saenko]{mysore2021regularizing}
Mysore, S., Mabsout, B., Mancuso, R., and Saenko, K.
\newblock Regularizing action policies for smooth control with reinforcement
  learning.
\newblock In \emph{2021 IEEE International Conference on Robotics and
  Automation (ICRA)}, pp.\  1810--1816. IEEE, 2021.

\bibitem[Nair \& Hinton(2010)Nair and Hinton]{nair2010rectified}
Nair, V. and Hinton, G.~E.
\newblock Rectified linear units improve restricted boltzmann machines.
\newblock In \emph{Icml}, 2010.

\bibitem[Osa et~al.(2018)Osa, Pajarinen, Neumann, Bagnell, Abbeel, and
  Peters]{osa2018algorithmic}
Osa, T., Pajarinen, J., Neumann, G., Bagnell, J.~A., Abbeel, P., and Peters, J.
\newblock An algorithmic perspective on imitation learning.
\newblock \emph{arXiv preprint arXiv:1811.06711}, 2018.

\bibitem[Papamakarios et~al.(2017)Papamakarios, Pavlakou, and
  Murray]{papamakarios2017masked}
Papamakarios, G., Pavlakou, T., and Murray, I.
\newblock Masked autoregressive flow for density estimation.
\newblock \emph{arXiv preprint arXiv:1705.07057}, 2017.

\bibitem[Puterman(2014)]{puterman2014markov}
Puterman, M.~L.
\newblock \emph{Markov decision processes: discrete stochastic dynamic
  programming}.
\newblock John Wiley \& Sons, 2014.

\bibitem[Rachelson \& Lagoudakis(2010)Rachelson and
  Lagoudakis]{rachelson2010locality}
Rachelson, E. and Lagoudakis, M.~G.
\newblock On the locality of action domination in sequential decision making.
\newblock 2010.

\bibitem[Riva et~al.(2021)Riva, Bisi, Liotet, Sabbioni, Vittori, Pinciroli,
  Trapletti, and Restelli]{riva2021learning}
Riva, A., Bisi, L., Liotet, P., Sabbioni, L., Vittori, E., Pinciroli, M.,
  Trapletti, M., and Restelli, M.
\newblock Learning fx trading strategies with fqi and persistent actions.
\newblock 2021.

\bibitem[Ross \& Bagnell(2010)Ross and Bagnell]{ross2010efficient}
Ross, S. and Bagnell, D.
\newblock Efficient reductions for imitation learning.
\newblock In \emph{Proceedings of the thirteenth international conference on
  artificial intelligence and statistics}, pp.\  661--668. JMLR Workshop and
  Conference Proceedings, 2010.

\bibitem[Ross et~al.(2011)Ross, Gordon, and Bagnell]{ross2011reduction}
Ross, S., Gordon, G., and Bagnell, D.
\newblock A reduction of imitation learning and structured prediction to
  no-regret online learning.
\newblock In \emph{Proceedings of the fourteenth international conference on
  artificial intelligence and statistics}, pp.\  627--635. JMLR Workshop and
  Conference Proceedings, 2011.

\bibitem[Schuitema et~al.(2010)Schuitema, Bu{\c{s}}oniu, Babu{\v{s}}ka, and
  Jonker]{schuitema2010control}
Schuitema, E., Bu{\c{s}}oniu, L., Babu{\v{s}}ka, R., and Jonker, P.
\newblock Control delay in reinforcement learning for real-time dynamic
  systems: a memoryless approach.
\newblock In \emph{2010 IEEE/RSJ International Conference on Intelligent Robots
  and Systems}, pp.\  3226--3231. IEEE, 2010.

\bibitem[Schulman et~al.(2015{\natexlab{a}})Schulman, Levine, Abbeel, Jordan,
  and Moritz]{schulman2015trust}
Schulman, J., Levine, S., Abbeel, P., Jordan, M., and Moritz, P.
\newblock Trust region policy optimization.
\newblock In \emph{International conference on machine learning}, pp.\
  1889--1897. PMLR, 2015{\natexlab{a}}.

\bibitem[Schulman et~al.(2015{\natexlab{b}})Schulman, Moritz, Levine, Jordan,
  and Abbeel]{schulman2015high}
Schulman, J., Moritz, P., Levine, S., Jordan, M., and Abbeel, P.
\newblock High-dimensional continuous control using generalized advantage
  estimation.
\newblock \emph{arXiv preprint arXiv:1506.02438}, 2015{\natexlab{b}}.

\bibitem[Sutton \& Barto(2018)Sutton and Barto]{sutton2018reinforcement}
Sutton, R.~S. and Barto, A.~G.
\newblock \emph{Reinforcement learning: An introduction}.
\newblock MIT press, 2018.

\bibitem[Sutton et~al.(1999)Sutton, Precup, and Singh]{sutton1999between}
Sutton, R.~S., Precup, D., and Singh, S.
\newblock Between mdps and semi-mdps: A framework for temporal abstraction in
  reinforcement learning.
\newblock \emph{Artificial intelligence}, 112\penalty0 (1-2):\penalty0
  181--211, 1999.

\bibitem[Todorov et~al.(2012)Todorov, Erez, and Tassa]{todorov2012mujoco}
Todorov, E., Erez, T., and Tassa, Y.
\newblock Mujoco: A physics engine for model-based control.
\newblock In \emph{2012 IEEE/RSJ International Conference on Intelligent Robots
  and Systems}, pp.\  5026--5033. IEEE, 2012.

\bibitem[Villani(2009)]{villani2009optimal}
Villani, C.
\newblock \emph{Optimal transport: old and new}, volume 338.
\newblock Springer, 2009.

\bibitem[Walsh et~al.(2009)]{walsh2009learning}
Walsh, T.~J. et~al.
\newblock Learning and planning in environments with delayed feedback.
\newblock \emph{Autonomous Agents and Multi-Agent Systems}, 18\penalty0
  (1):\penalty0 83, 2009.

\bibitem[Wilcox(1993)]{wilcox1993effect}
Wilcox, J.~W.
\newblock The effect of transaction costs and delay on performance drag.
\newblock \emph{Financial Analysts Journal}, 49\penalty0 (2):\penalty0 45--54,
  1993.

\bibitem[Xu et~al.(2020)Xu, Li, and Yu]{xu2020error}
Xu, T., Li, Z., and Yu, Y.
\newblock Error bounds of imitating policies and environments.
\newblock \emph{Advances in Neural Information Processing Systems}, 33, 2020.

\end{thebibliography}
\bibliographystyle{icml2021}

\normalsize
\appendix
\onecolumn
\section{General Results}
% In this section we recall general results on bounds involving the Wasserstein distance which we use in the proofs of the main results of this paper.

% The first result tell us that we can bounds the difference in expected value by the 
\subsection{Bounds involving the Wasserstein distance}
\begin{prop}
\label{pp:exp_wass}
    Let X,Y be two random variables on $\mathbb{R}$ with distribution $\pi_0,\pi_1$ respectively. Then,
    \begin{align*}
        \left\vert \E[X]-\E[Y]  \right\vert\leq \wass(\pi_0\Vert \pi_1).
    \end{align*}
\end{prop}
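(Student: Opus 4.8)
The plan is to invoke the dual (Kantorovich--Rubinstein) characterization of the $1$-Wasserstein distance given in the preliminaries, namely
\begin{align*}
    \wass(\pi_0\Vert\pi_1)=\sup_{\left\Vert f\right\Vert_L\leq 1}\left\vert \int_{\mathbb R} f(\omega)(\pi_0-\pi_1)(d\omega)\right\vert,
\end{align*}
and simply to exhibit one admissible test function whose associated integral equals $\E[X]-\E[Y]$. The natural candidate is the identity map $f(\omega)=\omega$. First I would check that $f$ is $1$-LC: for all $\omega,\omega'\in\mathbb R$ we have $\dist_{\mathbb R}(f(\omega),f(\omega'))=\lvert\omega-\omega'\rvert=\dist_{\mathbb R}(\omega,\omega')$, so $\left\Vert f\right\Vert_L=1\le 1$ and $f$ is feasible in the supremum above.

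Next I would identify the integral of this particular $f$ against the signed measure $\pi_0-\pi_1$ with the difference of expectations. Since $X\sim\pi_0$ and $Y\sim\pi_1$, by the definition of expectation we have $\int_{\mathbb R}\omega\,\pi_0(d\omega)=\E[X]$ and $\int_{\mathbb R}\omega\,\pi_1(d\omega)=\E[Y]$, hence
\begin{align*}
    \left\vert\int_{\mathbb R}\omega\,(\pi_0-\pi_1)(d\omega)\right\vert=\left\vert\E[X]-\E[Y]\right\vert.
\end{align*}
Because the supremum defining $\wass$ is taken over \emph{all} $1$-LC functions, it dominates the value attained at the single choice $f=\mathrm{id}$, which yields the claimed inequality $\lvert\E[X]-\E[Y]\rvert\le\wass(\pi_0\Vert\pi_1)$ immediately.

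I do not expect a genuine obstacle here, as the argument is essentially a one-line application of the dual formulation. The only point requiring a word of care is integrability: the manipulations presuppose that $\E[X]$ and $\E[Y]$ are finite, which is implicitly assumed when the statement speaks of $\E[X]$ and $\E[Y]$; should one wish to be fully rigorous in the general (e.g. unbounded-support) case, one notes that the right-hand side $\wass(\pi_0\Vert\pi_1)$ being finite already forces the relevant first moments to be comparable, so the inequality holds with the usual conventions. This proposition is a small self-contained lemma whose sole purpose is to convert a bound on a Wasserstein distance into a bound on a difference of means, a step used repeatedly in the later Lipschitz-based estimates.
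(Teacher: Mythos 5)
Your proposal is correct and follows essentially the same route as the paper: both proofs plug the identity map $f(\omega)=\omega$, which is $1$-LC, into the dual definition of $\wass$ and read off the inequality. The only cosmetic difference is that you exploit the absolute value already present inside the supremum, whereas the paper bounds $\E[X]-\E[Y]$ and $\E[Y]-\E[X]$ separately via symmetry; your added remark on integrability is a reasonable, if unneeded, precaution.
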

\begin{proof}
One has 
    \begin{align*}
        \E[X]-\E[Y] = \int_{\mathbb{R}} x(\pi_0(x)-\pi_1(x))dx\leq \wass(\pi_0\Vert \pi_1),
    \end{align*}
    since $x\mapsto x$ is 1-LC.
    The same holds for $\E[Y]-\E[X]$, since the Wasserstein distance is symmetric.
\end{proof}

The next result asserts that if one applies a $L$-LC function to two random variables, one gets two random variables with distribution whose Wasserstein distance is bounded by the original Wasserstein distance multiplied by a factor $L$.
\begin{prop}\label{pp:wass_lip_g}
    Let $g:\Omega\rightarrow \mathbb{R}$ be an $L$-LC function and $\pi_0,\pi_1$ two probability measures over the metric space $\Omega$.
    Note $g_{\pi}$ the distribution of the random variable $g(X)$ where $X$ is distributed according to $\pi$.
    Then,
    \begin{align*}
        \wass(g_{\pi_0}\Vert g_{\pi_1})\leq L \wass(\pi_0 \Vert \pi_1).
    \end{align*}
\end{prop}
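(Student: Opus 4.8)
The plan is to derive this directly from the Kantorovich--Rubinstein dual representation of the Wasserstein distance used throughout the paper, exploiting the elementary fact that precomposing a $1$-LC test function with the $L$-LC map $g$ produces an $L$-LC function. First I would write the dual form for the pushforward measures, which live on $\mathbb{R}$:
\begin{align*}
    \wass(g_{\pi_0}\Vert g_{\pi_1}) = \sup_{\left\Vert h\right\Vert_L\le 1} \left\vert \int_{\mathbb{R}} h(y)\,(g_{\pi_0}-g_{\pi_1})(dy) \right\vert,
\end{align*}
where the supremum ranges over $1$-LC functions $h:\mathbb{R}\to\mathbb{R}$ (here $\mathbb{R}$ carries its Euclidean distance, so $\left\Vert h\right\Vert_L\le 1$ means $\vert h(y)-h(y')\vert\le\vert y-y'\vert$).

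Next I would invoke the change-of-variables (pushforward) identity $\int_{\mathbb{R}} h(y)\, g_\pi(dy) = \int_\Omega (h\circ g)(x)\,\pi(dx)$, which holds by definition of $g_\pi$ as the law of $g(X)$ with $X\sim\pi$. Applying it to both $\pi_0$ and $\pi_1$ transports the integral over $\mathbb{R}$ back to an integral over $\Omega$:
\begin{align*}
    \int_{\mathbb{R}} h(y)\,(g_{\pi_0}-g_{\pi_1})(dy) = \int_\Omega (h\circ g)(x)\,(\pi_0-\pi_1)(dx).
\end{align*}

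The key step is the observation that $h\circ g$ is $L$-LC whenever $\left\Vert h\right\Vert_L\le 1$: for all $x,x'\in\Omega$,
\begin{align*}
    \vert h(g(x))-h(g(x'))\vert \le \left\Vert h\right\Vert_L\,\vert g(x)-g(x')\vert \le L\,\dist_\Omega(x,x').
\end{align*}
Hence $f\coloneqq (h\circ g)/L$ satisfies $\left\Vert f\right\Vert_L\le 1$, so bounding the integral by the supremum over all such $1$-LC test functions on $\Omega$ gives $\bigl\vert \int_\Omega (h\circ g)\,(\pi_0-\pi_1)(dx)\bigr\vert \le L\,\wass(\pi_0\Vert\pi_1)$. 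Taking the supremum over $h$ on the left-hand side yields the claim.

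The argument is entirely routine and there is no serious obstacle; the only points meriting a moment of care are the measurability and integrability bookkeeping behind the pushforward identity, and the observation that we only need the \emph{inequality} direction of the dual bound, so the fact that the composed functions $h\circ g$ form a proper subclass of the $1$-LC functions on $\Omega$ (after rescaling by $1/L$) is harmless.
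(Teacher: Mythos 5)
Your proposal is correct and follows essentially the same route as the paper's proof: the Kantorovich--Rubinstein dual form, the pushforward identity to pull the test integral back to $\Omega$, and the observation that $h\circ g$ is $L$-LC so the rescaled function $(h\circ g)/L$ is an admissible $1$-LC test function. Your write-up is in fact slightly more careful than the paper's, which leaves the rescaling by $1/L$ implicit when passing from the $L$-Lipschitzness of $f\circ g$ to the final bound.
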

\begin{proof}
    By definition of Wasserstein distance,
    \begin{align*}
        \wass(g_{\pi_0}\Vert g_{\pi_1}) 
        &= \sup_{\left\Vert f\right\Vert_L\leq 1}\left\vert\int_{\mathbb R} f(x)(g_{\pi_0}(x)-g_{\pi_1}(x)) dx\right\vert
        \\
        &=\sup_{\left\Vert f\right\Vert_L\leq 1}
        \left\vert\int_{\mathbb R} f(x)g_{\pi_0}(x) dx-\int_{\mathbb R} f(x)g_{\pi_1}(x) dx\right\vert.
    \end{align*}  
    We can then use the definitions of $g_{\pi_0}$ and $g_{\pi_1}$ to rewrite the previous formula in terms of expected values
    \begin{align*}
        \wass(g_{\pi_0} \Vert g_{\pi_1}) 
        &= \sup_{\left\Vert f\right\Vert_L\leq 1}
        \left\vert\int_{\mathbb R} f(g(x))\pi_0(x) dx-\int_{\mathbb R} f(g(x)) \pi_1(x) dx\right\vert
        \\
        &= \sup_{\left\Vert f\right\Vert_L\leq 1}
        \left\vert \int_{\mathbb R} f(g(x))(\pi_0(x) - \pi_1(x)) dx\right\vert.
    \end{align*}  
    Since $g$ is $L_g$-LC by assumption, the composition $f(g(x))$ is still $L_g$-LC, so 
    \begin{align*}
        \wass(g_{\pi_0}\Vert g_{\pi_1}) 
        &\leq L_g \wass(\pi_0 \Vert \pi_1).
    \end{align*}  
\end{proof}

\begin{prop}\label{pp:exp_q_bound}
    Consider an MDP with $\pi$ a policy such that its state-action value function is Lipschitz with constant $L_{Q}$ in the second argument, i.e. it satisfies, for all $s\in\Ss$ and $a,a' \in \As$
    \begin{align*}
    \left\vert Q^{\pi}(s,a)-Q^{\pi}(s,a')\right\vert\le L_{Q} \dist_\As(a,a'),
    \end{align*}
    then, for every couple of probability distributions $\eta,\nu$ over $\As$, one has that 
    \begin{align*}
        \left\vert \E_{\substack{X\sim \eta\\Y\sim \nu}}[Q^{\pi}(s,X)-Q^{\pi}(s,Y)] \right\vert 
        \leq L_{Q} \wass(\eta(\cdot)\Vert\nu(\cdot)).
    \end{align*}
\end{prop}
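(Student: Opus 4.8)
The plan is to reduce the statement to the two propositions just established, which were tailored for exactly this purpose. For a fixed $s\in\Ss$, define the function $g:\As\rightarrow\mathbb{R}$ by $g(a)=Q^{\pi}(s,a)$. The hypothesis of Lipschitzness in the second argument says precisely that $g$ is $L_{Q}$-LC. Since $X\sim\eta$ and $Y\sim\nu$ take values in $\As$, the random variables $g(X)$ and $g(Y)$ are real-valued, so both \Cref{pp:exp_wass} and \Cref{pp:wass_lip_g} apply directly.

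First I would apply \Cref{pp:exp_wass} to the real random variables $g(X)$ and $g(Y)$, whose laws I denote $g_\eta$ and $g_\nu$ (the pushforwards of $\eta,\nu$ through $g$), obtaining
\begin{align*}
    \left\vert \E_{X\sim\eta}[g(X)] - \E_{Y\sim\nu}[g(Y)] \right\vert \le \wass(g_\eta \Vert g_\nu).
\end{align*}
Then I would apply \Cref{pp:wass_lip_g} with the $L_{Q}$-LC map $g$ to bound the right-hand side by $L_{Q}\,\wass(\eta\Vert\nu)$. Chaining the two inequalities and rewriting $g(X)=Q^{\pi}(s,X)$ and $g(Y)=Q^{\pi}(s,Y)$ yields the claim. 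Alternatively, one can argue in a single line straight from the definition of the Wasserstein distance: the function $a\mapsto Q^{\pi}(s,a)/L_{Q}$ has Lipschitz semi-norm at most $1$, hence it is admissible in the supremum defining $\wass(\eta\Vert\nu)$, so $\frac{1}{L_{Q}}\left\vert \int_\As Q^{\pi}(s,a)(\eta-\nu)(da)\right\vert \le \wass(\eta\Vert\nu)$, which is the desired bound after multiplying through by $L_{Q}$.

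I do not expect any genuine obstacle here: the result is an immediate corollary of the two lemmas just proved, and the only point requiring a moment's care is to observe that, for \emph{fixed} $s$, the map $g$ indeed inherits the one-sided Lipschitz constant $L_{Q}$ (Lipschitzness in $a$ alone suffices, and no smoothness in $s$ is invoked). The bound then holds for every $s$ separately, as stated.
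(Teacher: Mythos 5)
Your proof is correct and follows essentially the same route as the paper: apply \Cref{pp:exp_wass} to the pushforward laws of $Q^{\pi}(s,X)$ and $Q^{\pi}(s,Y)$, then \Cref{pp:wass_lip_g} with the $L_Q$-Lipschitz map $Q^{\pi}(s,\cdot)$ for fixed $s$. Your alternative one-line argument via the admissibility of $a\mapsto Q^{\pi}(s,a)/L_Q$ in the Wasserstein supremum is also valid, and in fact slightly more direct than the paper's two-lemma chaining.
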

\begin{proof}
    We note $g_{\eta}$ and $g_{\nu}$ the respective distributions of $Q^{\pi}(s,X)$ and $Q^{\pi}(s,Y)$.
    First of all, we can apply \Cref{pp:exp_wass} to say that
    \begin{align*}
        \left\vert \E_{\substack{X\sim \eta\\Y\sim \nu}}[Q^{\pi}(s,X)-Q^{\pi}(s,Y)] \right\vert 
        \leq  \wass(g_{\eta}\Vert g_{\nu}).
    \end{align*}
    For a fixed $s\in\Ss$, the two random variables $Q^{\pi}(s,X)$ and $Q^{\pi}(s,Y)$ can be seen as the application of the $L_{Q}$-Lipschitz function
    $Q^{\pi}(s,\cdot):\ \As\to \mathbb R$
    to $X$ and $Y$, respectively.
    This satisfies the assumptions of \Cref{pp:wass_lip_g}, therefore 
    \begin{align*}
         \wass(Q^{\pi}(s,\eta(\cdot))\Vert Q^{\pi}(s,\nu(\cdot)))
         \leq L_{Q} \wass(\eta(\cdot)\Vert \nu(\cdot)).
    \end{align*}
\end{proof}

\begin{prop}
\label{pp:lip_g}
    Consider an $L_P$ transition function $p$ and an $L_\pi$ policy $\pi$ in some MDP $\mathcal{M}$. 
    Then, for any $f:~S\to~\mathbb R$ which is $1$-LC, we have that the function
    $g:~S\to~\mathbb R$ given by
    \begin{align*}
        g(s)\coloneqq \int_{\Ss} f(s')\int_A p(s'\vert s,a)\pi(a\vert s)~da~ds'
    \end{align*}
    is Lipschitz with constant $L_p(1+L_\pi)$
\end{prop}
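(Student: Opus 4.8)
The plan is to recognize $g$ as a one-step expectation of $f$ under the next-state distribution induced by $\pi$, and then reduce the Lipschitzness of $g$ to a Wasserstein bound between two such induced distributions. Concretely, writing $T^\pi(\cdot\vert s)\coloneqq \int_\As p(\cdot\vert s,a)\pi(a\vert s)\,da$ for the law of the next state when starting at $s$ and following $\pi$ for one step, we have $g(s)=\int_\Ss f(s')\,T^\pi(s'\vert s)\,ds'$. Since $f$ is $1$-LC it is an admissible test function in the Kantorovich--Rubinstein formula for $\wass$ (the definition given in \Cref{sec:prelim}), so for any $s_1,s_2$,
\begin{align*}
    |g(s_1)-g(s_2)| = \left\vert \int_\Ss f(s')\bigl(T^\pi(s'\vert s_1)-T^\pi(s'\vert s_2)\bigr)\,ds' \right\vert \le \wass\bigl(T^\pi(\cdot\vert s_1),T^\pi(\cdot\vert s_2)\bigr).
\end{align*}
It therefore suffices to bound the Wasserstein distance between the two induced next-state distributions by $L_P(1+L_\pi)\dist_\Ss(s_1,s_2)$.

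To do this I would introduce the intermediate measure $\nu(\cdot)=\int_\As p(\cdot\vert s_2,a)\pi(a\vert s_1)\,da$ and split by the triangle inequality for $\wass$ into two contributions: one isolating the change of the transition kernel's state argument (comparing $\int_\As p(\cdot\vert s_1,a)\pi(a\vert s_1)\,da$ with $\nu$, which share the mixing weights $\pi(\cdot\vert s_1)$), and one isolating the change of the action law (comparing $\nu$ with $\int_\As p(\cdot\vert s_2,a)\pi(a\vert s_2)\,da$, which share the component kernels $p(\cdot\vert s_2,\cdot)$).

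For the first contribution I would use the joint convexity of $\wass$ under a common mixing distribution, which follows by gluing the per-action optimal couplings of $p(\cdot\vert s_1,a)$ and $p(\cdot\vert s_2,a)$ into a coupling of the two mixtures; this yields $\wass\le \int_\As \wass\bigl(p(\cdot\vert s_1,a),p(\cdot\vert s_2,a)\bigr)\pi(a\vert s_1)\,da \le L_P\dist_\Ss(s_1,s_2)$ by the Lipschitz-MDP assumption (since $\dist_\As(a,a)=0$). For the second contribution I would use that pushing two action distributions through a common Wasserstein-Lipschitz kernel contracts their distance: the Lipschitz-MDP assumption makes $a\mapsto p(\cdot\vert s_2,a)$ an $L_P$-LC map into the space of probability measures over $\Ss$ equipped with $\wass$, hence for every $1$-LC $h$ the map $a\mapsto \int_\Ss h(s')p(s'\vert s_2,a)\,ds'$ is $L_P$-LC on $\As$; applying the dual formula on $\As$ then gives $\wass\bigl(\nu,\int_\As p(\cdot\vert s_2,a)\pi(a\vert s_2)\,da\bigr)\le L_P\,\wass\bigl(\pi(\cdot\vert s_1),\pi(\cdot\vert s_2)\bigr)\le L_P L_\pi\dist_\Ss(s_1,s_2)$ by the Lipschitz-policy assumption. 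Summing the two bounds produces $L_P(1+L_\pi)\dist_\Ss(s_1,s_2)$, as desired.

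The main obstacle is establishing the two auxiliary Wasserstein facts cleanly — convexity under a shared mixing measure, and contraction through a Lipschitz kernel. Both are standard but must be argued carefully: the first via a coupling/gluing construction, and the second via the dual representation, where the key check is that composing a $1$-LC test function with the kernel's conditional expectation yields an $L_P$-LC function on $\As$, so that after rescaling it is admissible in the Wasserstein dual on the action space. Everything else is bookkeeping once $g$ is rewritten as an expectation under $T^\pi(\cdot\vert s)$ and the triangle inequality is applied.
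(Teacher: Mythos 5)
Your proof is correct and takes essentially the same route as the paper's: the identical add-and-subtract decomposition through a hybrid mixture (yours changes the kernel's state argument first while the paper changes the policy's, which is immaterial by symmetry), with the kernel-shift term bounded by $L_P$ and the policy-shift term bounded by $L_P L_\pi$ via the same key observation that $a\mapsto\int_{\Ss}f(s')p(s'\vert s,a)\,ds'$ is $L_P$-LC and hence admissible, after rescaling, in the Wasserstein dual over $\As$. The only difference is presentational: you lift the argument to a triangle inequality in $\wass$ between the induced next-state laws, which forces you to invoke convexity of $\wass$ under a common mixing measure via a coupling/gluing construction, whereas the paper keeps the fixed test function $f$ throughout and obtains the same two bounds directly -- and note that your convexity step also follows immediately from the dual representation (bound the integral against any fixed $1$-LC test function pointwise in $a$, then integrate), so the coupling machinery can be dispensed with entirely.
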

\begin{proof}
    Let $s,z\in S$, one has
    \begin{align}
        \lvert g(s)-g(z) \rvert &=\left\vert \int_{\Ss} f(s')\int_A p(s'\vert s,a)\pi(a\vert s)-p(s'\vert z,a)\pi(a\vert z)~da~ds'\right\vert \nonumber
        \\
        &\leq \left\vert \int_{\Ss} f(s')\int_A p(s'\vert s,a)\left(\pi(a\vert s)-\pi(a\vert z)\right)~da~ds'\right\vert \nonumber
        \\
        &\quad + \left\vert \int_{\Ss} f(s')\int_A \left(p(s'\vert s,a)-p(s'\vert z,a)\right)\pi(a\vert z)~da~ds'\right\vert \label{eq:tri_ineg_term}
        \\
        &\leq \underbrace{\left\vert \int_A \left(\pi(a\vert s)-\pi(a\vert z)\right)\int_{\Ss} f(s')p(s'\vert s,a) ~ds'~da\right\vert}_{A} \nonumber
        \\
        &\quad + \underbrace{\left\vert \int_A \pi(a\vert z) \int_{\Ss} f(s') \left(p(s'\vert s,a)-p(s'\vert z,a)\right)~ds'~da\right\vert}_{B} \label{eq:fubini_a_s},
    \end{align}
    where we add and remove the quantity $p(s'\vert s,a)\pi(a\vert z)$ in \Cref{eq:tri_ineg_term} and use Fubini's theorem in \Cref{eq:fubini_a_s}.
    
    By Lipschitzness of $p$, we have that $a\mapsto \int_{\Ss} f(s')p(s'\vert s,a)~ds'$
    is $L_p$-LC. Thus, 
    \begin{align*}
        A &=  L_P \left\vert \int_A \left(\pi(a\vert s)-\pi(a\vert z)\right)\frac{\int_{\Ss} f(s')p(s'\vert s,a) ~ds'}{L_P}~da\right\vert
        \\
        &\leq L_\pi L_P \dist_{\Ss}(s,z)
    \end{align*}
    
    For the second term, again, by Lipschitzness of $p$, we have
    \begin{align*}
        B &\leq \left\vert \int_A L_P d_{\Ss}(s,z) \pi(a\vert z) ~da\right\vert
        \\
        &= L_P d_{\Ss}(s,z).
    \end{align*}
    
    Overall, 
    \begin{align*}
        \lvert g(s)-g(z) \rvert &\leq L_p(L_\pi+1)\dist_{\Ss}(s,z).
    \end{align*}
\end{proof}

\subsection{Bounding $\sigma_b^{\rho}$}
We provide two bounds for $\sigma_b^\rho = \E_{\substack{x'\sim d^{\pidel}_\rho\\ s,s'\sim b(\cdot\vert x')}}\left[ \dist_{\Ss}(s,s') \right]$ with $\rho$ a distribution on $\Ss$. The first uses the assumption that the state space in is $\mathbb{R}^n$ and is equipped with the Euclidean norm while the second assumes that the MDP is TLC.
\begin{lem}[Euclidean bound]
\label{lem:bound_sigma_eucl}
    Consider an MDP such that $\Ss\subset\mathbb{R}^n$ is equipped with the Euclidean norm. Then one has
    \begin{align*}
        \sigma_b^\rho \leq \E_{x'\sim d_\rho^{\tilde \pi}(\cdot)}\left[\sqrt{ \Var_{s\sim b(\cdot|x')}(s|x')}\right].
    \end{align*}
\end{lem}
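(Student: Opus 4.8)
The plan is to prove the bound pointwise in the augmented state $x'$ and then integrate against $d_\rho^{\tilde\pi}$. Both $\sigma_b^\rho$ and the right-hand side of the claim are written as the same outer expectation $\E_{x'\sim d_\rho^{\tilde\pi}}[\,\cdot\,]$ of an inner quantity, so by monotonicity of the expectation it suffices to establish, for every fixed $x'$, the pointwise inequality
\begin{align*}
\E_{s,s'\sim b(\cdot\vert x')}\left[\dist_{\Ss}(s,s')\right] \le \sqrt{\Var_{s\sim b(\cdot\vert x')}(s\vert x')}.
\end{align*}
The outer integration over $x'\sim d_\rho^{\tilde\pi}$ then closes the argument immediately, so no concavity of the square root is invoked at the outer level.

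Fixing $x'$ and writing $b=b(\cdot\vert x')$, I would exploit the Euclidean structure of $\Ss\subset\mathbb R^n$, where $\dist_{\Ss}(s,s')=\lVert s-s'\rVert_2$. Applying Jensen's inequality to the concave map $t\mapsto\sqrt t$ — equivalently, $\E[W]\le\sqrt{\E[W^2]}$ with $W=\lVert s-s'\rVert_2$ — gives
\begin{align*}
\E_{s,s'\sim b}\left[\lVert s-s'\rVert_2\right] \le \sqrt{\E_{s,s'\sim b}\left[\lVert s-s'\rVert_2^2\right]},
\end{align*}
which reduces the pointwise claim to controlling the expected squared pairwise distance under two independent draws from the belief.

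The core step is to identify the expected squared pairwise distance with the belief variance. Centering both samples at the mean $\bar s=\E_{s\sim b}[s]$ and expanding $\lVert s-s'\rVert_2^2=\lVert(s-\bar s)-(s'-\bar s)\rVert_2^2$, the independence and identical distribution of $s$ and $s'$ make the cross term $\langle s-\bar s,\,s'-\bar s\rangle$ vanish in expectation, since each centered factor has mean zero. This leaves the expected squared pairwise distance expressed through $\E_{s\sim b}[\lVert s-\bar s\rVert_2^2]=\Var_{s\sim b}(s\vert x')$, which substituted under the square root yields $\sqrt{\Var_{s\sim b}(s\vert x')}$ and hence the desired pointwise bound.

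The step I expect to be the main obstacle is this final identification, because it is precisely the i.i.d. cancellation of the cross term that fixes the multiplicative factor in front of the variance, and therefore the constant in the stated inequality. All of the delicacy is in matching the convention adopted for the variance of the vector-valued belief, so that the square root of the expected squared pairwise distance lands exactly on $\sqrt{\Var_{s\sim b}(s\vert x')}$; by contrast, the reduction to the pointwise statement and the single application of Jensen's inequality are routine and introduce no hidden constants.
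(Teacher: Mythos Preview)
Your proposal is correct and follows exactly the paper's route: apply Jensen's inequality to pass from $\E_{s,s'}\lVert s-s'\rVert_2$ to $\sqrt{\E_{s,s'}\lVert s-s'\rVert_2^2}$ inside the outer expectation over $x'$, and then use the i.i.d.\ identity $\E_{s,s'\sim b}\lVert s-s'\rVert_2^2 = 2\,\Var_{s\sim b}(s\mid x')$. Your caution about the constant is warranted: the paper's own proof indeed produces a factor $\sqrt{2}$ (and the downstream corollary carries a factor $2$), so the discrepancy you flag is real rather than a matter of convention.
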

\begin{proof}
    We derive the following results which intermediate steps are detailed after.
    \begin{align}
        \sigma_b^\rho 
        &= \E_{\substack{x'\sim d^{\pidel}_\rho\\ s,s'\sim b(\cdot\vert x')}}\left[ \dist_{\Ss}(s,s') \right]\nonumber
        \\
        &= \E_{\substack{x'\sim d^{\pidel}_\rho\\ s,s'\sim b(\cdot\vert x')}}\left[ \sqrt{(s'-s)^2} \right]\label{eq:def_euclid}
        \\
        &= \E_{x'\sim d^{\pidel}_\rho}\sqrt{\E_{s,s'\sim b(\cdot\vert x')} \left[ (s'-s)^2 \right]}\label{eq:jensen_var}.
    \end{align}
    \Cref{eq:def_euclid} follows from the definition of the Euclidean norm and \Cref{eq:jensen_var} is obtained by applying Jensen's inequality. 
    To conclude, since $s'$ and $s$ are i.i.d., one has that 
    $\E_{s,s'\sim b(\cdot\vert x')} \left[ (s'-s)^2 \right] =
    2\mathbb Var_{s\sim b(\cdot|x')}[s]$.
\end{proof}

The following proposition is involved in the proof of the bound of $\sigma_b^\rho$ when the MDP is TLC.
\begin{prop}
\label{pp:tlc_delay}
    Consider an $L_T$-TLC MDP. Consider any augmented state $x=(s_1,a_1,\dots,a_\delay)\in\Ss\times\As^\delay$ for a given $\delay\in\mathbb{N}$.
    Then 
    \begin{align*}
        \wass\left(b(\cdot\vert x)\Vert \delta_{s_1} \right)\leq \delay L_T
    \end{align*}
\end{prop}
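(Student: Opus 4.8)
The plan is to decompose the belief into a chain of single-step transitions and apply the triangle inequality for the Wasserstein distance. First I would define intermediate distributions $b_0,b_1,\dots,b_\delay$ where $b_0=\delta_{s_1}$ and $b_k$ is the marginal law of the state reached after applying the first $k$ actions $a_1,\dots,a_k$ starting from $s_1$, i.e. $b_k(\cdot)=\int_{\Ss} p(\cdot\vert s,a_k)\,b_{k-1}(ds)$. Unrolling this recursion against the definition of the belief given in \Cref{sec:delays} shows that $b_\delay=b(\cdot\vert x)$, since the belief is exactly the law of the state obtained by pushing $\delta_{s_1}$ forward through the $\delay$ transitions driven by $a_1,\dots,a_\delay$.

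Since the Wasserstein distance satisfies the triangle inequality, I would then write
$$\wass\bigl(b(\cdot\vert x)\Vert \delta_{s_1}\bigr)=\wass(b_\delay\Vert b_0)\le \sum_{k=1}^\delay \wass(b_k\Vert b_{k-1}),$$
so it suffices to prove the per-step bound $\wass(b_k\Vert b_{k-1})\le L_T$ for each $k$. I would establish this via the Kantorovich--Rubinstein dual. For any $1$-LC test function $f$, linearity gives $\int f\,db_k=\int_{\Ss}\bigl(\int_{\Ss} f(s')\,p(s'\vert s,a_k)\,ds'\bigr)b_{k-1}(ds)$ and $\int f\,db_{k-1}=\int_{\Ss} f(s)\,b_{k-1}(ds)$, so their difference equals
$$\int f\,d(b_k-b_{k-1})=\int_{\Ss}\left[\int_{\Ss} f(s')\,(p(\cdot\vert s,a_k)-\delta_s)(ds')\right]b_{k-1}(ds).$$
For each fixed $s$ the inner bracket is bounded in absolute value by $\wass(p(\cdot\vert s,a_k)\Vert \delta_s)\le L_T$ using the $L_T$-TLC assumption, uniformly in $s$, so the whole expression is at most $L_T$ in absolute value. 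Taking the supremum over $\Vert f\Vert_L\le 1$ yields $\wass(b_k\Vert b_{k-1})\le L_T$, and summing over the $\delay$ steps gives the claimed bound $\delay L_T$.

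The main obstacle is precisely this per-step bound, and the delicate point is that $b_k$ and $b_{k-1}$ are mixtures over the \emph{same} mixing measure $b_{k-1}$ — of the kernels $p(\cdot\vert s,a_k)$ and the Diracs $\delta_s$ respectively — so the uniform TLC bound must be transported through the mixture rather than applied to a single pair of distributions. The dual computation above handles this cleanly; alternatively one could appeal to the joint convexity of the Wasserstein distance (gluing the optimal couplings of $p(\cdot\vert s,a_k)$ with $\delta_s$ over $s\sim b_{k-1}$), which gives the same conclusion $\wass(b_k\Vert b_{k-1})\le \int_{\Ss}\wass(p(\cdot\vert s,a_k)\Vert\delta_s)\,b_{k-1}(ds)\le L_T$. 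Either route reduces the statement to the elementary telescoping inequality, so once the per-step estimate is in hand the result follows immediately.
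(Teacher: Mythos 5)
Your proof is correct and takes essentially the same route as the paper's: the paper establishes the bound by induction on $\delay$, conditioning on the second visited state $s_2$ and adding/subtracting $\delta_{s_2}$ inside the Kantorovich dual, which is precisely your telescoping decomposition written recursively, with the same key mechanism of transporting the uniform one-step TLC bound through a mixture over the intermediate state. Your explicit intermediate marginals $b_k$ and the triangle inequality merely unroll that induction, so no genuinely new idea is involved.
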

\begin{proof}
    We proceed by induction. The case case d=0 is true since the current state is known exactly without delay. 
    The case d=1 is true by the assumption of $L_T$-TLC.
    Assume that the statement is true for $d\in\mathbb{N}$, then
    \begin{align}
        \wass\left(b(\cdot\vert x)\Vert \delta_{s_1} \right) 
        &= \sup_{\left\Vert f\right\Vert_L\leq 1}\left\vert\int_{\Ss} f(s')\left( b(s' \vert x) -\delta_{s_1}(s')\right)ds' \right\vert\nonumber
        \\
        &= \sup_{\left\Vert f\right\Vert_L\leq 1}\left\vert\int_{\Ss} p(s_2\vert s_1,a_1) \int_{\Ss} f(s')\left( b(s' \vert s_2,a_2,\cdots,a_\delay) -\delta_{s_1}(s')\right)ds' \right\vert
        \label{eq:cond_s2}
        \\
        &= \sup_{\left\Vert f\right\Vert_L\leq 1}\left\vert\int_{\Ss} p(s_2\vert s_1,a_1) \int_{\Ss} f(s')\left( b(s' \vert s_2,a_2,\cdots,a_\delay) -\delta_{s_2}(s')\right)ds' \right.\nonumber
        \\
        &\quad + \left.\int_{\Ss} p(s_2\vert s_1,a_1) \int_{\Ss} f(s')\left( \delta_{s_2}(s) -\delta_{s_1}(s')\right)ds' \right\vert
        \label{eq:add_remove_delta}
        \\
        &\leq \underbrace{\sup_{\left\Vert f\right\Vert_L\leq 1}\left\vert\int_{\Ss} p(s_2\vert s_1,a_1) \int_{\Ss} f(s')\left( b(s' \vert s_2,a_2,\cdots,a_\delay) -\delta_{s_2}(s')\right)ds'  \right\vert}_A\nonumber
        \\
        &\quad + \underbrace{\wass\left(P(\vert s_1,a_1)\Vert \delta_{s_1}\right)}_B
        \nonumber,
    \end{align}
    where \eqref{eq:cond_s2} hols by conditioning on the second visited state $s_2$ and \Cref{eq:add_remove_delta} holds by adding and subtracting $\delta_{s_2}$.
    The reader may have recognized that the statement at $\delay-1$ can be used to bound $A$ while $B$ can be bounded with the $L_T$-TLC assumption. Therefore 
    \begin{align*}
        \wass\left(b(\cdot\vert x)\Vert \delta_{s_1} \right) \leq dL_T,
    \end{align*}
    and the statement holds for any $\delay\in\mathbb{N}$.
\end{proof}

\begin{lem}[Time-Lipschitz bound]
\label{lem:bound_sigma_tlc}
Consider an $L_T$-Lipschitz MDP with delay $\delay$. Then, one has
    \begin{align*}
        \sigma_b^\rho  \leq 2\delay L_T
    \end{align*}
\end{lem}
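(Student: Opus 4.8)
The plan is to reduce the quantity $\sigma_b^\rho$ to the pointwise estimate already established in Proposition~\ref{pp:tlc_delay}, which controls how far the belief $b(\cdot\vert x')$ can drift from the Dirac centered at the last observed state. Since $\sigma_b^\rho$ is an expectation over $x'\sim d^{\pidel}_\rho$ of an inner expectation over two i.i.d.\ draws $s,s'\sim b(\cdot\vert x')$, it suffices to bound the inner expectation uniformly in $x'$ and then carry the bound through the outer expectation.

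First I would fix an augmented state $x'$ and denote by $\bar s$ the last observed state contained in $x'$ (the component $s_1$ in the notation of Proposition~\ref{pp:tlc_delay}). For this fixed $x'$, I would split the inner expectation by the triangle inequality, inserting $\bar s$: since $\dist_{\Ss}(s,s')\le \dist_{\Ss}(s,\bar s)+\dist_{\Ss}(\bar s,s')$ and $s,s'$ are i.i.d.\ draws from $b(\cdot\vert x')$, taking expectations gives
\[
\E_{s,s'\sim b(\cdot\vert x')}[\dist_{\Ss}(s,s')]\le 2\,\E_{s\sim b(\cdot\vert x')}[\dist_{\Ss}(s,\bar s)].
\]
Next I would relate the remaining term to a Wasserstein distance. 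The map $s\mapsto \dist_{\Ss}(s,\bar s)$ is $1$-LC and integrates to zero against $\delta_{\bar s}$, so by the Kantorovich--Rubinstein definition of the $L_1$-Wasserstein distance,
\[
\E_{s\sim b(\cdot\vert x')}[\dist_{\Ss}(s,\bar s)]=\int_{\Ss} \dist_{\Ss}(s,\bar s)\bigl(b(\cdot\vert x')-\delta_{\bar s}\bigr)(ds)\le \wass\!\bigl(b(\cdot\vert x')\Vert \delta_{\bar s}\bigr).
\]
Proposition~\ref{pp:tlc_delay} then bounds this last quantity by $\delay L_T$, so the inner expectation is at most $2\delay L_T$ regardless of $x'$. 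Taking the outer expectation over $x'\sim d^{\pidel}_\rho$ preserves the bound and yields $\sigma_b^\rho\le 2\delay L_T$.

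I do not expect a serious obstacle: the argument is a clean chaining of the triangle inequality with the dual characterization of $\wass$, and the substantive content is imported wholesale from Proposition~\ref{pp:tlc_delay}. The only point requiring care is the choice of the $1$-LC test function $\dist_{\Ss}(\cdot,\bar s)$ together with the observation that it vanishes at $\bar s$, which is precisely what converts the expected distance-to-$\bar s$ into the Wasserstein distance already bounded by $\delay L_T$.
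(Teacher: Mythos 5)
Your proposal is correct and follows essentially the same route as the paper's proof: the triangle inequality through the last observed state $s_x$, the observation that $\dist_{\Ss}(\cdot,s_x)$ is $1$-LC and vanishes at $s_x$ so the expected distance is dominated by $\wass\bigl(b(\cdot\vert x')\Vert \delta_{s_x}\bigr)$, and then \Cref{pp:tlc_delay} to conclude. The only (immaterial) difference is organizational: you bound the inner expectation uniformly in $x'$ before taking the outer expectation, whereas the paper carries the expectation over $x'\sim d^{\pidel}_\rho$ through each step.
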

\begin{proof}
    Call $s_x$ the state contained in $x$. By triangular inequality, one has
    \begin{align}
        \sigma_b^\rho
        &= \E_{\substack{x\sim d^{\pidel}_\rho\\ s,s'\sim b(\cdot\vert x)}}\left[ \dist_{\Ss}(s,s') \right]\nonumber
        \\
        &\leq \E_{\substack{x\sim d^{\pidel}_\rho\\ s\sim b(\cdot\vert x')}}\left[ \dist_{\Ss}(s,s_x) \right] + \E_{\substack{x\sim d^{\pidel}_\rho\\ s'\sim b(\cdot\vert x)}}\left[ \dist_{\Ss}(s_x,s') \right]\nonumber
        \\
        &= 2\E_{x\sim d^{\pidel}_\rho}\left[\int_{\Ss} d_{\Ss}(s,s_x) b(s\vert x)~ds\right] \nonumber
        \\
        &= 2\E_{x\sim d^{\pidel}_\rho}\left[\int_{\Ss} \dist_{\Ss}(s,s_x) \left(b(s\vert x) - \delta_{s_x}(s) \right)~ds\right]\label{eq:null_term}
        \\
        &\leq 2 \E_{x\sim d^{\pidel}_\rho}\left[\wass\left(b(\cdot\vert x)\Vert \delta_{s_x} \right)\right]\label{eq:wass_recognise},
    \end{align}
    where \Cref{eq:null_term} holds because $\int_{\Ss}d_{\Ss}(s,s_x)\delta_{s_x}(s)ds=0$ and \Cref{eq:wass_recognise} follows by recognizing the Wasserstein distance.
    One can then use \Cref{pp:tlc_delay} on each of the two terms to conclude.
\end{proof}

\section{Bounding the Value Function via Performance Difference Lemma}
\label{app:proofs}

\perfdifflem*
\begin{proof}
We first prove the result for integer delay $d\in\mathbb N$.
    We start by adding and subtracting $\E_{\substack{s\sim b(\cdot\vert x')\\a\sim \pidel(\cdot\vert x')}}\left[r(s,a) + \gamma \E_{s'\sim p(\cdot\vert s,a)}[V^{\pi_E}(s')]\right]$ to the quantity of interest $I(x)=\E_{s\sim b(\cdot\vert x)}[V^{\pi_E}(s)] - V^{\pidel}(x)$.
    
    This yields:
    \begin{align*}
        % \E_{s\sim b(\cdot\vert x)}[V^{\pi_E}(s)] - V^{\pidel}(x)
        I(x)
        &= \underbrace{\E_{s\sim b(\cdot\vert x)}[V^{\pi_E}(s)] - \E_{\substack{s\sim b(\cdot\vert x')\\a\sim \pidel(\cdot\vert x')}}\left[r(s,a) + \gamma \E_{s'\sim p(\cdot\vert s,a)}[V^{\pi_E}(s')]\right]}_A
        \\
        &\quad + \underbrace{\E_{\substack{s\sim b(\cdot\vert x')\\a\sim \pidel(\cdot\vert x')}}\left[r(s,a) + \gamma \E_{s'\sim p(\cdot\vert s,a)}[V^{\pi_E}(s')]\right] - V^{\pidel}(x)}_B.
    \end{align*}
    
    The first term is 
    \begin{align*}
        A = \E_{s\sim b(\cdot\vert x)}[V^{\pi_E}(s)] - \E_{\substack{s\sim b(\cdot\vert x')\\a\sim \pidel(\cdot\vert x')}}\left[Q^{\pi_E}(s,a)\right].
    \end{align*}
    For the second term, note that $V^{\pidel}(x) = \E_{\substack{s\sim b(\cdot\vert x')\\a\sim \pidel(\cdot\vert x')}}\left[r(s,a)\right]+ \gamma \E_{\substack{x'\sim \pdel(\cdot\vert x,a)\\a\sim \pidel(\cdot\vert x')}}[V^{\pidel}(x')]$. Therefore,
    \begin{align*}
        B &= \gamma\E_{\substack{s\sim b(\cdot\vert x')\\a\sim \pidel(\cdot\vert x')}}\left[ \E_{s'\sim p(\cdot\vert s,a)}[V^{\pi_E}(s')]\right] - \gamma \E_{\substack{x'\sim \pdel(\cdot\vert x,a)\\a\sim \pidel(\cdot\vert x')}}[V^{\pidel}(x')].
    \end{align*}
    By observing that $\int_{\Ss} b(s\vert x)p(s'\vert s,a)ds = \int_{\Xs}\tilde p(x'\vert x,a)b(s'\vert x')dx'$, that is, compute the current state with the belief then the next current state is equivalent to computing the next extended state and then the next current state with the belief. Thus,
    \begin{align*}
        B &= \gamma \E_{\substack{x'\sim \pdel(\cdot\vert x,a)\\a\sim \pidel(\cdot\vert x')}}\left[\E_{s\sim b(\cdot\vert x')}[V^{\pi_E}(s')] - V^{\pidel}(x') \right]
        \\
        &= \gamma \E_{\substack{x'\sim \pdel(\cdot\vert x,a)\\a\sim \pidel(\cdot\vert x')}}\left[I(x')\right],
    \end{align*}
    where we have recognised the quantity of interest $I$ taken at another extended state. One can thus iterate as in the original performance difference lemma to get
    \begin{align}
        I(x) &= \sum_{t=0}^\infty \gamma^t \mathbb{E}^{\pidel}\left[\left. \E_{s\sim b(\cdot\vert x_t)}[V^{\pi_E}(s)] - \E_{\substack{s\sim b(\cdot\vert x_t)\\a\sim \pidel(\cdot\vert x_t)}}[Q^{\pi_E}(s,a)] \right\vert x_0=x \right]\nonumber
        \\
        &= \frac{1}{1-\gamma}\E_{x'\sim d^{\pidel}_{x}}\left[ \E_{s\sim b(\cdot\vert x')}[V^{\pi_E}(s)] - \E_{\substack{s\sim b(\cdot\vert x')\\a\sim \pidel(\cdot\vert x')}}[Q^{\pi_E}(s,a)] \right],\label{eq:state_distrib_reco}
    \end{align}
    where \Cref{eq:state_distrib_reco} is obtained by recognising the discounted state distribution under policy $\pidel$. This completes the proof.
    
    We now assume non-integer delay, setting $\delay\in(0,1)$ but the proof for $\delay\in\mathbb R$ follows easily. 
    The proof is the same as above except for substituting $b$ with $b_\delay$. One step which might not be evident is that
    \begin{align*}
        \int_{\Ss} b_\delay(s_{t+\delay}\vert x_t)p(s_{t+1+\delay}\vert s_{t+\delay},a)~ds_{t+\delay} = \int_{\Xs} \tilde p(x_{t+1}\vert x_t,a)b_\delay(s_{t+1+\delay}\vert x_{t+1})~dx_{t+1}.
    \end{align*}
    This is true because, for $x_t=(s_t,a_{t-1}), x_{t+1}=(s_{t+1},a_t)\in\Xs$,
    \begin{align}
    \int_{\Ss} b_\delay(s_{t+\delay}\vert x_t)&p(s_{t+1+\delay}| s_{t+\delay},a)~ds_{t+\delay}\nonumber
    \\
    &=\int_{\Ss} b_\delay(s_{t+\delay}\vert x_t) \int_{\Ss}  b_\delay(s_{t+1+\delay}\vert s_{t+1},a) b_{1-\delay}(s_{t+1}\vert s_{t+\delay},a)~ds_{t+1}~ds_{t+\delay} \label{eq:use_def_p_non_int}
    \\
    &= \int_{\Ss} b_\delay(s_{t+1+\delay}\vert s_{t+1},a)\int_{\Ss} b_\delay(s_{t+\delay}\vert x_t)b_{1-\delay}(s_{t+1}\vert s_{t+\delay},a)~ds_{t+\delay}~ds_{t+1} \nonumber
    \\
    &= \int_{\Ss} b_\delay(s_{t+1+\delay}\vert s_{t+1},a)\int_{\Ss} b_\delay(s_{t+\delay}\vert s_t,a_{t-1})b_{1-\delay}(s_{t+1}\vert s_{t+\delay},a)~ds_{t+\delay}~ds_{t+1} \nonumber
    \\
    &= \int_{\As} \int_{\Ss} b_\delay(s_{t+1+\delay}|s_{t+1},a_{t}) \delta_{a}(a_{t})\int_{\Ss} b_\delay(s_{t+\delay}\vert s_t,a_{t-1}) b_{1-\delay}(s_{t+1}\vert s_{t+\delay},a)~ds_{t+\delay}~ds_{t+1}~da_{t}\label{eq:def_aug_mdp_p_non_int}
    \\
    &= \int_\Xs b_\delay(s_{t+1+\delay}\vert x_{t+1})\tilde p(x_{t+1}\vert x_t, a)~dx_{t+1}\nonumber
    \end{align}
    where \Cref{eq:use_def_p_non_int} holds by replacing the transition $p$ as in \Cref{eq:def_p_non_int} and \Cref{eq:def_aug_mdp_p_non_int} holds by definition of the transition in the augmented MDP.
\end{proof}

\perfdiffbound*
\begin{proof}
    We first prove the result for integer delay $\delay\in\mathbb N$.
    The first step in this proof is to use the results of \Cref{lem:perfdimlem}, which yields, for any $x \in \Xs$:
    \begin{align*}
        \E_{s\sim b(\cdot\vert x)}&[V^{\pi_E}(s)] - V^{\pidida}(x) 
        \leq \frac{1}{1-\gamma} \E_{x'\sim d^{\pidida}_x} \left[\underbrace{ \E_{s\sim b(\cdot\vert x')}[V^{\pi_E}(s)] -  \E_{\substack{s\sim b(\cdot\vert x')\\a\sim \pidida(\cdot\vert x')}}[Q^{\pi_E}(s,a)]}_A \right].
    \end{align*}
    We consider the term inside the expectation over $x'$, called $A$. We reformulate this term to highlight how we then apply \Cref{pp:exp_q_bound}.
    \begin{align}
        A 
        &= \E_{s\sim b(\cdot\vert x')}\left[\E_{\substack{a_1\sim\pi_E(\cdot\vert s) \\a_2\sim \pidida(\cdot\vert x')}}\left[ Q^{\pi_E}(s,a_1) -  Q^{\pi_E}(s,a_2) \right]\right]\nonumber
        \\
        &\leq L_Q \E_{s\sim b(\cdot\vert x')} \left[ \wass(\pi_E(\cdot\vert s)
        \Vert \pidida(\cdot\vert x') ) \right].\label{eq:partial_res_v}
    \end{align}
    % By replacing $L_Q$ for its value, one gets
    % \begin{align}
    %     \E_{s\sim b(\cdot\vert x)}&[V^{\pi_E}(s)] - V^{\pidel}(x) 
    %     \leq \frac{L_r}{(1-\gamma)(1-\gamma L_P(1+L_\pi))} \E_{s\sim b(\cdot\vert x')} \left[ \mathcal{W}(\pi_E(\cdot\vert s)
    %     \Vert \pidel(\cdot\vert x') ) \right]. \label{eq:partial_res_v}
    % \end{align}
    To finish the proof, it remains to bound below $\sigma_b^x\coloneq \E_{\substack{x'\sim d^{\pidida}_x\\ s,s'\sim b(\cdot\vert x')}}\left[ \dist_{\Ss}(s,s') \right]$ 
    with
    $\E_{s\sim b(\cdot\vert x')} \left[ \wass(\pi_E(\cdot\vert s) \Vert \pidida(\cdot\vert x') ) \right]$.
    
    One has
    \begin{align}
        % \E_{s\sim b(\cdot\vert x')} \left[ \wass(\pi_E(\cdot\vert s) \Vert \pidel(\cdot\vert x') ) \right]
         \wass(\pi_E(\cdot\vert s) \Vert \pidida(\cdot\vert x') ) 
        &= \sup_{\left\Vert f\right\Vert_L\leq 1}\left\vert\int_{\As} f(a)(\pi_E(a\vert s)-\pidida(a\vert x')) da\right\vert \nonumber
        \\
        &= \sup_{\left\Vert f\right\Vert_L\leq 1}\left\vert\int_{\As} f(a)(\pi_E(a\vert s)-\int_{\Ss}\pi_E(a\vert s')b(s'\vert x')ds') da\right\vert \label{eq:def_pidel}
        \\
        &\leq  \int_{\Ss} b(s'\vert x') \sup_{\left\Vert f\right\Vert_L\leq 1}\left\vert\int_{\As} f(a)(\pi_E(a\vert s)-\pi_E(a\vert s')) da \right\vert ds' \label{eq:fub_ton_sigma}
        \\
        &\leq \int_{\Ss} b(s'\vert x') \wass(\pi_E(\cdot\vert s) \Vert \pi_E(\cdot\vert s') ) ds'\nonumber
        \\
        &\leq L_\pi \int_{\Ss} b(s'\vert x')  \dist_{\Ss}(s,s') ds'
        \label{eq:lip_expert_pol}
        ,
    \end{align}
    where \Cref{eq:def_pidel} holds by definition of the optimal imitated delayed policy (see \Cref{eq:belief_pol}), \Cref{eq:fub_ton_sigma} holds by application of Fubini-Tonelli's theorem and \Cref{eq:lip_expert_pol} holds by Lipschitzness of the expert undelayed policy.
    By re-injecting this result into \Cref{eq:partial_res_v} we get the desired result.
    
    We now assume non-integer delay, setting $\delay\in(0,1)$ but the proof for $\delay\in\mathbb R$ follows easily. 
    In this case, the optimal policy learnt by DIDA is
    \begin{align}
        \pidida(a\vert x) = \int_{\Ss} b_\delay(s\vert x) \pi_E (a\vert s) ds.\label{eq:belief_pol_non_int}.
    \end{align}
    The proof remains the same except for the first step, where the performance difference lemma is of course replaced by its non-integer delay version just discussed.
\end{proof}

\perfdiffboundeucl*
\begin{proof}
    The result follows from application of \Cref{lem:bound_sigma_eucl} to \Cref{th:perf_diff_bound}.
\end{proof}

\perfdiffboundtlc*
\begin{proof}
    The result follows from application of \Cref{lem:bound_sigma_tlc} to \Cref{th:perf_diff_bound}.
\end{proof}

\subsection{Lower Bounding the Value Function}
\label{app:tightness}
\lbtight*
% \begin{thm}\label{th:tight}
%     For every $L_\pi>0$, $L_{Q}>0$, there exists an MDP such that the optimal policy is $L_\pi$-LC, its state action value function is $L_{Q}-$LC in the second argument, but for any delayed policy $\pidel$, and any $x\in\Xs$
%     \begin{align*}
%         \E_{s\sim b(\cdot\vert x)}[V^*(s)] - V^{\pidel}(x) &\geq
%         \frac{\sqrt 2}{\sqrt \pi}\frac{L_\pi L_{Q}}{1-\gamma}
%         \E_{x'\sim d_x^{\tilde \pi}(\cdot)}\left[\sqrt{ \Var_{s\sim b(\cdot|x')}(s|x')}\right],
%     \end{align*}
%     where $V^*$ is the value function of the optimal undelayed policy.
% \end{thm}
\begin{proof}
    We consider an MDP $\mathcal{M} = (\Ss, \As, p, R, \mu, \gamma)$ such that $\Ss=\mathbb R$ and $\As = \mathbb R$, its state transition is given by $s_{t+1}=s_t+\frac{a}{L_\pi}+\varepsilon_t$, where $\varepsilon_t \overset{i.i.d.}\sim \mathcal N (0,\sigma^2)$. The transition distribution can be written as 
    \begin{align*}
        p(s'|s,a)=\mathcal N\left(s';\  s+\frac{a}{L_\pi},\sigma^2\right).
    \end{align*}
    Defining $L_r:=L_Q L_\pi$, the reward is given by $r(s,a)=-L_r\left\vert s+\frac{a}{L_\pi}\right\vert$.
    Note that the reward is always negative, yet the policy $\pi^*(\cdot|s)=\delta_{-L_\pi s}(s')$ always yields $0$ reward and is therefore optimal. Clearly, its value function satisfies $V^*(s)=0$ for every $s\in \Ss$. 
    Its $Q$ function is 
    \begin{align*}
        Q^*(s,a)&=-L_r\left\vert s+\frac{a}{L_\pi}\right\vert + \gamma \int_{\mathbb R} V^*(s')\ p(s'\vert s,a)\ ds'
        \\
        &= -L_r\left\vert s+\frac{a}{L_\pi}\right\vert
        \\
        &=-L_{Q}\left\vert L_\pi s+a\right\vert.
    \end{align*}
    Therefore, $Q^*$ is indeed $L_Q$-LC in the second argument.
    
    Consider now any $\delay$-delayed policy $\pidel$.
    At each time step $t$, the current state $s_t$ can be decomposed in this way
    \begin{align*}
        s_t=\underbrace{s_{t-\delay-1}+\sum_{\tau=t-\delay-1}^{t-1} \frac{a_\tau}{L_\pi}}_{\eqqcolon \phi(x_t)}+\underbrace{\sum_{\tau=t-\delay-1}^{t-1} \varepsilon_\tau}_{\epsilon},
    \end{align*}
    where the first quantity is a deterministic function of the extended state, while the second is distributed under $\mathcal N(0,d\sigma^2)$. 
    The expected value of the instantaneous reward is then given by
    \begin{align*}
        \E[r(x_t,a)]
        &= -L_r\E\left[\left\vert s_t+\frac{a}{L_\pi}\right\vert\right]
        \\
        &=-L_r\E\left[\left\vert\phi(x_t)+\frac{a}{L_\pi}+\sum_{\tau=t-\delay-1}^{t-1} \varepsilon_\tau\right\vert\right].
    \end{align*}
    The function $f:y\mapsto \E\left[\left\vert \mathcal N(y,\sigma)\right\vert\right]$ has minimum at 0  by symmetry of the normal distribution. Its value is the mean of a half-normal distribution, that is  $\E\left[\left\vert \mathcal N(0,\sigma)\right\vert\right]=\frac{\sqrt 2}{\sqrt \pi}\sigma$.
    Therefore
    \begin{align*}
        \E[r(x_t,a)]
        &\leq -L_r\frac{\sqrt 2}{\sqrt \pi}\sqrt{d}\sigma,
    \end{align*}
    which implies
    \begin{align*}
        V^{\pidel}(x_t)
        &\leq-\frac{L_r}{1-\gamma}\frac{\sqrt 2}{\sqrt \pi}\sqrt{d}\sigma
        \\
        &= -\frac{L_Q L_\pi}{1-\gamma}\frac{\sqrt 2}{\sqrt \pi}\sqrt{ \Var_{s\sim b(\cdot|x')}(s|x')}
        ,
    \end{align*}
    by noticing that $L_r=L_Q L_\pi$ and that
    $\mathbb Var_{s\sim b(\cdot|x)}(s|x)=d\sigma^2$. Note that $\sqrt{ \Var_{s\sim b(\cdot|x)}(s|x)}$ is the same for each $x\in\Xs$ so we can replace it with $\E_{x'\sim d_x^{\tilde \pi}(\cdot)}\left[\sqrt{ \Var_{s\sim b(\cdot|x')}(s|x')}\right]$ to have a result more similar to \Cref{th:perf_diff_bound}.
    
    Recalling that the optimal value function had 
    value 0 at any state concludes the proof.
\end{proof}

\section{Bounding the Value Function via the State Distribution}
\label{app:bound_state_dist}
For this bound, we wish to use the difference in state distribution between the delayed and the undelayed expert to grasp their difference. 
Obviously, they do not share the same state space since the DMDP is handled by augmenting the state. 
However, it is possible to define a unifying framework with the following object.
In this section, we consider $\delay\in\mathbb N$.

\begin{defi}[$\delay^{\text{th}}$-order MDP]
    Given an MDP $\mathcal{M} = (\Ss, \As, p, R, \mu)$, we define its correspondent $\delay^{\text{th}}$-order MDP, $\delay\in\mathbb N$, as the MDP $\widebar{\mathcal{M}}$ (a \say{$\ \widebar{ }\ $} will be used to refer to an element of an $\delay^{\text{th}}$-order MDP) with
    \begin{itemize}
        \item State space $\widebar{\Ss} = \Ss^{\delay+1}\times \As^\delay$, whose states $\widebar s$ are composed of the last $\delay+1$ states and $\delay$ actions of the MDP, namely $\widebar{s}= (s_{1},a_{1},s_{2},a_{2},\dots,s_{\delay},a_{\delay},s_{\delay+1})$.
        \item Unchanged action space $\As$.
        \item Reward function $R$, overwriting the undelayed notation but using as input a $\delay^{\text{th}}$-order state, such that\\ $R(\widebar{s}_t,a)=R(s_{t},a)$. The overwriting is justified by this equality.
        \item Transition function $\widebar{p}$ given by
        \begin{align}
            \widebar{p}(\widebar{s}'\vert \widebar s,a)=p(s_{\delay+1}'\vert s_{\delay+1},a)\delta_a(a_\delay')\prod_{i=1}^\delay \delta_{s_{i+1}}(s_i')\prod_{i=1}^{d-1}\delta_{a_{i+1}}(a_i'),\label{eq:p_order_d}
        \end{align}
        where
        $\widebar s = (s_1,a_1,\dots,a_\delay,s_{\delay+1})$ and $\widebar s' = (s_1',a_1',\dots,a_\delay',s_{\delay+1}')$ .
        \item The initial state distribution $\widebar \mu$ is such that the inital action queue $x$ is distributed as in the delayed MDP while the states of the states queue are distributed as
        $s_{i+1}\sim p(\cdot|s_i,a_i)$.
    \end{itemize}
\end{defi}

This definition is inspired from the concept of $\delay^{\text{th}}$-order Markov chain.
In this definition, $s_{\delay+1}$ is intended to be the current state, while $s_1$ is the $\delay$-delayed state, $a_{1:\delay}$ is the action queue.
Therefore, from the state of the $\delay^{\text{th}}$-order MDP, one can either extract an extended state and query a delayed policy or the extract the current state and query an undelayed policy.
This implies that one can define the state distribution on the $\delay^{\text{th}}$-order MDP for both an undelayed and a delayed policy. We overwrite the notations and write respectively  $d^\pi$ and $d^{\widebar \pi}$ the distribution of state on the $\delay^{\text{th}}$-order MDP. 
The fact that the distribution concerns a $\delay^{\text{th}}$-order state and not a state from the underlying MDP or DMDP will be clear from the notation of the variable which is sampled from this distribution. For instance, in $s\sim d^\pi$, $d^\pi$ is a distribution defined by applying $\pi$ on the undelayed MDP while $\tilde s\sim d^\pi$ assumes a distribution under $\pi$ on the $\delay^{\text{th}}$-order MDP.

Before deriving bounds on the $\delay^{\text{th}}$-order state probability distribution, we first prove a Lipschitzness result concerning the $\delay^{\text{th}}$-order MDP which be used in later proofs.

\begin{lem}
\label{lem:g_f_lip}
    Consider an $(L_P,L_r)$-LC MDP and its $\delay^{\text{th}}$-order MDP counterpart. Let $f:\widebar{\Ss}\to\mathbb{R}$ such that $\lVert f\rVert_L \leq 1$ w.r.t. to the L2-norm on $\widebar{\Ss}$.
    Then, the function 
    \begin{align*}
        g_f :\widebar{\Ss}\times\As&\longrightarrow\mathbb{R}\\
        (\widebar s,a)&\longmapsto \int_{\widebar{\Ss}} f(\widebar{s}')\widebar{p}(\widebar s'\vert \widebar s,a)~d\widebar{s}',
    \end{align*}
    is $L_P$-LC w.r.t. the second variable.
\end{lem}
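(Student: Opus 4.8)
The plan is to first make $g_f$ explicit by integrating out the Dirac masses in $\widebar p$, and then isolate the two distinct channels through which the action $a$ influences the result. Carrying out the integration in \Cref{eq:p_order_d}, every coordinate of the next order-$d$ state $\widebar s'$ except the last is pinned down deterministically: the state queue shifts ($s_i'=s_{i+1}$), the action queue shifts ($a_i'=a_{i+1}$ for $i\le \delay-1$), the freshly applied action is appended ($a_\delay'=a$), and only the new current state is random, drawn from $p(\cdot\vert s_{\delay+1},a)$. Writing $q(\widebar s)$ for the deterministic shifted prefix $(s_2,a_2,\dots,s_{\delay+1})$, this reduces $g_f$ to a single integral over $\Ss$,
\begin{align*}
    g_f(\widebar s,a) = \int_{\Ss} f\bigl(q(\widebar s),a,s'\bigr)\, p(s'\vert s_{\delay+1},a)\,ds'.
\end{align*}
The key structural observation is that $a$ appears both as the appended action slot of the argument of $f$ and inside the transition kernel that governs $s'$; the whole difficulty is to disentangle these two effects cleanly.

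To do so I would fix $\widebar s$ and two actions $a,a'$ and add and subtract the mixed quantity $\int_\Ss f(q(\widebar s),a,s')\,p(s'\vert s_{\delay+1},a')\,ds'$, splitting $g_f(\widebar s,a)-g_f(\widebar s,a')$ into a term $\mathrm{(I)}$ with a common integrand but different kernels and a term $\mathrm{(II)}$ with a common kernel but the appended slot changing from $a$ to $a'$. For $\mathrm{(I)}$, the map $s'\mapsto f(q(\widebar s),a,s')$ is the restriction of $f$ to its last coordinate and hence is $1$-LC; therefore $\mathrm{(I)}$ is bounded by $\wass\bigl(p(\cdot\vert s_{\delay+1},a),p(\cdot\vert s_{\delay+1},a')\bigr)$, which by the $(L_P,L_r)$-LC assumption applied with equal first arguments (so $\dist_\Ss=0$) is at most $L_P\,\dist_\As(a,a')$. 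This Wasserstein-Lipschitzness step is the crux of the argument and is exactly what produces the constant $L_P$.

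The remaining term $\mathrm{(II)}$ is controlled directly by $\lVert f\rVert_L\le1$: its two arguments differ only in the single appended-action coordinate, so $\lvert f(q(\widebar s),a,s')-f(q(\widebar s),a',s')\rvert\le \dist_\As(a,a')$ pointwise, and integrating against the probability kernel $p(\cdot\vert s_{\delay+1},a')$ leaves this bound unchanged. I expect the main obstacle to be precisely this bookkeeping of the dual dependence of $g_f$ on $a$, together with verifying that each one-coordinate restriction of $f$ stays $1$-LC so that the Wasserstein-Lipschitzness of $p$ can be invoked; once $\mathrm{(I)}$ is handled this way, the transition-driven contribution gives the stated constant $L_P$ and establishes Lipschitzness of $g_f$ in its second argument.
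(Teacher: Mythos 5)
Your proposal is correct and takes essentially the same route as the paper's own proof: the same elimination of the Dirac masses in $\widebar{p}$, the same add-and-subtract of the mixed term (appended slot at $a$, kernel at $a'$), the same Wasserstein bound $\wass\bigl(p(\cdot\vert s_{\delay+1},a)\Vert p(\cdot\vert s_{\delay+1},a')\bigr)\le L_P\,\dist_\As(a,a')$ for the kernel-difference term, and the same $1$-LC bound for the slot term. One small caveat: your two terms together yield the constant $1+L_P$ rather than $L_P$, which in fact matches the paper's own proof and the way the lemma is later invoked in \Cref{th:state_distrib_bound} (the constant $L_P$ in the lemma statement itself is an understatement), so your closing remark that the result ``gives the stated constant $L_P$'' should be read as referring to term $\mathrm{(I)}$ alone.
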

\begin{proof}
    Let $\widebar s\in \widebar{\Ss}$ such that $\widebar s = (s_1,a_1,\dots,a_\delay,s_{\delay+1})$ and $a,b\in\As$. Then,
    \begin{align}
        \left\vert g_f(\widebar s,a)-g_f(\widebar{s},b)\right\vert 
        &= \left\vert  \int_{\widebar{\Ss}} f(\widebar{s}') \left(\widebar{p}(\widebar s'\vert \widebar s,a)-\widebar{p}(\widebar s'\vert \widebar s,b)\right)~d\widebar{s}'\right\vert \nonumber
        \\
        &= \left\vert  \int_{\Ss\times\As} f(s_2,a_2,\dots,s_{\delay+1},a_{\delay+1}',s_{\delay+1}') \right.
        \\
        &\quad\left.\vphantom{\int}\left({p}(s_{\delay+1}'\vert s_{\delay+1},a)\delta_a(a_{\delay+1}')-{p}(s_{\delay+1}'\vert s_{\delay+1},b)\delta_b(a_{\delay+1}')\right)~ds_{\delay+1}'~da_{\delay+1}'\right\vert \label{eq:int_dirac}
    \end{align}
    where in \Cref{eq:int_dirac} we integrate over the elements of $\widebar s$ fixed by the Dirac distributions of \Cref{eq:p_order_d} except for the last action. 
    Note that $h\coloneqq (s,a)\mapsto f(s_2,a_2,\dots,s_{\delay+1},a,s)$ is 1-LC because $f$ is 1-LC.
    We add and substract the quantity ${p}(s_{\delay+1}'\vert s_{\delay+1},b)\delta_a(a_{\delay+1})$ inside the integral to get
    \begin{align*}
        \left\vert g_f(\widebar s,a)-g_f(\widebar{s},b)\right\vert 
        &\leq \left\vert  \int_{\Ss\times\As}\delta_a(a_{\delay+1}') h(a_{\delay+1}',s_{\delay+1}') \left({p}(s_{\delay+1}'\vert s_{\delay+1},a)-{p}(s_{\delay+1}'\vert s_{\delay+1},b)\right)~ds_{\delay+1}'~da_{\delay+1}'\right\vert
        \\
        &\quad  + \left\vert  \int_{\Ss\times\As} h(a_{\delay+1}',s_{\delay+1}'){p}(s_{\delay+1}'\vert s_{\delay+1},b) \left(\delta_a(a_{\delay+1}')-\delta_b(a_{\delay+1}')\right)~ds_{\delay+1}'~da_{\delay+1}'\right\vert
        \\
        &\leq (L_P+1) \dist_{\As}(a,b),
    \end{align*}
    where the last inequality follows by integrating over $s_{\delay+1}'$ for the first integral and $a_{\delay+1}'$ for the second, before taking the supremum over functions $h$ and recognising the Wasserstein distance. 
\end{proof}

We can now prove a first important intermediary result which bounds the Wasserstein divergence in discounted state distribution in the $\delay^{\text{th}}$-order MDP between the undelayed and the delayed policy.

\begin{thm}
\label{th:state_distrib_bound}
    Consider an $(L_P,L_r)$-LC MDP $\mathcal{M}$ and its $(L_{\widebar{P}},L_r)$-LC $\delay^{\text{th}}$-order MDP counterpart $\widebar{\mathcal{M}}$.  Let $\pi_E$ be a $L_\pi$-LC undelayed policy and assume that $\gamma L_P(1+L_\pi)\le 1$. 
    Let $\pidida$ be a delayed policy as defined in \Cref{eq:belief_pol}. 
    Then, the two discounted state distributions $d_{\widebar{\mu}}^{\pi_E}, d_{\widebar{\mu}}^{\pidida}$ defined on $\widebar{\mathcal{M}}$ satisfy
    \begin{align*}
        \wass \left(d_{\widebar{\mu}}^{\pi_E} \Vert d_{\widebar{\mu}}^{\pidida}\right) \leq \gamma L_Q (1+L_{P})
        \sigma_b^{\mudel}
    \end{align*}
    where $\sigma_b^{\mudel} = \E_{\substack{x\sim d^{\pidida}_{\mudel}\\ s,s'\sim b(\cdot\vert x)}}\left[ \dist_{\Ss}(s,s') \right]$ and $d^{\pidida}_{\mudel}$ is defined on the DMDP.
\end{thm}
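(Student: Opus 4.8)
The plan is to bound the Wasserstein distance by testing against an arbitrary $1$-Lipschitz function $f:\widebar\Ss\to\mathbb R$ and reducing $(d^{\pi_E}_{\widebar\mu}-d^{\pidida}_{\widebar\mu})(f)$ to a one-step policy discrepancy via a performance-difference argument on the $\delay^{\text{th}}$-order MDP $\widebar{\mathcal M}$. Concretely, I would treat $f$ as a ($1$-Lipschitz) reward and introduce the associated value function $V_f^{\pi_E}$ of $\pi_E$ on $\widebar{\mathcal M}$, so that $\frac{1}{1-\gamma}d^{\pi}_{\widebar\mu}(f)$ is the expected return of reward $f$ under $\pi$. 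Applying the (standard, undelayed) performance difference lemma on $\widebar{\mathcal M}$ in the direction that keeps $\pi_E$ inside the value function gives
\begin{align*}
(d^{\pidida}_{\widebar\mu}-d^{\pi_E}_{\widebar\mu})(f) = \E_{\widebar s\sim d^{\pidida}_{\widebar\mu}}\Big[\E_{a\sim\pidida(\cdot\vert x)}[Q_f^{\pi_E}(\widebar s,a)] - \E_{a\sim\pi_E(\cdot\vert s_{\delay+1})}[Q_f^{\pi_E}(\widebar s,a)]\Big],
\end{align*}
where $x$ and $s_{\delay+1}$ denote the augmented state and the current state read off from $\widebar s$. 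Since $Q_f^{\pi_E}(\widebar s,a)=f(\widebar s)+\gamma\,g_{V_f^{\pi_E}}(\widebar s,a)$ with $g$ as in \Cref{lem:g_f_lip}, the term $f(\widebar s)$ is independent of $a$ and cancels, leaving a clean factor $\gamma$ multiplying the difference of the $g_{V_f^{\pi_E}}$ expectations.

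I would then bound this one-step term in two stages. First, \Cref{lem:g_f_lip} (applied to $V_f^{\pi_E}$, whose Lipschitz constant I control through the Lipschitz-MDP machinery of Rachelson on $\widebar{\mathcal M}$, using the hypothesis $\gamma L_P(1+L_\pi)\le1$) shows that $g_{V_f^{\pi_E}}(\widebar s,\cdot)$ is Lipschitz in the action with constant $(1+L_P)\lVert V_f^{\pi_E}\rVert_L$; feeding this into \Cref{pp:exp_q_bound} bounds the bracketed difference by $(1+L_P)\lVert V_f^{\pi_E}\rVert_L\,\wass(\pidida(\cdot\vert x)\Vert\pi_E(\cdot\vert s_{\delay+1}))$. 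Second, exactly as in the proof of \Cref{th:perf_diff_bound}, the definition of $\pidida$ as the belief-averaged expert (\Cref{eq:belief_pol}) together with the $L_\pi$-Lipschitzness of $\pi_E$ yields $\wass(\pidida(\cdot\vert x)\Vert\pi_E(\cdot\vert s_{\delay+1}))\le L_\pi\,\E_{s'\sim b(\cdot\vert x)}[\dist_\Ss(s_{\delay+1},s')]$. Collecting the accumulated discounted amplification $\lVert V_f^{\pi_E}\rVert_L\,L_\pi$ into the constant $L_Q$ produces the announced $\gamma L_Q(1+L_P)$ prefactor.

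The final, and conceptually most important, step is to recognise that on $\widebar{\mathcal M}$ the current state $s_{\delay+1}$ is, by construction of $\widebar p$ and $\widebar\mu$, generated from the augmented state $x$ through the true dynamics; hence under $d^{\pidida}_{\widebar\mu}$ the pair $(x,s_{\delay+1})$ has $x$-marginal equal to the DMDP occupancy $d^{\pidida}_{\mudel}$ and conditional law $s_{\delay+1}\mid x\sim b(\cdot\vert x)$. Taking the expectation of $\E_{s'\sim b(\cdot\vert x)}[\dist_\Ss(s_{\delay+1},s')]$ over this joint therefore reassembles exactly $\E_{x\sim d^{\pidida}_{\mudel},\,s,s'\sim b(\cdot\vert x)}[\dist_\Ss(s,s')]=\sigma_b^{\mudel}$. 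Taking the supremum over $1$-Lipschitz $f$ and using the symmetry of $\wass$ then gives the claim. I expect the two main obstacles to be: rigorously justifying this belief-marginal identity from the definition of $\widebar\mu$ and $\widebar p$ (this is precisely what lets a quantity living on $\widebar{\mathcal M}$ collapse onto the DMDP quantity $\sigma_b^{\mudel}$), and carefully tracking the Lipschitz constant of $V_f^{\pi_E}$ through the discounted recursion so that it combines with $L_\pi$ into $L_Q$ while keeping the action-Lipschitz factor $(1+L_P)$ from \Cref{lem:g_f_lip} separate.
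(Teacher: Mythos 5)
Your proposal is correct and lands on exactly the constant the paper derives, but it takes a genuinely different route. The paper works directly on the occupancy measures: it substitutes the Bellman-flow identity $d^{\pi}_{\widebar{\mu}}(\widebar s) = (1-\gamma)\widebar{\mu}(\widebar s) + \gamma\int_{\widebar{\Ss}}\widebar{p}^{\pi}(\widebar s\vert \widebar s')d^{\pi}_{\widebar{\mu}}(\widebar s')\,d\widebar s'$ into the Wasserstein supremum, adds and subtracts $\widebar{p}^{\pi_E}d^{\pidida}_{\widebar{\mu}}$, bounds the same-kernel term by $\gamma L_P(1+L_\pi)\wass(d^{\pi_E}_{\widebar{\mu}}\Vert d^{\pidida}_{\widebar{\mu}})$ via \Cref{pp:lip_g} and the policy-mismatch term by $\gamma L_\pi(1+L_P)\sigma_b^{\mudel}$ via \Cref{lem:g_f_lip}, and then solves the resulting self-referential inequality. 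You instead treat the $1$-Lipschitz test function $f$ as a reward, apply the undelayed performance-difference lemma on $\widebar{\mathcal M}$, and absorb the entire recursion into the single quantity $\lVert V_f^{\pi_E}\rVert_L \le 1/(1-\gamma L_P(1+L_\pi))$; your one-step term is then handled with the paper's own ingredients (\Cref{lem:g_f_lip} with the $(1+L_P)$ action-Lipschitz constant that its proof, as opposed to its statement, actually establishes; \Cref{pp:exp_q_bound}; the belief-averaging computation from the proof of \Cref{th:perf_diff_bound}; and the identity collapsing the $\widebar{\mathcal M}$-expectation onto $\sigma_b^{\mudel}$). On that last point you are in fact more careful than the paper: you state the joint-law claim (that under $d^{\pidida}_{\widebar{\mu}}$ the augmented-state marginal is $d^{\pidida}_{\mudel}$ and $s_{\delay+1}\mid x\sim b(\cdot\vert x)$), whereas the paper asserts only equality of the marginals of $s_{\delay+1}$, even though its own term-$B$ computation uses the joint form; the joint claim does hold by the construction of $\widebar{\mu}$ and $\widebar{p}$. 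Your PDL unrolling is the paper's contraction argument made explicit, and it carries a small technical advantage: rearranging $\wass \le \gamma L_P(1+L_\pi)\wass + c$ tacitly assumes the distance is finite, while your forward geometric-series bound does not. Two caveats you share with the paper: both arguments require the strict inequality $\gamma L_P(1+L_\pi)<1$ (the stated hypothesis allows equality, under which both bounds degenerate), and the label $L_Q$ in the final prefactor $\gamma L_Q(1+L_P)$ is an abuse in both proofs — what is actually produced is $\gamma L_\pi(1+L_P)/(1-\gamma L_P(1+L_\pi))$, i.e.\ Rachelson's constant with $L_\pi$ in place of $L_r$ in the numerator; $L_r$ only enters later in \Cref{th:order_d_bound}, where the product correctly reassembles $L_\pi L_Q$.
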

\begin{proof}
    We start by developing the term $\wass \left(d_{\widebar{\mu}}^{\pi_E}\Vert d_{\widebar{\mu}}^{\pidida}\right)$, using the supremum over the space of functions $f:\widebar{\Ss}\to\mathbb{R}$ such that $\lVert f\rVert_L \leq 1$ w.r.t. to the L2-norm on $\widebar{\Ss}$. 
    \begin{align*}
        \wass \left(d_{\widebar{\mu}}^{\pi_E} \Vert d_{\widebar{\mu}}^{\pidida}\right) = \sup_{\left\Vert f\right\Vert_L\leq 1}\left\vert\int_{\widebar{\Ss}} f(\widebar{s})\left(d_{\widebar{\mu}}^{\pi_E}(\widebar{s})- d_{\widebar{\mu}}^{\pidida}(\widebar{s})\right)~d\widebar{s} \right\vert.
    \end{align*}
    We then use the fact that for some policy $\pi\in\Pi$, $d_{\widebar{\mu}}^{\pi}(\widebar{s}) = (1-\gamma)\widebar{\mu}(\widebar{s}) + \gamma\int_{\widebar{\Ss}}\widebar{p}^{\pi}(\widebar s\vert \widebar s')d_{\widebar{\mu}}^{\pi}(\widebar{s}')~d\widebar s'$, where $\widebar{p}^{\pi}(\widebar s\vert \widebar s')=\int_{\As}p(\widebar s'\vert \widebar s,a)\pi(a\vert \widebar{s})~da$ to yield 
    \begin{align}
        \wass \left(d_{\widebar{\mu}}^{\pi_E} \Vert d_{\widebar{\mu}}^{\pidida}\right) 
        &= \gamma \sup_{\left\Vert f\right\Vert_L\leq 1}\left\vert\int_{\widebar{\Ss}} f(\widebar{s})\int_{\widebar{\Ss}}\left(\widebar{p}^{\pi_E}(\widebar s\vert \widebar s')d_{\widebar{\mu}}^{\pi_E}(\widebar{s}')-\widebar{p}^{\pidida}(\widebar s\vert \widebar s') d_{\widebar{\mu}}^{\pidida}(\widebar{s}')\right)~d\widebar{s}'~d\widebar{s} \right\vert \nonumber
        \\
        &\leq \gamma \underbrace{\sup_{\left\Vert f\right\Vert_L\leq 1}\left\vert\int_{\widebar{\Ss}} f(\widebar{s})\int_{\widebar{\Ss}}\left(d_{\widebar{\mu}}^{\pi_E}(\widebar{s}')-d_{\widebar{\mu}}^{\pidida}(\widebar{s}')\right)\widebar{p}^{\pi_E}(\widebar s\vert \widebar s')~d\widebar{s}'~d\widebar{s} \right\vert}_A \nonumber
        \\
        &\quad + \gamma \underbrace{\sup_{\left\Vert f\right\Vert_L\leq 1}\left\vert\int_{\widebar{\Ss}} f(\widebar{s})\int_{\widebar{\Ss}}\left(\widebar{p}^{\pi_E}(\widebar s\vert \widebar s')-\widebar{p}^{\pidida}(\widebar s\vert \widebar s') \right)d_{\widebar{\mu}}^{\pidida}(\widebar{s}')~d\widebar{s}'~d\widebar{s} \right\vert}_B \label{eq:introduce_term},
    \end{align}
    where \Cref{eq:introduce_term} follows by adding and subtracting the term $\widebar{p}^{\pi_E}(\widebar s\vert \widebar s')d_{\widebar{\mu}}^{\pidida}(\widebar{s}')$ and using the triangular inequality.
    
    The first term $A$ can be bounded by using Fubini's theorem first and then leveraging \Cref{pp:lip_g} which implies that $\int_{\widebar{\Ss}} f(\widebar{s}) \widebar{p}^{\pi_E}(\widebar s\vert \widebar s')~d\widebar{s}'$ is $L_P(1+L_\pi)$-LC.
    Therefore
    \begin{align*}
        A \leq L_P(1+L_\pi) \wass \left(d_{\widebar{\mu}}^{\pi_E} \Vert d_{\widebar{\mu}}^{\pidida}\right) 
    \end{align*}
    
    Now looking at the second term $B$, we will develop the term $\widebar{p}^{\pi_E}$ and $\widebar{p}^{\pidida}$ to highlight the influence of the policy and leverage \Cref{eq:belief_pol}. We note $\widebar s = (s_1,a_1,\dots,a_\delay,s_{\delay+1})$ and $\widebar s' = (s_1',a_1',\dots,a_\delay',s_{\delay+1}')$. Moreover, we overwrite the notation of the belief to use it on a $\delay^{\text{th}}$-order state such that $b(z\vert \widebar s) = b(z\vert s_1,a_1,\dots,a_\delay)$, that is, the belief is based on the augmented state constructed from the oldest state inside $\widebar s$ and the sequence of action it contains.
    \begin{align*}
        B &= \sup_{\left\Vert f\right\Vert_L\leq 1}\left\vert\int_{\widebar{\Ss}} f(\widebar{s})\int_{\widebar{\Ss}}\left(\int_{\As} \widebar{p}(\widebar s\vert \widebar s',a)\pi_E(a \vert  s_{\delay+1}')~da -\int_{\As}\int_{z\in\Ss}\widebar{p}(\widebar s\vert \widebar s',a) b(z\vert \widebar{s}')\pi_E(a\vert z) ~dz~da\right)d_{\widebar{\mu}}^{\pidida}(\widebar{s}')~d\widebar{s}'~d\widebar{s} \right\vert
        \\
        &= \sup_{\left\Vert f\right\Vert_L\leq 1}\left\vert\int_{\widebar{\Ss}} f(\widebar{s})\left(\int_{\widebar{\Ss}}\int_{\As} \underbrace{\left(\pi_E(a \vert  s_{\delay+1}') -\int_{z\in\Ss} b(z\vert \widebar{s}')\pi_E(a\vert z) ~dz\right)}_{C}\widebar{p}(\widebar s\vert \widebar s',a)~da~d_{\widebar{\mu}}^{\pidida}(\widebar{s}')~d\widebar{s}'~d\widebar{s} \right)\right\vert,
    \end{align*}
    where the term $C$ in this equation accounts for the difference in taking an action with the undelayed policy $\pi_E$ instead of the belief-based policy. 
    Fubini's theorem yields 
    \begin{align*}
        B &=
        \sup_{\left\Vert f\right\Vert_L\leq 1}\left\vert \int_{\widebar{\Ss}}\int_{\As} \left(\pi_E(a \vert  s_{\delay+1}') -\int_{z\in\Ss} b(z\vert \widebar{s}')\pi_E(a\vert z) ~dz\right)\underbrace{\left(\int_{\widebar{\Ss}} f(\widebar{s})\widebar{p}(\widebar s\vert \widebar s',a)~d\widebar{s}\right)}_{g_f(\widebar s',a)} ~da~d_{\widebar{\mu}}^{\pidida}(\widebar{s}')~d\widebar{s}' \right\vert,
    \end{align*}
    where we note $g_f(\widebar s',a)\coloneqq \int_{\widebar{\Ss}} f(\widebar{s})\widebar{p}(\widebar s\vert \widebar s',a)~d\widebar{s}$. This function is $(1+L_{P})$-LC in $a$ by \Cref{lem:g_f_lip}. 
    Noting also that $\pi_E(a \vert \widebar s_{\delay+1}')=\int_{z\in\Ss}\pi_E(a \vert \widebar s_{\delay+1}')b(z\vert \widebar{s}')dz$, one gets
    \begin{align*}
        B &=
        \sup_{\left\Vert f\right\Vert_L\leq 1}\left\vert \int_{\widebar{\Ss}} \int_{z\in\Ss} \int_{\As} \left(\pi_E(a \vert s_{\delay+1}') - \pi_E(a\vert z) \right)g_f(\widebar s',a)~da b(z\vert \widebar{s}')~dz ~d_{\widebar{\mu}}^{\pidida}(\widebar{s}')~d\widebar{s}' \right\vert
    \end{align*}
    
    Then, by Lipschitzness of $\pi_E$,
    \begin{align*}
        B &\leq L_\pi (1+L_{ P}) \int_{\widebar{\Ss}} \int_{z\in\Ss} \dist_{\Ss}(s
        _{\delay+1}',z) b(z\vert \widebar{s}')~dz ~d_{\widebar{\mu}}^{\pidida}(\widebar{s}')~d\widebar{s}' .
    \end{align*}
    
    Now, one can observe that  $s\mapsto\int_{\widebar{\Ss}}\delta(s=s_{\delay+1})d_{\widebar{\mu}}^{\pidida}(\widebar{s})~d\widebar s$ and  $s\mapsto\int_{\Xs}b(s\vert x)d_{\mudel}^{\pidida}(x)~dx$ define the same distribution over $\Ss$. Note that the discounted state distributions $d_{\widebar{\mu}}^{\pidida}(\widebar{s})$ is over the $\delay^{\text{th}}$-order MDP's state space while $d_{\mudel}^{\pidida}(x)$ is over the DMDP's state space.
    This yields
    \begin{align*}
        B&\leq L_\pi (1+L_{ P}) \int_{\Xs} \int_{s'\in\Ss}\int_{s\in\Ss} d_{\Ss}(s,s') b(s\vert x)b(s'\vert x)~ds~ds' ~d_{\mudel}^{\pidida}(x)~dx
        \\
        &= L_\pi (1+L_{ P})  \E_{\substack{x\sim d^{\pidida}_{\mudel}\\ s,s'\sim b(\cdot\vert x)}}\left[ \dist_{\Ss}(s,s') \right]
        \\
        &= L_\pi (1+L_{ P}) \sigma_b^{\mudel}.
    \end{align*}
    
    One can now resume at \Cref{eq:introduce_term}, and recording that $\gamma L_P(1+L_\pi)\le 1$, one gets
    \begin{align*}
        \wass \left(d_{\widebar{\mu}}^{\pi_E} \Vert d_{\widebar{\mu}}^{\pidida}\right) 
        &\leq \gamma L_P(1+L_\pi) \wass \left(d_{\widebar{\mu}}^{\pi_E} \Vert d_{\widebar{\mu}}^{\pidida}\right) + \gamma L_\pi (1+L_{ P}) \sigma_b^{\mu}
        \\
        &\leq \frac{\gamma L_\pi (1+L_{ P})}{1-\gamma L_P(1+L_\pi)}\sigma_b^{\mudel}.
    \end{align*}
    Finally, recalling that $L_Q=\frac{L_r}{1-\gamma L_P(1+L_\pi)}$ finishes the proof.
\end{proof}

the previous results will be useful to prove a bound on the value function. However, before providing this result, we need a last intermediary result. 

\begin{prop}
\label{pp:order_d_reward}
The $\delay^{\text{th}}$-order MDP $\widebar{\mathcal M}$ can be reduced to an $\delay^{\text{th}}$-order MDP where the mean reward is redefined as $\widebar{r}(\widebar s,a) = r(s_1,a_1)$. Because it doesn't depend upon $a$, we equivalently write  $\widebar{r}(\widebar s) \coloneqq \widebar{r}(\widebar s,a)$
\end{prop}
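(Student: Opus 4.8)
\section*{Proof plan}

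The plan is to exploit the shift structure of the $\delay^{\text{th}}$-order transition $\widebar p$ and then to reindex the discounted reward sum. First I would unwind the Dirac factors in \eqref{eq:p_order_d}: applying action $a$ to a state $\widebar s = (s_1,a_1,\dots,a_\delay,s_{\delay+1})$ shifts the whole queue one slot to the left, appends the pair $(s_{\delay+1},a)$ that was just acted upon, and samples a fresh current state $s_{\delay+1}'\sim p(\cdot\vert s_{\delay+1},a)$. Iterating this observation, the current state-action pair $(s_{\delay+1},a_\tau)$ realised at some time $\tau$ reappears as the \emph{oldest} pair $(s_1,a_1)$ of the $\delay^{\text{th}}$-order state exactly $\delay$ steps later, at time $\tau+\delay$.

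Consequently, the redefined per-step reward $\widebar r(\widebar s_t)=r(s_1^{(t)},a_1^{(t)})$ evaluated along any trajectory coincides with the original reward $r(s_{\delay+1}^{(t-\delay)},a_{t-\delay})$ of $\widebar{\mathcal M}$ taken $\delay$ steps earlier, whenever $t\ge\delay$; for the first $\delay$ steps the oldest pairs $(s_1^{(t)},a_1^{(t)})$ are simply those already stored in the initial state $\widebar s_0$, hence a deterministic function of $\widebar s_0$ that does not depend on the policy. Plugging this identity into the value function and reindexing the geometric sum then yields, writing $V^{\pi}_{\widebar r}$ and $V^{\pi}_{r}$ for the value functions of $\pi$ under the redefined and the original reward respectively, that for every policy $\pi$ and every $\widebar s$,
\begin{align*}
    V^{\pi}_{\widebar r}(\widebar s) = \gamma^{\delay}\, V^{\pi}_{r}(\widebar s) + c(\widebar s),
\end{align*}
where $c(\widebar s)=\sum_{t=0}^{\delay-1}\gamma^t r(s_{t+1},a_{t+1})$ collects the boundary terms read off directly from the components of $\widebar s$ and is independent of $\pi$.

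The reduction then follows: since $\gamma^{\delay}>0$ and $c$ does not depend on the policy, the two reward specifications induce the same ordering of policies, and --- crucially for the subsequent analysis --- the difference $V^{\pi}_{\widebar r}(\widebar s)-V^{\pi'}_{\widebar r}(\widebar s)=\gamma^{\delay}\bigl(V^{\pi}_{r}(\widebar s)-V^{\pi'}_{r}(\widebar s)\bigr)$ between any two policies is preserved up to the positive factor $\gamma^{\delay}$, the term $c(\widebar s)$ cancelling. Moreover, because $a_1$ is already part of $\widebar s$, the map $\widebar r(\widebar s,a)=r(s_1,a_1)$ does not depend on $a$, which justifies writing $\widebar r(\widebar s)$. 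I expect the only delicate point to be the index bookkeeping that establishes the $\delay$-step shift and the verification that the first $\delay$ boundary rewards are genuinely policy-independent; once the shift identity is pinned down, the reindexing of the discounted sum is routine.
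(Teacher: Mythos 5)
Your proposal is correct and follows essentially the same route as the paper's proof: express $\widebar r(\widebar s_t,a_t)$ along a trajectory as $r(s_{t-\delay},a_{t-\delay})$, split off the first $\delay$ discounted terms as a policy-independent function of $\widebar s_0$, reindex the tail, and conclude that the policy ordering is preserved. One point in your favor: your bookkeeping of the tail is actually the accurate one. After the substitution $t\mapsto t-\delay$ the tail equals $\gamma^{\delay}\,\E\bigl[\sum_{u=0}^{\infty}\gamma^{u}r(\widebar s_u,a_u)\,\big\vert\,\widebar s_0=\widebar s\bigr]=\gamma^{\delay}V^{\pi}_{r}(\widebar s)$, giving your identity $V^{\pi}_{\widebar r}(\widebar s)=c(\widebar s)+\gamma^{\delay}V^{\pi}_{r}(\widebar s)$, whereas the paper's proof states $\widebar V^{\pi}(\widebar s)=I(\widebar s)+V^{\pi}(\widebar s)$, silently dropping the $\gamma^{\delay}$ factor (its penultimate line identifies $\sum_{t=\delay}^{\infty}\gamma^{t}r(s_{t-\delay},a_{t-\delay})$ with $\sum_{t=\delay}^{\infty}\gamma^{t}r(\widebar s_t,a_t)$, which is not valid). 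Since $\gamma^{\delay}>0$ and $c(\widebar s)$ is policy-independent, the affine relation you derive still yields the proposition's conclusion, so the statement itself is unaffected; your observation that value \emph{differences} between policies are rescaled by $\gamma^{\delay}$ is also the correct form of the reduction that downstream results implicitly rely on.
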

\begin{proof}
    We derive a similar proof as \citet{katsikopoulos2003markov}.
    Let $V^\pi$ be the value function of any stationary Markovian policy $\pi$ on $\widebar{\mathcal M}$ and $\widebar{V}^\pi$ its value function based on the reward $\widebar{r}$. 
    We show that there exist a quantity $I(\widebar s)$ such that $\widebar{V}^\pi(\widebar s) = I(\widebar s) + V^\pi(\widebar s)$. 
    Since the quantity $I(\widebar s)$ does not depend on $\pi$, it means that the ordering of the policies in terms of value function and thus performance is preserved by using this new reward function. 
    Note that we can express the $t^{\text{th}}$ state in $\widebar{\mathcal M}$ as a tuple of element of the underlying MDP as 
    $\widebar s_t = (s_{t-\delay},a_{t-\delay},\dots,a_{t-1},s_{t})$.  We allow for negative indexing in the underlying MDP for the first term which are fixed by $\widebar{\mu}$.
    
    We now proceed to the write $\widebar{V}^\pi$ to introduce as a function of $V^\pi(\widebar s)$.
    \begin{align*}
        \widebar{V}^\pi  (\widebar s)
        &=  \E_{\substack{\widebar{s}_{t+1}\sim p(\cdot\vert \widebar{s}_t,a_t)\\a_t\sim\pi(\cdot\vert \widebar{s}_t)}}\left[\left. \sum_{t=0}^\infty\gamma^t \widebar{r}(\widebar{s}_t,a_t) \right\vert \widebar{s}_0=\widebar s\right]
        \\
        &= \E_{\substack{\widebar{s}_{t+1}\sim p(\cdot\vert \widebar{s}_t,a_t)\\a_t\sim\pi(\cdot\vert \widebar{s}_t)}}\left[\left. \sum_{t=0}^\infty\gamma^t r(s_{t-\delay},a_{t-\delay}) \right\vert \widebar{s}_0=\widebar s\right]
        \\
        &= \underbrace{\E_{\widebar{s}_{t+1}\sim p(\cdot\vert \widebar{s}_t,a_t)}\left[\left. \sum_{t=0}^{\delay-1}\gamma^t r(s_{t-\delay},a_{t-\delay}) \right\vert \widebar{s}_0=\widebar s\right]}_{I(\widebar s)}
        + \E_{\substack{\widebar{s}_{t+1}\sim p(\cdot\vert \widebar{s}_t,a_t)\\a_t\sim\pi(\cdot\vert \widebar{s}_t)}}\left[\left. \sum_{t=\delay}^\infty\gamma^t r(s_{t-\delay},a_{t-\delay}) \right\vert \widebar{s}_0=\widebar s\right].
    \end{align*}
    Two things have to be noted in the previous equation, first, the term on the left does not depend on $\pi$ anymore since the actions involved in the reward collection are already contained in $\widebar s$. It is our sought-after quantity $I$. Second, the reward in the term of the right can be interpreted are regular reward of a $\delay^{\text{th}}$-order MDP since $r(\widebar s_t,a_t)=r(s_t,a_t)$.
    
    Therefore,
    \begin{align*}
        \widebar{V}^\pi  (\widebar s)
        &= I(\widebar s) 
        + \E_{\substack{\widebar{s}_{t+1}\sim p(\cdot\vert \widebar{s}_t,a_t)\\a_t\sim\pi(\cdot\vert \widebar{s}_t)}}\left[\left. \sum_{t=\delay}^\infty\gamma^t r(\widebar{s}_{t},a_{t}) \right\vert \widebar{s}_0=\widebar s\right]
        \\
        &= I(\widebar s) 
        +  V^\pi(\widebar s).
    \end{align*}
    We found such a quantity $I$ to link $\widebar{V}^\pi$ and $V^\pi$, proving the statement.
\end{proof}

We are now able to prove a bound between delayed and undelayed value functions.
\begin{thm}
\label{th:order_d_bound}
    Consider an $(L_P,L_r)$-LC MDP $\mathcal{M}$ and its $\delay^{\text{th}}$-order MDP counterpart $\widebar{\mathcal{M}}$.  Let $\pi_E$ be a $L_\pi$-LC undelayed policy and assume that $\gamma L_P(1+L_\pi)\le 1$. 
    Let $\pidida$ be a delayed policy as defined in \Cref{eq:belief_pol}. 
    Then, for any $\widebar s\in\widebar{\Ss}$,
    \begin{align*}
        % \left\Vert V^{\pi_E}-V^{\pidel}\right\Vert_\infty 
        % \leq 
        % \frac{\gamma L_Q L_\pi (1+L_P)}{1-\gamma} \sigma_b^{\mu},
        \left\vert V^{\pi_E} (\widebar s) - V^{\pidida} (\widebar s) \right\vert
        \leq 
        \frac{\gamma }{1-\gamma} L_\pi L_Q (1+L_P)   \sigma_b^{x},
    \end{align*}
    where $V^{\pi_E}$ and $V^{\pidida}$ are defined on $\widebar{\mathcal{M}}$ and $\sigma_b^{x} = \E_{\substack{x\sim d^{\pidida}_{x}\\ s,s'\sim b(\cdot\vert x)}}\left[ \dist_{\Ss}(s,s') \right]$ for $x$ the augmented state contained in $\widebar s$ and $d^{\pidida}_{x}$ being defined on the DMDP.
\end{thm}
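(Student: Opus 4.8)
The plan is to reduce the comparison of the two value functions, which are driven by policies that select actions differently, to a pure comparison of discounted state distributions on $\widebar{\mathcal M}$, since the latter is exactly what \Cref{th:state_distrib_bound} controls. The enabling observation is \Cref{pp:order_d_reward}: on the $\delay^{\text{th}}$-order MDP we may replace the default reward $R(\widebar s,a)=r(s_{\delay+1},a)$ by $\widebar r(\widebar s)=r(s_1,a_1)$, which depends on $\widebar s$ only and not on $a$, at the cost of a \emph{policy-independent} additive term $I(\widebar s)$. Because $I(\widebar s)$ is identical for $\pi_E$ and $\pidida$, it cancels in the difference, so that $V^{\pi_E}(\widebar s)-V^{\pidida}(\widebar s)=\widebar V^{\pi_E}(\widebar s)-\widebar V^{\pidida}(\widebar s)$, where $\widebar V^{\pi}$ denotes the value computed with $\widebar r$.

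First I would exploit that $\widebar r$ is a function of the state alone to write, for any policy $\pi$, the identity $\widebar V^{\pi}(\widebar s)=\frac{1}{1-\gamma}\E_{\widebar s'\sim d^{\pi}_{\widebar s}}[\widebar r(\widebar s')]$, where $d^{\pi}_{\widebar s}$ is the discounted state distribution on $\widebar{\mathcal M}$ started from $\widebar s$; this step is the whole point of passing to $\widebar r$, since an action-dependent reward would instead force a comparison of state-\emph{action} distributions where the two policies genuinely disagree. Subtracting the two instances gives $\widebar V^{\pi_E}(\widebar s)-\widebar V^{\pidida}(\widebar s)=\frac{1}{1-\gamma}\bigl(\E_{\widebar s'\sim d^{\pi_E}_{\widebar s}}[\widebar r(\widebar s')]-\E_{\widebar s'\sim d^{\pidida}_{\widebar s}}[\widebar r(\widebar s')]\bigr)$, a difference of expectations of one fixed function under two measures. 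By the dual (Kantorovich--Rubinstein) characterization of $\wass$ used throughout the paper (the multivariate analogue of \Cref{pp:exp_wass}), this is bounded in absolute value by $\frac{\lVert \widebar r\rVert_L}{1-\gamma}\,\wass\bigl(d^{\pi_E}_{\widebar s}\Vert d^{\pidida}_{\widebar s}\bigr)$, and $\widebar r$ is $L_r$-LC, being the composition of the $1$-LC coordinate projection $\widebar s\mapsto(s_1,a_1)$ (with respect to the L2-norm on $\widebar{\Ss}$) with the $L_r$-LC reward $r$.

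It then remains to control $\wass\bigl(d^{\pi_E}_{\widebar s}\Vert d^{\pidida}_{\widebar s}\bigr)$, which is precisely \Cref{th:state_distrib_bound} specialized to the initial distribution $\widebar\mu=\delta_{\widebar s}$; the proof of that theorem is carried out for a generic initial distribution, so a Dirac start is admissible, and with this start the quantity $\sigma_b^{\mudel}$ there becomes $\sigma_b^{x}$ with $x$ the augmented state contained in $\widebar s$ and $d^{\pidida}_x$ taken on the DMDP. Plugging this estimate into the bound of the previous paragraph and collecting the factors $\frac{1}{1-\gamma}$, $\lVert\widebar r\rVert_L=L_r$, and the state-distribution coefficient — simplifying with the identity $L_Q=\frac{L_r}{1-\gamma L_P(1+L_\pi)}$ so that the reward constant $L_r$ is absorbed into the advertised $L_\pi L_Q$ form — yields the claimed bound $\frac{\gamma}{1-\gamma}L_\pi L_Q(1+L_P)\,\sigma_b^{x}$.

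The main obstacle is conceptual rather than computational: the two value functions live over the same space $\widebar{\Ss}$ yet are governed by policies reading different coordinates of $\widebar s$ (the undelayed $\pi_E$ reads the current state $s_{\delay+1}$, the delayed $\pidida$ reads the augmented state $x=(s_1,a_1,\dots,a_\delay)$), and a naive performance-difference argument would have to propagate this mismatch through an action-dependent reward. The reward redefinition of \Cref{pp:order_d_reward} is exactly what strips out the action dependence and collapses the comparison onto state distributions, after which \Cref{th:state_distrib_bound} does the heavy lifting; the only care needed is the Lipschitz bookkeeping, in particular checking that the projection underlying $\widebar r$ is $1$-LC for the same L2-norm on $\widebar{\Ss}$ used in \Cref{th:state_distrib_bound}, and that the constants combine through the definition of $L_Q$.
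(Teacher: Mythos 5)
Your proposal is correct and follows essentially the same route as the paper's proof: rewrite both value functions as expectations of the action-independent reward $\widebar r$ under the discounted state distributions on $\widebar{\mathcal M}$ via \Cref{pp:order_d_reward}, bound their difference by $\frac{L_r}{1-\gamma}\wass\left(d^{\pi_E}_{\widebar s}\Vert d^{\pidida}_{\widebar s}\right)$ using the Kantorovich--Rubinstein duality and the $L_r$-Lipschitzness of $\widebar r$, then conclude with \Cref{th:state_distrib_bound} and the identity $L_Q=\frac{L_r}{1-\gamma L_P(1+L_\pi)}$. Your two explicit clarifications --- that the policy-independent correction $I(\widebar s)$ cancels in the difference, and that \Cref{th:state_distrib_bound} is invoked with the Dirac start $\delta_{\widebar s}$ so that $\sigma_b^{\mudel}$ becomes $\sigma_b^{x}$ --- only make precise steps the paper leaves implicit.
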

\begin{proof}
    By \citep{puterman2014markov}, the two state value functions can be written as follows.
    \begin{align*}
        V^{\pi_E} (\widebar s) = \frac{1}{1-\gamma} \int_{\widebar{\Ss}} \int_{\widebar{\As}} r(\widebar s',a) \pi_E(a\vert \widebar s)d^{\pi_E}_{\widebar s}(\widebar s')~da~d\widebar s'.
    \end{align*}
    \begin{align*}
        V^{\pidida} (\widebar s) = \frac{1}{1-\gamma} \int_{\widebar{\Ss}} \int_{\widebar{\As}} r(\widebar s',a) \pidida(a\vert \widebar s)d^{\pidida}_{\widebar s'}(\widebar s)~da~d\widebar s'.
    \end{align*}
    
    Writing their difference gives
    \begin{align*}
        V^{\pi_E} (\widebar s) - V^{\pidida} (\widebar s) = \frac{1}{1-\gamma} \int_{\widebar{\Ss}}  \int_{\widebar{\As}} r(\widebar s',a)\left(\pi_E(a\vert \widebar s)d^{\pi_E}_{\widebar s}(\widebar s') - \pidida(a\vert \widebar s)d^{\pidida}_{\widebar s}(\widebar s')\right)~da~d\widebar s'.
    \end{align*}
    We now use \Cref{pp:order_d_reward} to remove the integral over the action space.
    \begin{align*}
        V^{\pi_E} (\widebar s) - V^{\pidida} (\widebar s) 
        &= \frac{1}{1-\gamma} \int_{\widebar{\Ss}}  \int_{\widebar{\As}} \widebar r(\widebar s')\left(\pi_E(a\vert \widebar s)d^{\pi_E}_{\widebar s'}(\widebar s) - \pidida(a\vert \widebar s)d^{\pidida}_{\widebar s}(\widebar s')\right)~da~d\widebar s'
        \\
        &= \frac{1}{1-\gamma} \int_{\widebar{\Ss}}  \widebar r(\widebar s')\left(d^{\pi_E}_{\widebar s}(\widebar s') - d^{\pidida}_{\widebar s}(\widebar s')\right)~d\widebar s'.
    \end{align*}
    One can now use the fact that $\widebar r$ is $L_r$-LC on $\widebar{\Ss}$ because $r$ is $L_r$-LC on $\Ss\times\As$. One thus gets
    \begin{align*}
        \left\vert V^{\pi_E} (\widebar s) - V^{\pidida} (\widebar s) \right\vert \leq \frac{L_r}{1-\gamma}\wass\left(d^{\pi_E}_{\widebar s}\Vert d^{\pidida}_{\widebar s}\right).
    \end{align*}
    
    Applying \Cref{th:state_distrib_bound} yields 
    \begin{align*}
        \left\vert V^{\pi_E} (\widebar s) - V^{\pidida} (\widebar s) \right\vert \leq \frac{\gamma }{1-\gamma}  L_\pi L_Q (1+L_P)
        \sigma_b^{x},
    \end{align*}
    where $x$ is the augmented state contained in $\widebar s$.
    This concludes the proof.
    % We now take the maximum over possible value of $\widebar s$.
\end{proof}

As for \Cref{th:perf_diff_bound}, we can now use additional assumptions to bound $\sigma_b^{\mu}$.

\begin{cor}
\label{cor:order_d_bound_eucl}
    Under the conditions of \Cref{th:order_d_bound} and adding that $\Ss\subset\mathbb{R}^n$ is equipped with the Euclidean norm.
    Let $\pidida$ be a $\delay$-delayed policy as defined in \Cref{eq:belief_pol}. 
    Then, for any $\widebar s\in\widebar{\Ss}$,
    \begin{align*}
        \left\vert V^{\pi_E} (\widebar s) - V^{\pidida} (\widebar s) \right\vert
        \leq 
        \frac{\gamma }{1-\gamma} L_\pi L_Q (1+L_P)
        \E_{x'\sim d_x^{\tilde \pi}(\cdot)}\left[\sqrt{ \Var_{s\sim b(\cdot|x')}(s|x')}\right].
    \end{align*}
\end{cor}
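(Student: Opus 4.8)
The plan is to obtain this corollary as an immediate consequence of \Cref{th:order_d_bound}, specialized by the Euclidean variance bound of \Cref{lem:bound_sigma_eucl}. This mirrors exactly the way \Cref{th:perf_diff_bound_eucl} is derived from \Cref{th:perf_diff_bound}: the substantive analytic work is already contained in the parent theorem and the $\sigma_b$ lemma, so the corollary is essentially a one-step substitution.

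First I would invoke \Cref{th:order_d_bound}, whose conclusion already gives
\begin{align*}
    \left\vert V^{\pi_E}(\widebar s) - V^{\pidida}(\widebar s) \right\vert
    \leq \frac{\gamma}{1-\gamma} L_\pi L_Q (1+L_P)\,\sigma_b^{x},
\end{align*}
with $\sigma_b^{x} = \E_{x\sim d^{\pidida}_{x},\, s,s'\sim b(\cdot\vert x)}[\dist_\Ss(s,s')]$ and $x$ the augmented state contained in $\widebar s$. The key point is that this $\sigma_b^x$ is precisely the object that \Cref{lem:bound_sigma_eucl} is designed to control. Hence, under the additional hypothesis that $\Ss\subset\mathbb{R}^n$ is equipped with the Euclidean norm, I would apply \Cref{lem:bound_sigma_eucl} with $\rho$ taken to be the initial distribution concentrated on $x$, which yields $\sigma_b^{x}\leq \E_{x'\sim d_x^{\tilde\pi}(\cdot)}[\sqrt{\Var_{s\sim b(\cdot|x')}(s|x')}]$. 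Substituting this back into the inequality from \Cref{th:order_d_bound} reproduces verbatim the claimed bound.

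The main (and only) thing to verify is that the definition of $\sigma_b$ appearing in \Cref{th:order_d_bound} coincides exactly with the quantity $\sigma_b^\rho$ bounded in \Cref{lem:bound_sigma_eucl} — both are the expected pairwise belief distance $\E[\dist_\Ss(s,s')]$ taken under the delayed discounted state distribution $d^{\pidida}$ with $s,s'$ drawn independently from the belief $b(\cdot\vert x')$. Since these definitions align, and since the Euclidean bound of \Cref{lem:bound_sigma_eucl} holds for an arbitrary starting distribution $\rho$ (here specialized to the augmented state $x$), the lemma applies directly. There is therefore no genuine obstacle: the only care required is matching the starting distribution in the lemma to the fixed augmented state $x$ of the corollary, after which the result follows in a single line.
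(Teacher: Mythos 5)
Your proposal is correct and matches the paper's own proof exactly: the paper likewise derives \Cref{cor:order_d_bound_eucl} in one step by applying \Cref{lem:bound_sigma_eucl} to the bound of \Cref{th:order_d_bound}, with the starting distribution specialized to the augmented state $x$ contained in $\widebar s$. Your additional check that the $\sigma_b^x$ in the theorem coincides with the quantity $\sigma_b^\rho$ controlled by the lemma is the right (and only) point of care, and it holds as you argue.
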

\begin{proof}
    The result follows from application of \Cref{lem:bound_sigma_eucl} to \Cref{th:order_d_bound}.
\end{proof}

\begin{cor}
\label{cor:order_d_bound_tlc}
    Under the conditions of \Cref{th:order_d_bound} and adding that the MDP is $L_T$-TLC.
    Let $\pidida$ be a $\delay$-delayed policy as defined in \Cref{eq:belief_pol}. 
    Then,
    \begin{align*}
        \left\Vert V^{\pi_E} - V^{\pidida} \right\Vert_\infty  \leq  \frac{2\delay\gamma }{1-\gamma} L_T L_Q L_\pi  (1+L_P)
    \end{align*}
    where $V^{\pi_E}$ and $V^{\pidida}$ are defined on $\widebar{\Ss}$.
\end{cor}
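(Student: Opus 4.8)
The plan is to obtain this estimate as an immediate corollary of \Cref{th:order_d_bound}, substituting the time-Lipschitz control of $\sigma_b^x$ in place of the Euclidean one, exactly as \Cref{cor:order_d_bound_eucl} was derived. \Cref{th:order_d_bound} already supplies, for every $\widebar s\in\widebar{\Ss}$,
\[
\left\vert V^{\pi_E}(\widebar s) - V^{\pidida}(\widebar s)\right\vert \leq \frac{\gamma}{1-\gamma}\, L_\pi L_Q (1+L_P)\, \sigma_b^x,
\]
with $x$ the augmented state contained in $\widebar s$, so the sole remaining task is to bound $\sigma_b^x$ under the added TLC hypothesis.

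For this I would invoke \Cref{lem:bound_sigma_tlc}, which asserts that in an $L_T$-TLC MDP with delay $\delay$ one has $\sigma_b^\rho \leq 2\delay L_T$ for any initial distribution $\rho$. Applying it to the augmented state $x$ embedded in $\widebar s$ and plugging into the display gives
\[
\left\vert V^{\pi_E}(\widebar s) - V^{\pidida}(\widebar s)\right\vert \leq \frac{2\delay\gamma}{1-\gamma}\, L_T L_Q L_\pi (1+L_P).
\]

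The step I would flag as genuinely worth stating is the upgrade from a pointwise estimate to the supremum norm. Unlike the Euclidean bound of \Cref{lem:bound_sigma_eucl}, whose right-hand side keeps a state-dependent expectation $\E_{x'\sim d_x^{\pidida}}\!\left[\sqrt{\Var_{s\sim b(\cdot|x')}(s|x')}\right]$, the TLC bound $2\delay L_T$ is uniform in the starting augmented state. Hence the right-hand side above is independent of $\widebar s$, and I can take the supremum over $\widebar s\in\widebar{\Ss}$ on the left while leaving the constant untouched, recovering precisely $\left\Vert V^{\pi_E} - V^{\pidida}\right\Vert_\infty$.

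I do not anticipate any real obstacle inside this corollary itself: all the analytical work sits upstream, in \Cref{th:order_d_bound} and in \Cref{lem:bound_sigma_tlc} (the latter resting on the inductive estimate $\wass\!\left(b(\cdot|x)\Vert\delta_{s_1}\right)\le \delay L_T$ of \Cref{pp:tlc_delay} together with a triangle inequality). The only subtlety, as noted, is observing that the uniformity of the TLC bound in the initial augmented state is exactly what licenses the $L_\infty$ conclusion, in contrast to the state-dependent form retained by \Cref{cor:order_d_bound_eucl}.
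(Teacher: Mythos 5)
Your proposal is correct and follows exactly the paper's own argument: apply \Cref{lem:bound_sigma_tlc} to the pointwise bound of \Cref{th:order_d_bound}, then pass to the supremum over $\widebar s\in\widebar{\Ss}$, which is valid precisely because the right-hand side $2\delay L_T$ is uniform in the augmented state. Your explicit remark contrasting this with the state-dependent Euclidean bound of \Cref{cor:order_d_bound_eucl} is the same observation the paper relies on when taking the maximum.
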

\begin{proof}
    First, one applies \Cref{lem:bound_sigma_tlc} to \Cref{th:order_d_bound} to obtain
    \begin{align*}
        \left\vert V^{\pi_E} (\widebar s) - V^{\pidida} (\widebar s) \right\vert
        \leq \frac{2\delay\gamma }{1-\gamma} L_\pi L_Q (1+L_P),
    \end{align*}
    for some $\widebar s\in\widebar{\Ss}$. Then, taking the maximum over $\widebar{\Ss}$ gives the result since the \textit{rhs} doesn't depend on $\widebar s$.
\end{proof}

\subsection{Comparison of the Two Bounds}
\label{subsec:disc_bounds}
As stated in \Cref{sec:theoretical}, there are several choices for the quantities to consider when comparing undelayed to delayed performance. In this paper, \Cref{th:order_d_bound} provides a bound on the space of $\delay^{\text{th}}$-order MDP ($\widebar{\Ss}$) while \Cref{th:perf_diff_bound} compares a value function on the space of the augmented MDP ($\Xs$) to a value function on the classic state space ($\Ss$).

Recall the bounds for $\widebar s\in\widebar{\Ss}$ of \Cref{th:order_d_bound}
\begin{align}
    % \left\Vert V^{\pi_E}-V^{\pidel}\right\Vert_\infty 
    % \leq 
    % \frac{\gamma L_Q L_\pi (1+L_P)}{1-\gamma} \sigma_b^{\mu},
    \left\vert V^{\pi_E} (\widebar s) - V^{\pidida} (\widebar s) \right\vert
    \leq 
    \frac{\gamma }{1-\gamma} L_\pi L_Q (1+L_P)   \sigma_b^{x},\label{eq:order_d_bound}
\end{align}
and for $x\in\mathcal{X}$ of \Cref{th:perf_diff_bound}
\begin{align}
    \E_{s\sim b(\cdot\vert x)}&[V^{\pi_E}(s)] - V^{\pidida}(x) \leq \frac{L_Q L_\pi}{1-\gamma} \sigma_b^x,\label{eq:perf_diff_bound}
\end{align}
where $\sigma_b^x = \E_{\substack{x'\sim d^{\pidida}_x\\ s,s'\sim b(\cdot\vert x')}}\left[ \dist_{\Ss}(s,s') \right]$.

As opposed to what the notations may suggest, the Lipschitz constant $L_Q$ doesn't have exactly the same value.  
In \Cref{eq:order_d_bound}, we recognised in the proof the $L_Q$ of the $Q$ function as given in \citet{rachelson2010locality} under the assumption that $\gamma L_P(1+L_\pi)\le 1$. In \Cref{eq:perf_diff_bound} however, we only assume that there exists such constant $L_Q$ for which the $Q$ function is $L_Q$-LC. That includes the case when $\gamma L_P(1+L_\pi)\le 1$ but is a more general result.

The other difference between the bounds lies in the factor $\gamma (L_P+1)$ of \Cref{eq:order_d_bound}. Depending on the task, this factor may be smaller or greater than 1, changing the order of the bounds.

\section{Experimental details}

\subsection{Imitation Loss and DIDA's Policy}
\label{app:dida_policy}
As stated in the main paper, the function learnt by DIDA can drift from \Cref{eq:belief_pol} depending on the class of policies of $\pi_I$ and the loss function that is used for the imitation step. We derive the policy learnt by DIDA for the two following cases.

\textbf{Mean squared error loss}~~DIDA is trained on
\begin{align*}
    \argmin_{\theta} \int_{\Ss}\int_{\As} (a-\pidel_\theta(x))^2 \pi_E(a\vert s) b(s\vert x) da~ds
\end{align*}
which is minimized for $\theta^*$ such that
\begin{align*}
    \pidel_{\theta^*}(x) = \int_{\Ss} \E_{a \sim\pi_E(\cdot \vert s)}[a]b(s\vert x)ds.
\end{align*}
That means that the policy learnt by DIDA outputs the mean value of the action given the belief and the expert policy distribution.

\textbf{Kullback-Leibler loss}~~DIDA is trained on
\begin{align}
    &\argmin_{\theta} \int_{\Ss} D_{KL}(\pi_E(\cdot \vert s) \Vert \pidel_\theta(\cdot \vert x)) b(s\vert x)ds\nonumber
    \\
    &= \argmin_{\theta} \int_{\Ss} \int_{\As} \pi_E(a \vert s) \log \pi_E(a \vert x)) b(s\vert x)da~ds - \int_{\Ss} \int_{\As} \pi_E(a \vert s) \log \pidel_\theta(a \vert x)) b(s\vert x)da~ds\nonumber
    \\
    &= \argmin_{\theta} - \int_{\Ss} \int_{\As} \pi_E(a \vert s) \log \pidel_\theta(a \vert x)) b(s\vert x)da~ds\label{eq:indpt_theta}
    \\
    &= \argmin_{\theta}  - \int_{\As}\int_{\Ss}  \pi_E(a \vert s) \log \pidel_\theta(a \vert x)) b(s\vert x)ds~da\label{eq:fubini}
    \\
    &= \argmin_{\theta}  \int_{\As} \int_{\Ss}  \pi_E(a \vert s) b(s\vert x)\log\left(b(s\vert x) \pi_E(a \vert x))\right) ds~da - \int_{\As}\int_{\Ss}  \pi_E(a \vert s) \log \pidel_\theta(a \vert x)) b(s\vert x)ds~da\nonumber
    \\
    &= \argmin_{\theta}  D_{KL}\left(\int_{\Ss}\pi_E(\cdot \vert s)b(s\vert x)ds , \pidel_\theta(\cdot \vert x)\right) ,\nonumber
\end{align}
where \Cref{eq:indpt_theta} holds because the first integral does not depend on $\theta$ and \Cref{eq:fubini} holds by Fubini's theorem since the functions inside the integral are always negative.

\subsection{Hyper-parameters}
\label{app:hyper_param}
For pendulum, the test performance are obtained from interacting 1000 steps with the environment, with maximum episode length of 200.
For {mujoco} environments, the number of steps is 1000 as well but the maximum episode length is 500.
The other of hyper-parameter are given for each approach, for each environment in the following tables.

\begin{table}[H]
\centering
\begin{tabular}{|l||l|l|l|}
\hline
Hyper-parameter & Pendulum & {Mujoco} & Trading \\ 
\hline
Policy type& Feed-forward&Feed-forward&Extra Trees
\\
Iterations&245&400&30
\\
Steps per iteration&10,000&10,000&2 years of data
\\
$\beta$ sequence&$\beta_1=1, \beta_{i\ge 2}=0$&$\beta_1=1, \beta_{i\ge 2}=0$&$\beta_1=1, \beta_{i\ge 2}=0$
\\
Max buffer size&10 iterations&10 iterations&Unlimited
\\
Policy neurons&$[100,100,10]$&$[100,100,100,10]$&$\emptyset$
\\
Activations&ReLU~\citep{nair2010rectified}&ReLU&$\emptyset$
\\
Optimizer&\vtop{\hbox{\strut Adam $(\beta_1=0.9,\beta_2=0.999)$}\hbox{\strut \citep{kingma2014adam}}}&Adam $(\beta_1=0.9,\beta_2=0.999)$&$\emptyset$
\\
Learning rate&$1e-3$&$1e-3$&$\emptyset$
\\
Batch size&$64$&$64$&$\emptyset$
\\
Min samples split&$\emptyset$&$\emptyset$&$100$
\\
n estimators&$\emptyset$&$\emptyset$&$200$
\\
\hline
\end{tabular}
\caption{Hyper-parameters for DIDA}
\end{table}

% \begin{table}[H]
% \centering
% \begin{tabular}{|l||l|l|l|}
% \hline
% Hyper-parameter & Pendulum & \texttt{mujoco} & Trading \\ 
% \hline
% &&&
% \\
% &&&
% \\
% &&&
% \\
% \hline
% \end{tabular}
% \caption{Hyper-parameters for DIDA}
% \end{table}

\begin{table}[H]
\centering
\begin{tabular}{|l||l|l|}
\hline
Hyper-parameter & Pendulum & {Mujoco} \\ 
\hline
Epochs&$2000$&$1000$\\
Steps per epoch&$5000$&$5000$\\
Pre-training epochs&$25$&$25$\\
Pre-training steps&$10000$&$10000$\\
Backtracking line search iterations&$10$&$10$\\
Backtracking line search step&$0.8$&$0.8$\\
Conjugate gradient iterations&$10$&$10$\\
Discount $\gamma$&$0.99$&$0.99$\\
Max KL divergence&$0.001$&$0.001$\\
$\lambda$ for GAE~\cite{schulman2015high}&$0.97$&$0.97$\\
Value function neurons&$[64]$&$[64]$\\
Value function iterations&$3$&$3$\\
Value function learning rate&$0.01$&$0.01$\\
Policy neurons&$[64,64]$&$[64,64]$\\
Activations&ReLU&ReLU\\
Encoder feed-forward neurons&$[8]$&$[8]$\\
Encoder learning rate&$0.01$&$0.01$\\
Encoder iterations&$2$&$2$\\
Encoder dimension&$64$&$64$\\
Encoder heads&$2$&$2$\\
Encoder optimizer&Adam $(\beta_1=0.9,\beta_2=0.999)$&Adam $(\beta_1=0.9,\beta_2=0.999)$\\
Layers of MAF~\citep{papamakarios2017masked}&$5$&$5$\\
Maf neurons&$[16]$&$[16]$\\
MAF learning rate&$0.01$&$0.01$\\
MAF optimizer&Adam $(\beta_1=0.9,\beta_2=0.999)$&Adam $(\beta_1=0.9,\beta_2=0.999)$\\
Epochs of training the belief&$200$&$200$\\
Batch size belief learning&$10000$&$1000$\\
Prediction buffer size&$100000$&$100000$\\
Belief representation dimension&$8$&$32$\\
\hline
\end{tabular}
\caption{Hyper-parameters for D-TRPO}
\end{table}

% SAC
\begin{table}[H]
\centering
\begin{tabular}{|l||l|}
\hline
Hyper-parameter & Pendulum \\ 
\hline
Discount $\gamma$&$0.99$\\
Initial replay size&$64$\\
Buffer size&$50000$\\
Batch size&$64$\\
Actor $\mu$ neurons&$256$\\
Actor $\sigma$ neurons&$256$\\
Actor optimizer&Adam $(\beta_1=0.9,\beta_2=0.999)$\\
Warmup transitions&$100$\\
Polyak update $\tau$&$0.005$\\
Entropy learning rate&$3e-4$\\
Train frequency&$50$\\
\hline
\end{tabular}
\caption{Hyper-parameters for M-SAC and A-SAC}
\end{table}

% L2-TRPO
\begin{table}[H]
\centering
\begin{tabular}{|l||l|l|}
\hline
Hyper-parameter & Pendulum & {Mujoco} \\ 
\hline
Epochs&$2000$&$1000$\\
Steps per epoch&$5000$&$5000$\\
Pre-training epochs&$2$&$2$\\
Pre-training steps&$10000$&$10000$\\
Backtracking line search iterations&$10$&$10$\\
Backtracking line search step&$0.8$&$0.8$\\
Conjugate gradient iterations&$10$&$10$\\
Discount $\gamma$&$0.99$&$0.99$\\
Max KL divergence&$0.001$&$0.001$\\
$\lambda$ for GAE&$0.97$&$0.97$\\
Value function neurons&$[64]$&$[64]$\\
Value function iterations&$3$&$3$\\
Value function learning rate&$0.01$&$0.01$\\
Policy neurons&$[64,64]$&$[64,64]$\\
Activations&ReLU&ReLU\\
Encoder feed-forward neurons&$[8]$&$[8]$\\
Encoder learning rate&$5e-3$&$5e-3$\\
Encoder iterations&$1$&$1$\\
Encoder dimension&$64$&$64$\\
Encoder heads&$2$&$2$\\
Encoder optimizer&Adam $(\beta_1=0.9,\beta_2=0.999)$&Adam $(\beta_1=0.9,\beta_2=0.999)$\\
Batch size belief learning&$10000$&$1000$\\
Prediction buffer size&$100000$&$100000$\\
Belief representation dimension&$8$&$32$\\
\hline
\end{tabular}
\caption{Hyper-parameters for L2-TRPO}
\end{table}

% M-TRPO
\begin{table}[H]
\centering
\begin{tabular}{|l||l|l|}
\hline
Hyper-parameter & Pendulum & {Mujoco} \\ 
\hline
Epochs&$2000$&$1000$\\
Steps per epoch&$5000$&$5000$\\
Pre-training epochs&$2$&$2$\\
Pre-training steps&$10000$&$10000$\\
Backtracking line search iterations&$10$&$10$\\
Backtracking line search step&$0.8$&$0.8$\\
Conjugate gradient iterations&$10$&$10$\\
Discount $\gamma$&$0.99$&$0.99$\\
Max KL divergence&$0.001$&$0.001$\\
$\lambda$ for GAE &$0.97$&$0.97$\\
Value function neurons&$[64]$&$[64]$\\
Value function iterations&$3$&$3$\\
Value function learning rate&$0.01$&$0.01$\\
Policy neurons&$[64,64]$&$[64,64]$\\
Activations&ReLU&ReLU\\
\hline
\end{tabular}
\caption{Hyper-parameters for M-TRPO and A-TRPO}
\end{table}

\begin{table}[H]
\centering
\begin{tabular}{|l||l|}
\hline
Hyper-parameter & Pendulum \\ 
\hline
Epochs&$2000$\\
Steps per epoch&$5000$\\
Discount $\gamma$&$0.99$\\
Eligibility trace $\lambda$&$0.9$\\
Learning rate&$0.1$\\
$\epsilon$-greedy parameter&$0.2$\\
$\Ss$ discretization size&$15$\\
$\As$ discretization size&$3$\\
\hline
\end{tabular}
\caption{Hyper-parameters for SARSA}
\end{table}

\begin{table}[H]
\centering
\begin{tabular}{|l||l|}
\hline
Hyper-parameter & Pendulum \\ 
\hline
Epochs&$2000$\\
Steps per epoch&$5000$\\
Discount $\gamma$&$0.99$\\
Eligibility trace $\lambda$&$0.9$\\
Learning rate&$0.1$\\
$\epsilon$-greedy parameter&$0.2$\\
$\Ss$ discretization size&$15$\\
$\As$ discretization size&$3$\\
\hline
\end{tabular}
\caption{Hyper-parameters for dSARSA}
\end{table}

\subsection{Further experiments}
\label{subsec:further_exp}

\paragraph{Stochastic Pendulum}~~In this experiment, we evaluate DIDA on a stochastic environment. We follow \citet{liotet2021learning} and add stochasticity to the pendulum environment. 
In order to do so, to the action selected by the agent, we add an i.i.d. noise of the form $\epsilon = \text{scale}( \eta+ \text{shift})$
where $\eta$ is some probability distribution. We construct 6 such noises reported in \Cref{tab:noises}. For readability of the plots, we group noises by similarity.
We build an additional noise, referred to as \emph{uniform} noise, which follows the action of the agent with probability 0.9 and otherwise samples an action uniformly at random inside the action space. We place this noise in group 2. The results are obtained with the hyper-parameters given in \Cref{app:hyper_param} for the pendulum environment.

\begin{table}[h]
\centering
\begin{tabular}{|l||l|l|l|l|}
\hline
Noise & Distribution  $\eta$ & Shift & Scale & Group \\ \hline
\textit{Beta (8,2)}& $\beta (8,2)$ & $0.5$ & $2$ &  1\\ 
\textit{Beta (2,2)} & $\beta (2,2)$ & $0.5$ & $2$ & 1\\ 
\textit{U-Shaped} & $\beta (0.5,0.5)$ & $0.5$ & $2$ & 1\\ 
\textit{Triangular} & $\text{Triangular} (-2,1,2)$ & $0$ & $1$ & 2\\ 
% \textit{Uniform} & $\text{U} (-1,1)$ & $0$ & $1$ & 2\\ 
\textit{Lognormal (1)} & $\text{Lognormal} (0,1)$ & $-1$ & $1$ & 3\\ 
\textit{Lognormal (0.1)} & $\text{Lognormal} (0,0.1)$ & $-1$ & $1$ & 3\\ 
\hline
\end{tabular}
\caption{Distributions for the noise added to the action in the stochastic Pendulum.}
\label{tab:noises}
\end{table}

\begin{figure*}[t]
\centering
    \begin{subfigure}{0.33\textwidth}
        \centering
        \includegraphics[width=\linewidth]{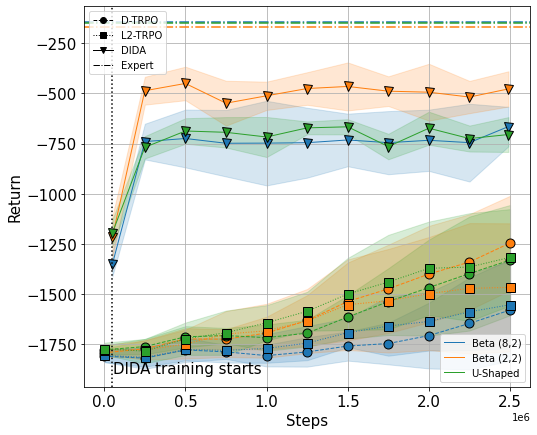}
        \caption{Noises of group 1.}
        \label{fig:group_1}
    \end{subfigure}%
\hfill
    \begin{subfigure}{0.33\textwidth}
        \centering
        \includegraphics[width=\linewidth]{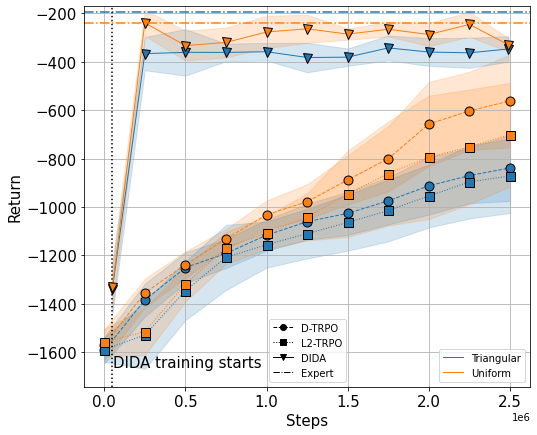}
        \caption{Noises of group 2.}
        \label{fig:group_2}
    \end{subfigure}
\hfill
    \begin{subfigure}{0.33\textwidth}
        \centering
        \includegraphics[width=\linewidth]{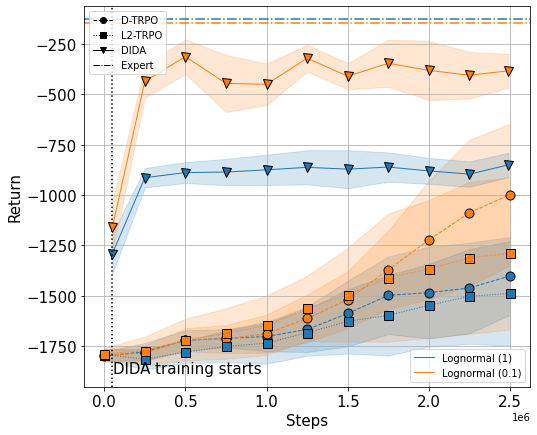}
        \caption{Noises of group 3.}
        \label{fig:group_3}
    \end{subfigure}
\caption{Mean return and one standard deviation (shaded) as a function of the number of steps sampled from the environment for stochastic Pendulum with different noises (10 seeds).}
\end{figure*}

\begin{itemize}
    \item \textbf{Group 1}: In all these cases, we are considering noises based on beta distributions, with different parameters. We can see in \Cref{fig:group_1} that our algorithm is able to achieve a much better performance compared to the ones of the baselines, even with a fraction of the training samples. For every algorithm, the most favourable case seems to be the second one, based on a beta $(2,2)$ noise. This may be because of the features of the other two noises. The first, beta $(8,2)$, is non-zero mean, so that the action is affected, on average, by a translation in one direction. The third one, based on a $\beta(0.5,0.5)$ distribution, is zero-mean, but is characterized by a higher variance than the second ($0.5$ vs $0.2$).
    \item \textbf{Group 2}:
    Here again, we see in \Cref{fig:group_2} that DIDA is able to get the best performance, even if the two baselines seem to learn much faster than for group 1. 
    This suggest that group 2 contains easier tasks, even though the triangular noise is not symmetric. Still, note that even if the probability of the random action in the first case is small, in the situation of delay it accumulates so that the probability of having a random action inside the action queue of $d=5$ is $1-0.9^5\approx 0.41$. Nonetheless, DIDA seems to deal very well with this situation.
    \item \textbf{Group 3}:
    Being strongly asymmetric and unbounded, theses noises pose more challenge to the algorithms. We report the results in \Cref{fig:group_3}.
    For the Lognormal (1) noise, no algorithm reaches a satisfactory performance.
    However, again, DIDA obtains the best performance for each noise.
\end{itemize}

\paragraph{Complement for trading}~~We illustrate here a specific problem to the trading task, caused by the batch-RL scenario. After some iterations, the policy trained by DIDA overfits the policy of the expert on the training set and its policy on the testing set starts to shift away from the one of the expert. We illustrate this by comparing the plots of the policies for the training set of years 2016-2017 (\Cref{fig:train_policies}) to the one of the testing set of year 2019 (\Cref{fig:test_policies}). In these plots, the x-axis represents the time of the day while the y-axis represents the day of the year. The color refers to the action of the agent. This representation clearly shows the daily pattern that the expert has found in the data.

\begin{figure*}[t]
    \centering
    \begin{subfigure}[b]{0.475\textwidth}
        \centering
        \includegraphics[width=\textwidth]{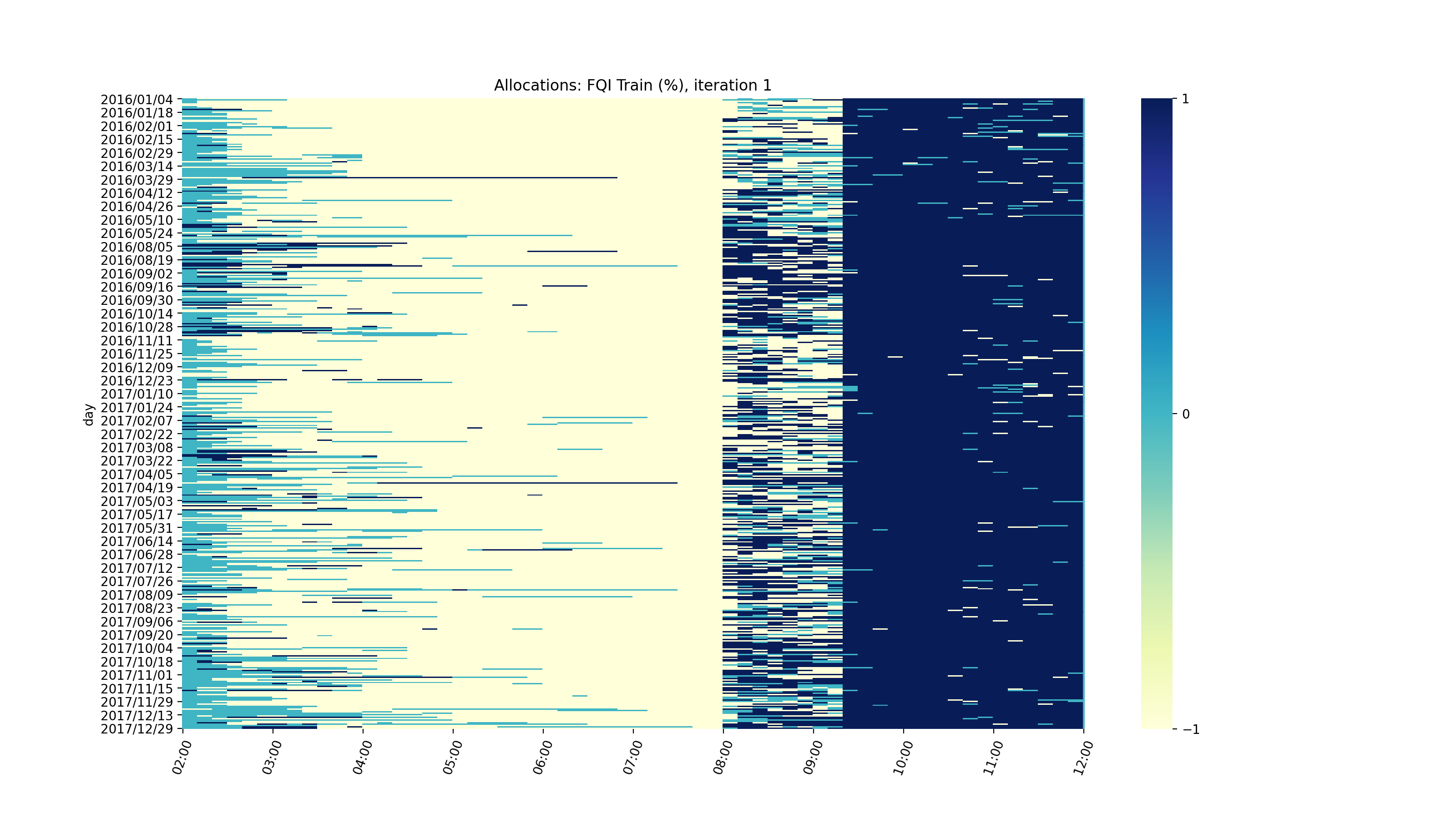}
        \caption{Expert}%
        \label{fig:expert_train}
    \end{subfigure}
    \hfill
    \begin{subfigure}[b]{0.475\textwidth}  
        \centering 
        \includegraphics[width=\textwidth]{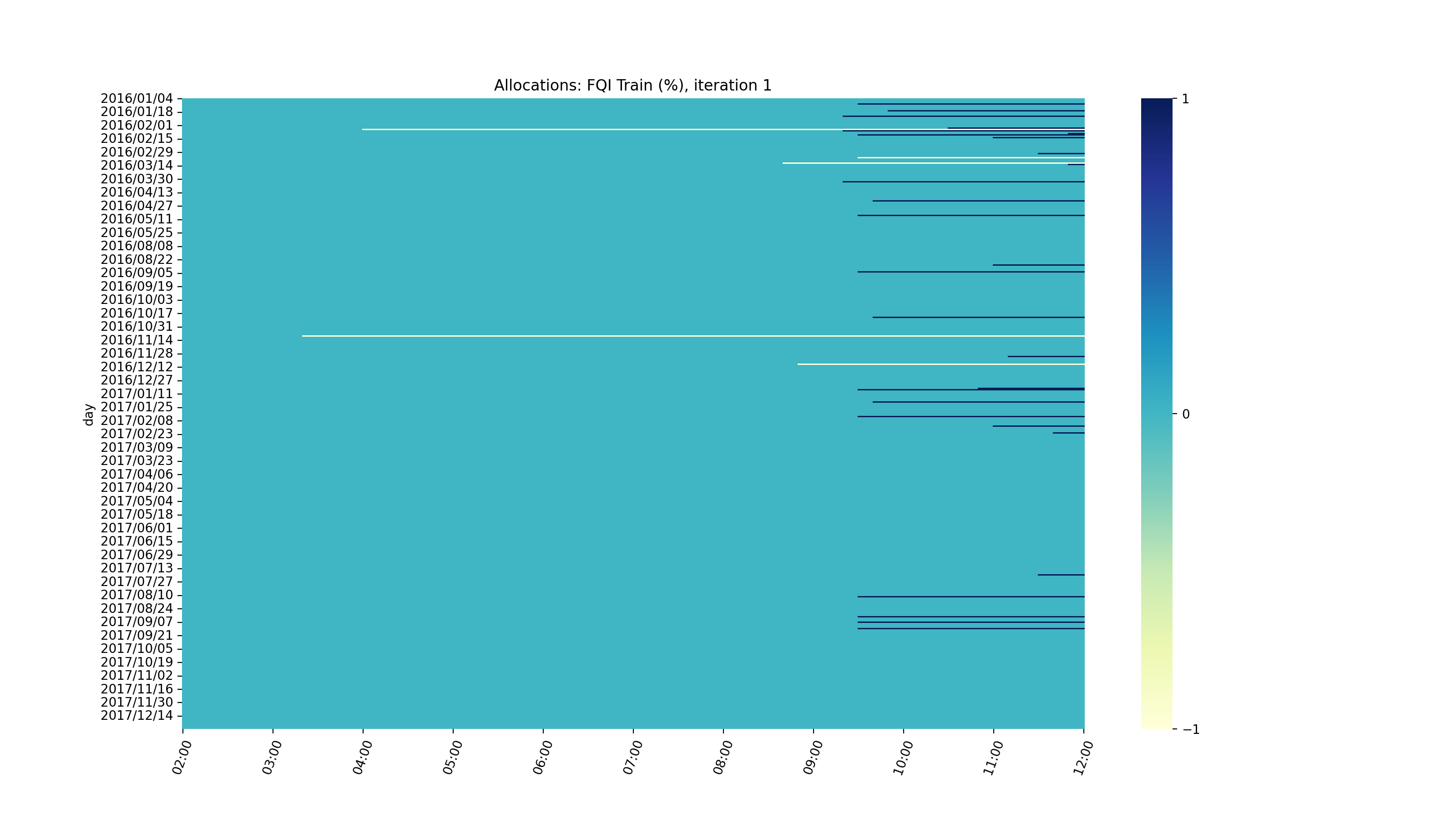}
        \caption{DIDA - Iteration 1}
        \label{fig:dida_train_1}
    \end{subfigure}
    \vskip\baselineskip
    \begin{subfigure}[b]{0.475\textwidth}   
        \centering 
        \includegraphics[width=\textwidth]{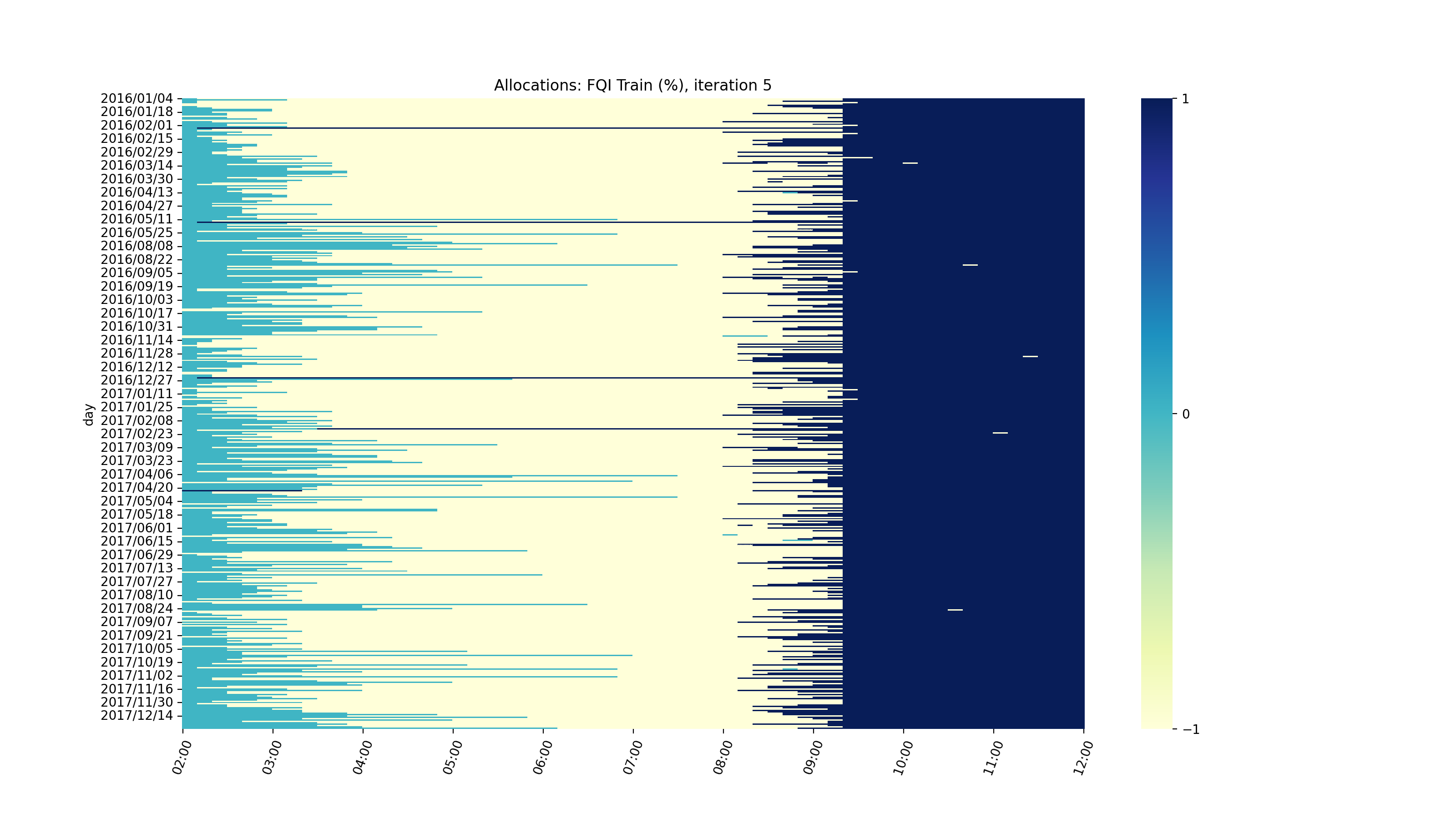}
        \caption{DIDA - Iteration 5}%
        \label{fig:dida_train_15}
    \end{subfigure}
    \hfill
    \begin{subfigure}[b]{0.475\textwidth}   
        \centering 
        \includegraphics[width=\textwidth]{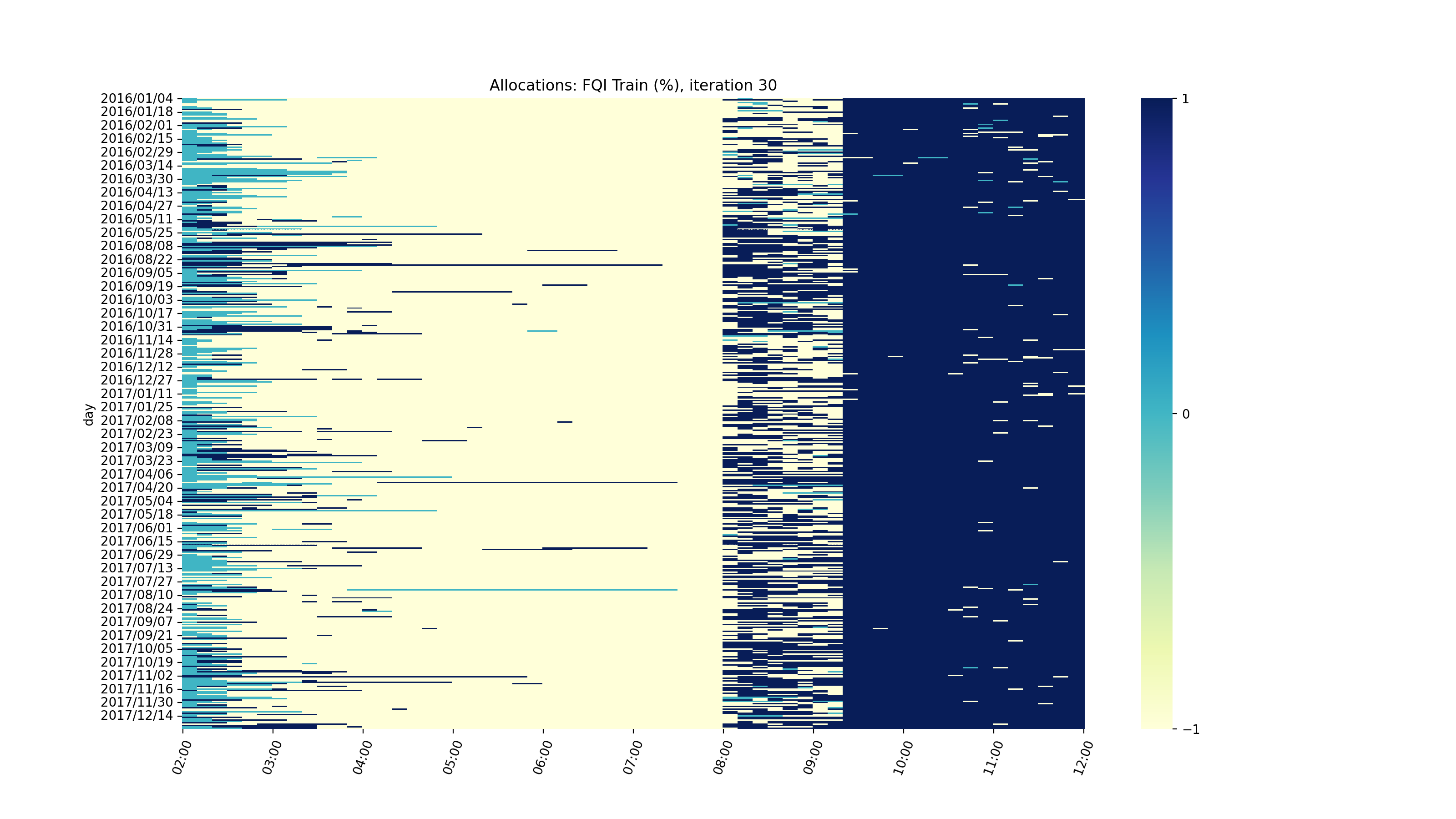}
        \caption{DIDA - Iteration 30}%
        \label{fig:dida_train_30}
    \end{subfigure}
\caption{Comparison of the expert's and DIDA's policies on the training set (2017-2016)}
\label{fig:train_policies}
\end{figure*}

\begin{figure*}[t]
    \centering
    \begin{subfigure}[b]{0.475\textwidth}
        \centering
        \includegraphics[width=\textwidth]{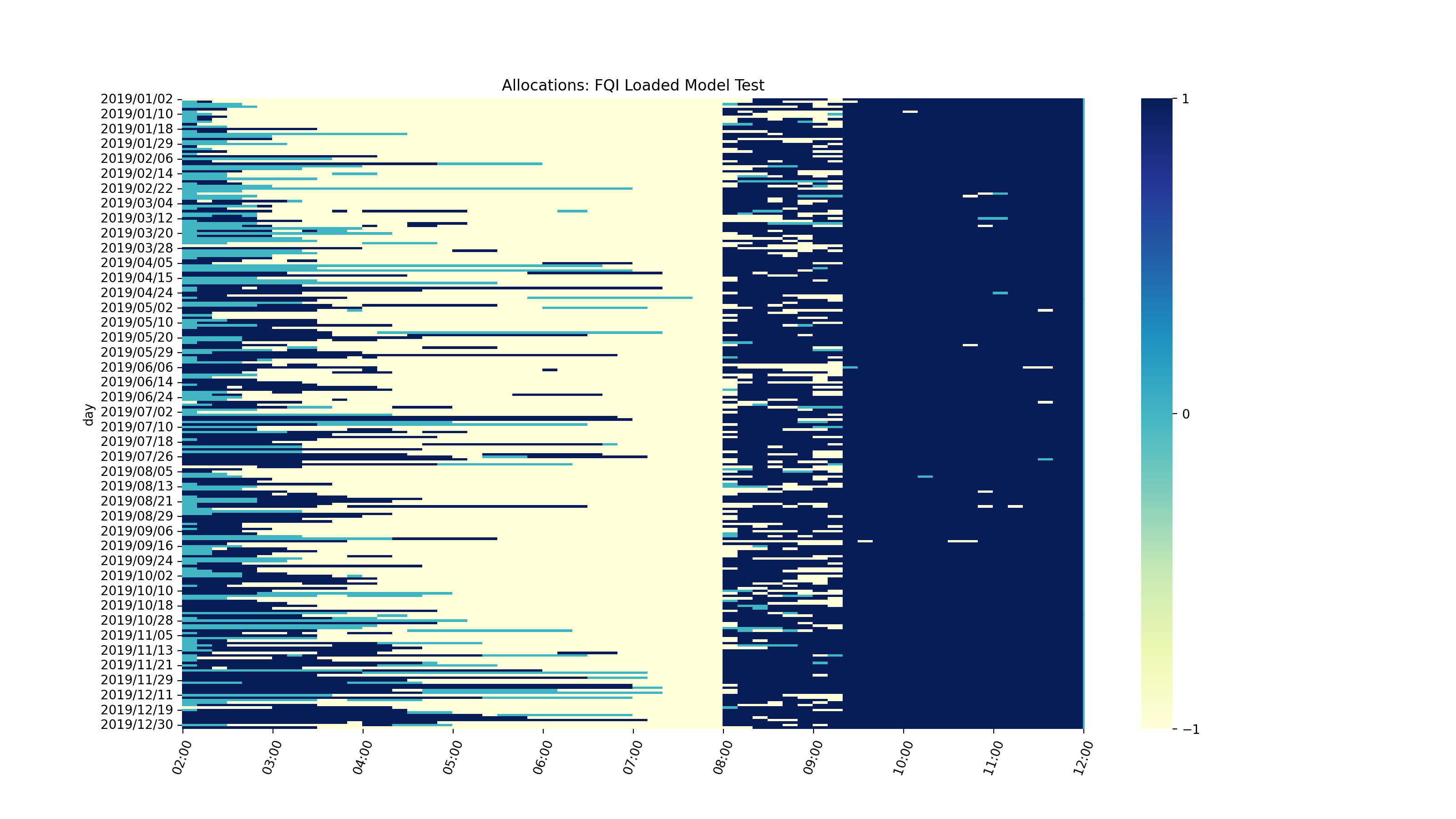}
        \caption{Expert}%
        \label{fig:expert_test}
    \end{subfigure}
    \hfill
    \begin{subfigure}[b]{0.475\textwidth}  
        \centering 
        \includegraphics[width=\textwidth]{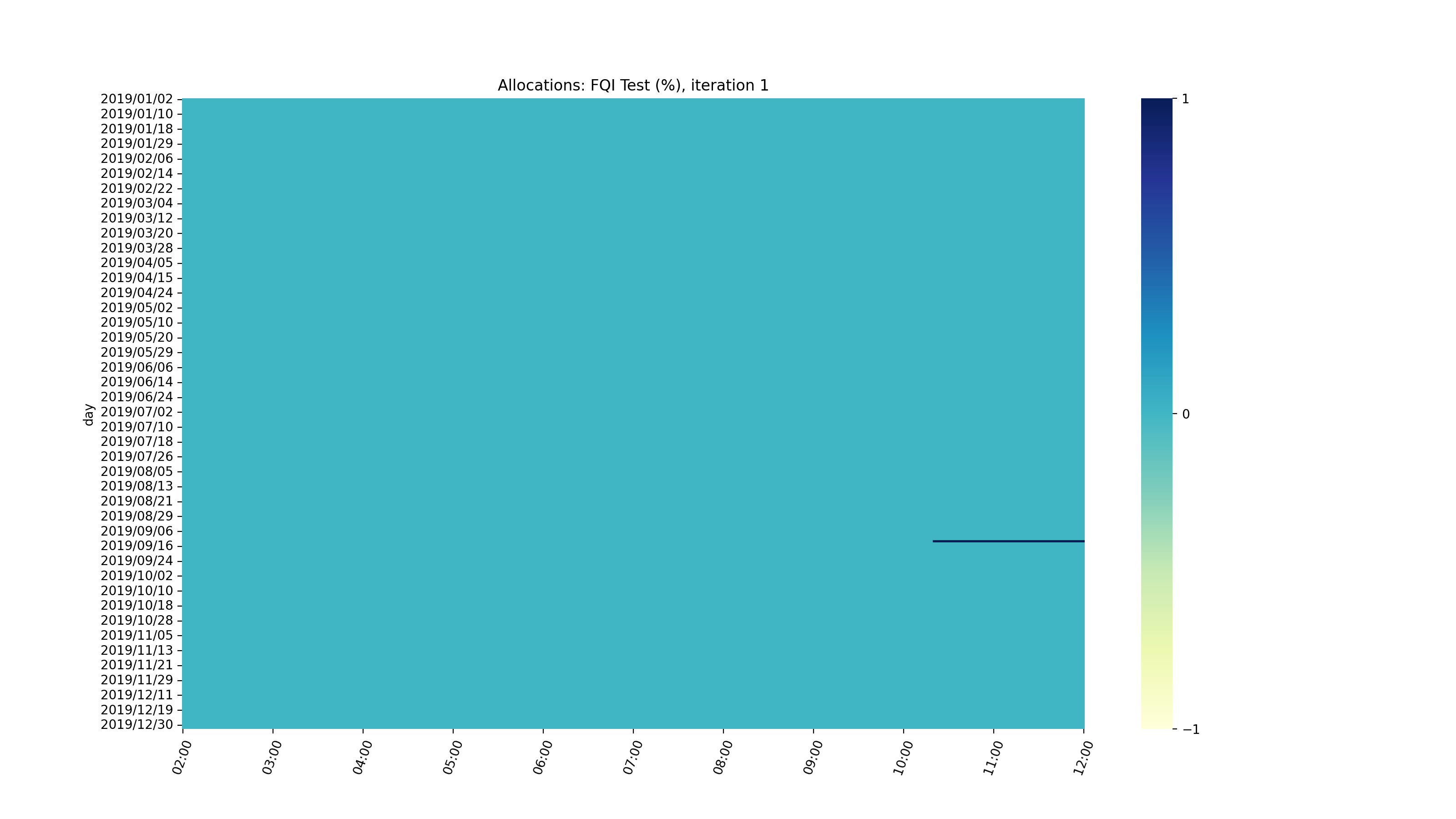}
        \caption{DIDA - Iteration 1}
        \label{fig:dida_test_1}
    \end{subfigure}
    \vskip\baselineskip
    \begin{subfigure}[b]{0.475\textwidth}   
        \centering 
        \includegraphics[width=\textwidth]{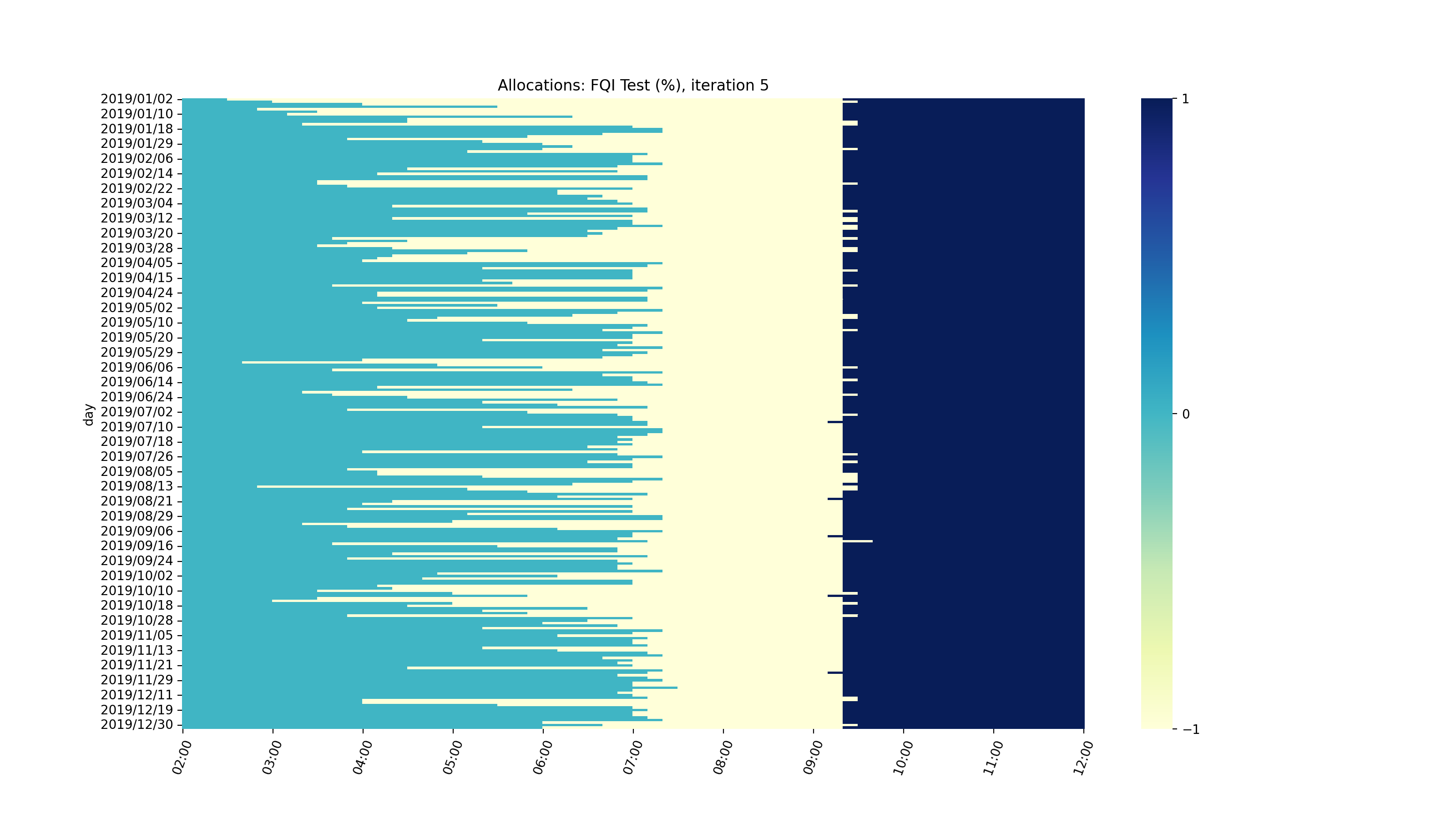}
        \caption{DIDA - Iteration 5}%
        \label{fig:dida_test_15}
    \end{subfigure}
    \hfill
    \begin{subfigure}[b]{0.475\textwidth}   
        \centering 
        \includegraphics[width=\textwidth]{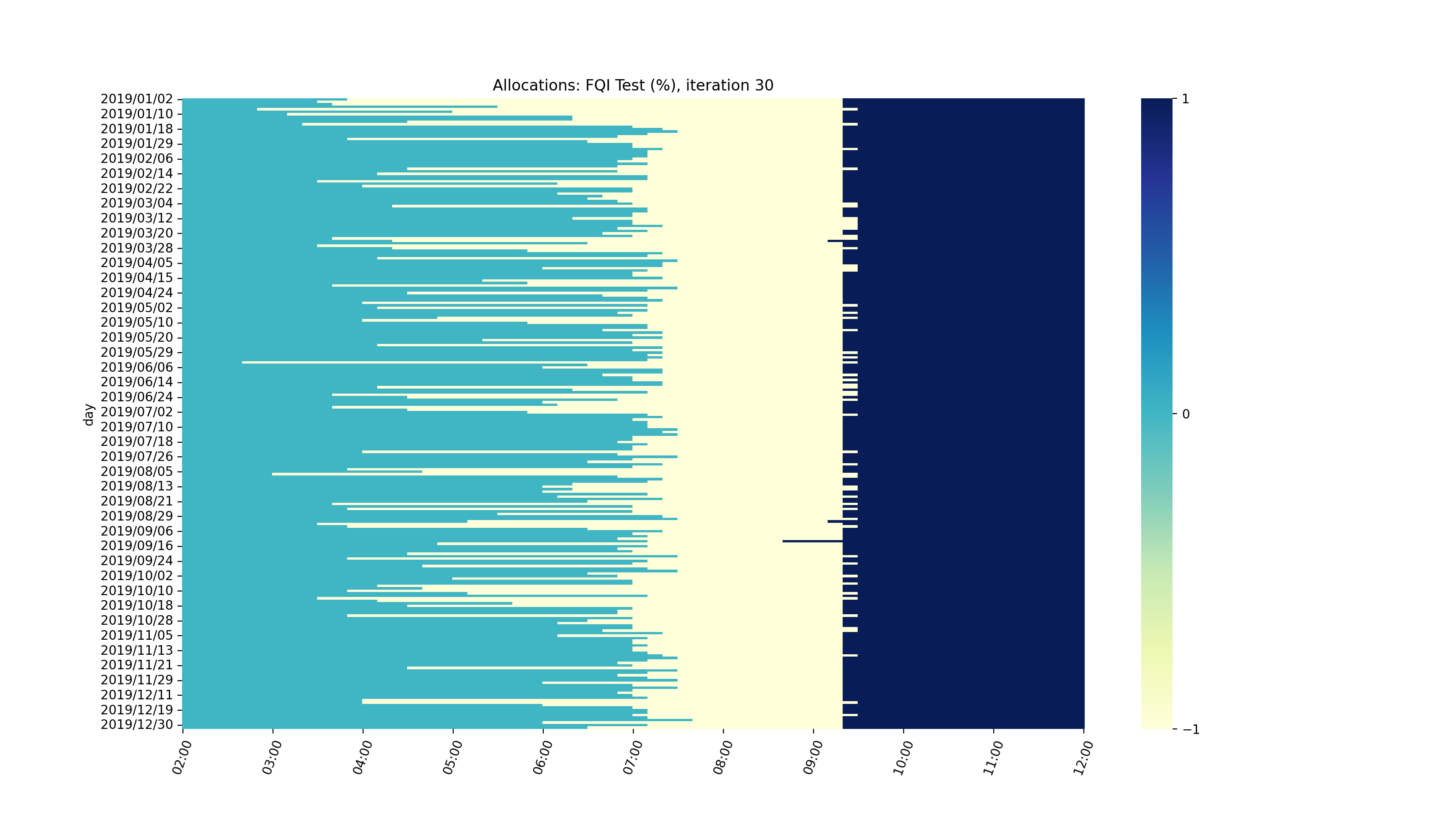}
        \caption{DIDA - Iteration 30}%
        \label{fig:dida_test_30}
    \end{subfigure}
\caption{Comparison of the expert's and DIDA's policies on the testing set (2019)}
\label{fig:test_policies}
\end{figure*}

\end{document}